\pgfplotsset{compat=1.16}
\newcommand{\acr}[1]{{{\textsc{#1}}}}
\title{Understanding Transformer Reasoning Capabilities via Graph Algorithms}
\author{%
  Clayton Sanford$^{1,2}$\thanks{Correspondence: \texttt{clayton.h.sanford@gmail.com} or \texttt{baharef@google.com}.}, Bahare Fatemi$^{1}$, Ethan Hall$^{3}$, Anton Tsitsulin$^{1}$, \\ \textbf{Mehran Kazemi$^{4}$, Jonathan Halcrow$^{1}$, Bryan Perozzi$^{1}$, and Vahab Mirrokni$^{1}$}\\
  $^{1}$Google Research, $^{2}$Columbia University,$^{3}$Google, $^{4}$Google DeepMind
}
\begin{document}

\maketitle

\begin{abstract}

Which transformer scaling regimes are able to perfectly solve different classes of algorithmic problems? 
While tremendous empirical advances have been attained by transformer-based neural networks, a theoretical understanding of their algorithmic reasoning capabilities 
in realistic parameter regimes is lacking.
We 
investigate this question in terms of the network's depth, width, and number of extra tokens for algorithm execution.
Our novel representational hierarchy separates 9 algorithmic reasoning problems into classes solvable by transformers in different realistic parameter scaling regimes.
We prove that logarithmic depth is necessary and sufficient for tasks like graph connectivity, while single-layer transformers with small embedding dimensions can solve contextual retrieval tasks.
We also support our theoretical analysis with ample empirical evidence using the GraphQA benchmark.
These results show that transformers excel at many graph reasoning tasks, even outperforming specialized graph neural networks.
\end{abstract}

\section{Introduction}

The transformer neural network architecture, which was initially introduced for neural machine translation~\cite{bahdanau2014neural,vsp17}, quickly became the standard neural network architecture across many fields, powering recent breakthroughs in language modeling~\cite{devlin2018bert,raffel2020exploring,fewshot,touvron2023llama,achiam2023gpt,team2023gemini,reid2024gemini}, computer vision~\cite{dosovitskiy2021image,liu2021swin}, and natural sciences~\cite{jumper2021highly,merchant2023scaling}.
Across fields, transformers have superseded other architectures, surpassing them in downstream performance while maintaining reasonable computational footprint.

How can we analyze reasoning capabilities of neural networks?
One approach is to study algorithms executable with their internal representations.
Neural algorithmic reasoning~\cite{graves2014neural,zaremba2014learning,joulin2015inferring,kaiser2015neural,velivckovic2021neural} is a field of research dedicated to exploring such capabilities.
Algorithmic execution is desirable because models use it to generalize out-of-distribution~\cite{zhou2022teaching,lee2024teaching} and scale to larger problem sizes~\cite{zhou2023algorithms}.

In this work, we focus on classes of transformers solving algorithmic reasoning problems on graphs.
Why graph problems in particular?
Recent research by \citet{Besta_2024} suggests that graphs are an ideal abstraction of complex reasoning with dependencies, including {chain-of-thought}~\cite{wei2022chain} and its generalizations~\cite{yao2024tree,creswell2023selectioninference,kazemi-etal-2023-lambada,kassner-etal-2023-language}.
Furthermore, we investigate graph reasoning in order to evaluate transformer capabilities compared to specialized models that explicitly capture the structure of data.
Graph neural networks (GNNs)~\cite{scarselli2008graph,kipf2016semi,gilmer2017neural,battaglia2018relational,chami2022machine} offer strong baselines for algorithmic reasoning~\cite{velivckovic2022clrs,cappart2023combinatorial} and comprehensive theoretical frameworks for their capabilities and limitations~\cite{loukas2020graph,xhlj18}.
The complexity of graph reasoning tasks varies greatly; some are easily solved by non-graph architectures and others require sophistication beyond standard GNNs~\cite{zhang2023rethinking}.
This makes graph tasks a compelling testbed for both theoretical and empirical evaluations.

While transformer-based models are commonly optimized for downstream performance~\cite{kaplan2020scaling,hoffmann2022training,psdsgl2024}, theoretical investigations of transformer capabilities in realistic parameter regimes have been limited.
We analyze the capabilities of standard transformer models to solve graph reasoning tasks by employing a graph tokenization scheme similar to~\cite{knmcllh22}; see \Cref{fig:transformer_architecture} for a graph encoding example.
We generalize the insights of \cite{sht24}---which established that graph connectivity can be computed in a more depth-efficient manner with transformers than GNNs---to broader families of graph reasoning tasks.
We also study the representational impacts of \emph{blank tokens}, which can be thought of as a theoretically-tractable version of either chain-of-thought prompting~\cite{wei2022chain}, scratch space~\cite{nye2021show}, or pause/filler tokens~\cite{goyal2023think,pfau2024let}.

We conduct an {extensive study} of graph reasoning problems with transformers from both theoretical and empirical perspectives. We summarize our principal contributions as follows:
\begin{enumerate}[noitemsep,left=0pt]
\item We introduce a novel \textit{representational hierarchy} of graph reasoning tasks that formalizes reasoning capabilities of transformers in several realistic parameter scaling regimes.
This includes two graph reasoning task classes---which we refer to as the \textit{parallelizable} and \textit{search tasks} and include well-known algorithmic tasks like \textit{connectivity} and \textit{shortest path} respectively. 
\item We prove that logarithmic-depth transformers are necessary and sufficient to solve parallelizable tasks in a highly parameter-efficient manner; similar constructions for search tasks employ much larger networks.
We distinguish both of these classes from an easier family of problems---\textit{retrieval tasks}, such as \textit{node count} and \textit{edge existence}---by showing that retrieval tasks can be efficiently computed by single-layer transformers, while parallelizable and search tasks cannot. 
\item We empirically validate our representational contrast by showing that transformers outperform GNNs on tasks that require the analysis of long-range dependencies, including parallelizable tasks like connectivity. 
Our experiments suggest that GNNs perform better on simpler tasks that require only local analysis of neighboring nodes in low sample-complexity regimes, benefiting from their well-aligned inductive bias.
\end{enumerate}

\section{Related work}

\paragraph*{Fundamental capabilities of transformers.}



\begin{figure}[t!]
    \centering
    \resizebox{\textwidth}{!}{\begin{tikzpicture}%
	\node[circle, draw=cycle1, thick, fill=cycle6!10!white, inner sep=0.1em] (v1) {$v_1$};
	\node[circle, draw=cycle1, thick, fill=cycle6!10!white, inner sep=0.1em] (v2) at ([shift=({100:3 em})]v1) {$\bm{v_2}$};
	\node[circle, draw=cycle1, thick, fill=cycle6!10!white, inner sep=0.1em] (v3) at ([shift=({-30:3 em})]v2) {$v_3$};
	\node[circle, draw=cycle1, thick,
	 fill=cycle6!10!white, inner sep=0.1em] (v4) at ([shift=({80:3 em})]v3) {$\bm{v_4}$};
	
	\node[circle, draw=cycle5, thick,
	 fill=cycle5!10!white, inner sep=0.1em] (v5) at ([shift=({-60:4 em})]v4) {$v_5$};
	 
	\node[circle, draw=cycle9, thick,
	 fill=cycle9!10!white, inner sep=0.1em] (v6) at ([shift=({-30:2 em})]v1) {$v_6$};
	\node[circle, draw=cycle9, thick,
	 fill=cycle9!10!white, inner sep=0.1em] (v7) at ([shift=({-60:3 em})]v6) {$v_7$};
	\node[circle, draw=cycle9, thick,
	 fill=cycle9!10!white, inner sep=0.1em] (v8) at ([shift=({60:3 em})]v7) {$v_8$};
	 
	 \draw[-, black] (v1) -- (v2);
	 \draw[-, black] (v2) -- (v3);
	 \draw[-, black] (v3) -- (v4);
    
	 \draw[-, black] (v6) -- (v8);
	 \draw[-, black] (v6) -- (v7);
	 \draw[-, black] (v7) -- (v8);
	 
    \node (graph) at ([shift=({90:2 em})]v4) {\textbf{Graph} $\mathrm{G}$};
    \node (task) at ([shift=({90:-10 em})]v4) {\textbf{Task}: Are $\bm{v_2}$ and $\bm{v_4}$ connected?};
    

    \foreach \i in {1,...,8}
    {
    \node[draw, rectangle, minimum width=2cm, minimum height=0.5cm,fill=cycle2!10!white] (rect\i) at (6, 4-0.5*\i) {$v_{\i}$}; 
    }
    \foreach \i/\text in {9/{$v_1,v_2$},10/{$v_2,v_3$},11/{$v_3,v_4$},12/{$v_6,v_7$},13/{$v_6,v_8$},14/{$v_7,v_8$}}
    {
    \node[draw, rectangle, minimum width=2cm, minimum height=0.5cm,fill=cycle3!10!white] (rect\i) at (6, 4-0.5*\i) {\text}; 
    }
    \node[draw, rectangle, minimum width=2cm, minimum height=0.5cm,fill=cycle4!15!white] (rect15) at (6, -3.5) {$\bm{v_2,v_4}$}; 
    
    \draw[-Stealth, thick, black] (2.65, 0) -- (4.85,0);
    \node[] at ($(rect1.north) + (0, 0.2)$) {\textbf{Input} $\mathrm{X}$};
    \node[rotate=90] at ($(rect1.west)!0.5!(rect8.west) + (-0.2, 0)$) {\small\textbf{Vertex tokens}};
    \node[rotate=90] at ($(rect9.west)!0.5!(rect14.west) + (-0.2, 0)$) {\small\textbf{Edge tokens}};
    \node (tt) at ($(rect15.south west) + (-1, 0.5)$) {\small\textbf{Task tokens}};
    \draw[-Stealth, black] (tt.south) to[out=-90,in=180] (rect15.west);
    
    \foreach \i/\text in {1/{Embedding},2/{Self-Attention},3/{MLP},4/{Self-Attention},5/{MLP},6/{Self-Attention}}
    {
    \node[draw, rectangle, minimum width=3.5cm, minimum height=0.5cm,fill=cycle3!15!white] (tr\i) at (10, 4-\i) {\text};
    }
    \foreach \i in {1,...,5}
    {
    \draw[-Stealth, thick, black] (10, 3.75-\i) -- (10,3.25-\i);
    }
    
    \draw[-Stealth, thick, black] (7.15, 0) -- (7.65,0);
    \node at ($(tr6.south) + (0, -0.35)$) {\dots};
    \node at ($(tr1.north) + (0, 0.55)$) {\textbf{Transformer} $f$};
    \begin{pgfonlayer}{background}
    \draw[fill=cycle3!5!white] ($(tr1.north west) + (-0.5, 0.35)$) rectangle ($(tr6.south east) + (0.5, -0.75)$);
    \end{pgfonlayer}
    
    \foreach \i in {1,...,14}
    {
    \node[draw, rectangle, minimum width=0.8cm, minimum height=0.5cm,fill=cyclegray!10!white] (task\i) at (13.5, 4-0.5*\i) {$\varnothing$}; 
    }
    \draw[-Stealth, thick, black] (12.35, 0) -- (13,0);
    \node[draw, rectangle, minimum width=0.8cm, minimum height=0.5cm,fill=cycle4!20!white] (rect15) at (13.5, -3.5) {Yes}; 
    \node[] at ($(task1.north) + (0, 0.2)$) {\textbf{Target} $f(\mathrm{X})$};
    
\end{tikzpicture}}
    \caption{The graph encoding scheme employed in our theoretical and empirical analysis that presents a graph reasoning task (e.g. connectivity) as a tokenized input to a standard transformer model.}
    \label{fig:transformer_architecture}
\end{figure}

Early representational research~\cite{ybrrk19,pbm21,wcm21} established the universality of transformers by simulating Turing machines step-by-step, albeit in an unrealistic polynomial-depth scaling regime.
These universality results were extended to bounded-depth transformers with chain-of-thought tokens \cite{ms23-cot, malach23}, but with a persistent gap between positive and negative results.
A more fine-grained theoretical approach established the limitations of bounded-size transformers by relating them to threshold circuits, implying that $\L$-complete tasks like graph connectivity are unsolvable by constant-depth transformers~\cite{ms23, ms22, mss21}.
However, these circuit complexity reductions are not bidirectional and their positive results pertaining to classes of regular languages~\cite{haf22} reveal little about when tasks like connectivity \textit{can} be expressed.

The closest analytical framework to ours characterizes transformers as distributed computing protocols by quantifying the informational bandwidth of their self-attention units.
This line of work sharply separates transformers from other architectures and provides width and depth separations~\cite{sht23,sht24}.
The ability of transformers to simulate finite-state automata in logarithmic depth provides a further roadmap for how transformers efficiently leverage parallel computation~\cite{lagkz22}.


\paragraph*{Empirical analysis of transformer reasoning capabilities.}

Multiple empirical investigations explore the algorithmic capabilities of transformer-based models~\cite{zbbegw22,lm22,lee2024teaching}.
Graph reasoning problems have been used to evaluate capabilities of transformer-based LLMs~\cite{fhp23,wang2023can,yzwxz23,smk23,jin2023large} including works with graph-specific GNN adapters~\cite{czwhhhy23,pfztkah24}.
In the empirical part of our study, we use the GraphQA dataset~\cite{fhp23}---which was evaluated initially on LLM prompting appraoches and later on GNN-augmented prompts~\cite{pbm21}---to validate our complexity hierarchy and compare transformers with GNNs and prompt-based LLMs on a selection of graph reasoning tasks.

\paragraph*{Transformers and graph neural networks.}

GNNs provide a useful benchmark for model expressivity on graph inputs.
For instance, the inability of GNNs to efficiently solve ``global structure'' tasks like subgraph connectivity made evident by a connection to the \congest{} distributed computing model~\cite{loukas2020graph}.
A further connection~\cite{morris2023weisfeiler} to the Weisfeiler-Lehman (WL) isomorphism heuristic~\cite{weisfeiler1968reduction} captures the inability of feature-less GNNs to distinguish certain graph instances~\cite{xhlj18}. 
See \Cref{asec:gnn_limitations} for a further discussion of the representational limitations of GNNs.

Graph transformers~\cite{dwivedi2020generalization,rong2020self,mcb+22} integrate the transformer architecture with graph-structured data.
While GNNs can simulate the WL test~\cite{acinrssw22}, transformers can simulate GNNs and exceed their WL limitations~\cite{zhang2023rethinking}; they are strictly more expressive than 2-WL~\cite{knmcllh22}.
Despite our focus on standard transformers, our empirical results in \Cref{sec:transformers-gnns} include comparisons to a wide range of GNNs.

\section{Hardness taxonomy of transformer graph reasoning tasks}\label{sec:hierarchy}

We provide our core result: a rigorous quantification of the hardness of graph reasoning tasks for transformer-based models.
While graph reasoning tasks, like other algorithmic problems, can be categorized into well-known computational and circuit complexity classes (e.g. $\textsf{TC}^0$, $\L$, $\NL$, $\textsf{NP}$, $\textsf{NP}$), the relationship between membership in these classes and the hardness of solving a task with a parameter-efficient neural network is not immediately obvious.
Our hierarchy bridges the gap between these worst-case computational classes and the representational capabilities of bounded-size transformers of different parameter scaling regimes. 
These regimes include transformers whose depth $L$ scales with the input sequence length $N$; this contrasts with most theoretical results, which study on the constant-depth regime.

Our positive and negative theoretical results employ the transformer model of \cite{sht24}, which is presented in \Cref{assec:prelim-trans} and assumes that the embedding dimension $m$ grows less rapidly than than the sequence length $N$ and that the multi-layer perceptrons (MLPs) are arbitrary functions.
The result is a model of a transformer as a \emph{bounded-capacity communication protocol}.
In this model, arbitrary functions of each individual embedding vector can be computed, but the interactions between these vectors are restricted by the low-rank of the attention matrix.
The relevance of this model is motivated by the rapid scaling in the context lengths of modern transformers in recent years and the high ratio of MLP parameter count to the embedding dimension each operates on. 

This model also permits the inclusion of blank \emph{pause token} inputs of~\cite{goyal2023think}, which provide additional computational power to the transformer model by extending the computational ``tape'' without introducing new information about the input.

These results divide the tasks that are we empirically investigate in \Cref{sec:transformers-gnns} into three difficulty families based on the hardness of solving the task with parallel computation.
\begin{enumerate}[noitemsep,left=0pt]
    \item \textbf{Retrieval tasks}---including node count, edge count, edge existence, and node degree---are tasks that can intuitively be solved by a single lookup step or global aggregation. 
    These are the easiest tasks in our framework, and we show in \Cref{ssec:easy-tasks} that these retrieval tasks can be computed by a single-layer transformer with small embedding dimension. (In contrast, all other examined tasks cannot be solved in that regime.) 
    \item \textbf{Parallelizable tasks}---including connectivity, connected nodes, and cycle check from the experimental results and a host of other graph reasoning tasks, such as bipartiteness, planarity, and minimum spanning forest---are non-trivial tasks that can be solved efficiently in a parallel computation setting. \Cref{ssec:med-tasks} establishes that these tasks can be solved by bounded-size transformers with logarithmic depth.
    \item \textbf{Search tasks}---including shortest path and other tasks like diameter and directed reachability---comprise a harder family of tasks that are less easily solved by parallel algorithms.
    In \Cref{ssec:hard-tasks}, we prove that these tasks belong in an equivalence class and exhibit large-model scaling regimes where they can be computed.
\end{enumerate}

The representational hardness of these classes is quantified by several results that determine whether transformers that obey different parameter scaling rules can compute them.
We define the following scaling regimes for the depth $L$, embedding dimension $m$, and number of ``pause'' tokens $N'$ of a family of transformers as a function of the size of the input graph, $N = O(|V| + |E|)$.
\begin{itemize}[left=0pt]
    \item 
    \depthone{} (\sfmath{D1}): Single-layer multi-headed transformers with small embedding dimension $m = O(\log N)$ without any pause tokens.
    \item 
    \logdepth{} (\sfmath{LD}): Transformers with depth $L = O(\log N)$, embedding dimension $m = O(N^{\epsilon})$ for any fixed $\epsilon > 0$, and no pause tokens.
    \item 
    \logdepthpause{} (\sfmath{LDP}): Transformers with the same depth and width constraints as \logdepth{}, with at most $N' = \poly(N)$ blank ``pause'' tokens appended to the input sequence.
    \item 
    \logdepthwide{} (\sfmath{LDW}): Transformers with depth $L = O(\log N)$, embedding dimension $m = O(N^{1/2 + \epsilon})$, and no pause tokens.
\end{itemize}
Our positive and negative results that relate scaling regimes and graph reasoning tasks are displayed in \Cref{tab:theory}.
We present high-level result summaries in the following sections with proofs in Appendix.

\begin{figure}[t!]
    \footnotesize
    \centering
    \begin{subfigure}[b]{0.4\textwidth}
        \centering
       \begin{tikzpicture}
\draw[rounded corners, fill=cycle6!45, fill opacity=0.5] (-1.5,0) rectangle (2.5, 4);
\node at (-0.75,3) {\Large\sfmath{LDP}};

\draw[rounded corners, fill=cycle3!45, fill opacity=0.5] (0,0) rectangle (4, 4);
\node at (3.25,3) {\Large\sfmath{LDW}};
\node at (3.25,2.5) {Search};

\draw[rounded corners] (-1.5,0) rectangle (2.5, 4);

\node at (1.25,3.65) {Parallelizable};

\draw[rounded corners, fill=cycle4!40] (0.05,0.05) rectangle (2.45,3.2);
\node at (1.25,2.75) {\Large\sfmath{LD}};

\draw[rounded corners, fill=cycle1!30] (0.1,0.1) rectangle (2.4,2.25);
\node at (1.25,1.75) {\Large\sfmath{D1}};
\node at (1.25,1.25) {Retrieval};

\end{tikzpicture}
        \caption{The complexity hierarchy}
        \label{sfig:hierarchy}
    \end{subfigure}
    \qquad
    \begin{subfigure}[b]{0.49\textwidth}
        \centering
        \begin{tabular}{llc}
\toprule
     \textbf{Task class} & \textbf{Example tasks}& \textbf{Complexity}\\ 
     \midrule
     Retrieval (\S\ref{ssec:easy-tasks})  & Node count & \DOne\\
     \gray{$L=1$}& Edge count & \DOne\\
     \gray{$m=O(\log N)$} & Edge existence & \DOne\\
     & Node degree & \DOne\\
     \midrule 
     Parallelizable (\S\ref{ssec:med-tasks}) & Connectivity & \LD\\
     \gray{$L=O(\log N)$} & Cycle check & \LDP $\cap$ \LDW \\
     \gray{$m=O(N^\epsilon)$} & Bipartiteness & \LDP $\cap$ \LDW \\
     \midrule 
     Search (\S\ref{ssec:hard-tasks}) &  Shortest path & \LDW \\
     \gray{$L=O(\log N)$} & Diameter & \LDW\\
     \gray{$m=O(N^{1/2 +\epsilon})$}&& \\
     \bottomrule
\end{tabular}

        \vspace{0.5em}
        \caption{Example tasks and their complexity.}
        \label{stab:theory}    
    \end{subfigure}
    \caption{A summary of the theoretical hierarchy of \Cref{sec:hierarchy} that visualizes which type of graph reasoning tasks can be solved in which transformer scaling regime (\depthone{} (\sfmath{D1}), \logdepth{} (\sfmath{LD}), \logdepthwide{} (\sfmath{LDW}) and \logdepthpause{} (\sfmath{LDP})).}
    \label{tab:theory}
\end{figure}
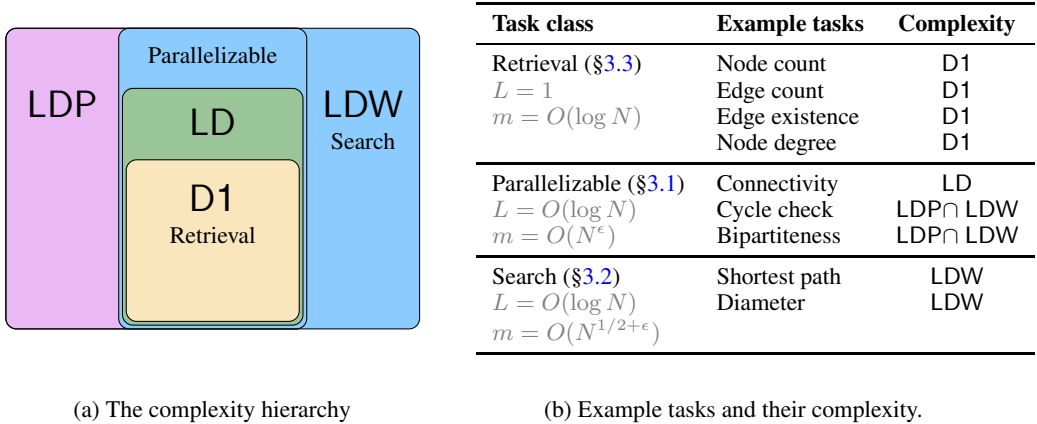

The most technically significant results concern \logdepth{} models and are provided in \Cref{asec:transformer-mpc}.
These bounds are consequences of \Cref{thm:tight-mpc-informal}, an improved analysis of the relationship between transformers and the massively parallel computation (MPC) distributed computing model of~\cite{ksv10}\footnote{
MPC is a theoretical model of the MapReduce computational paradigm that distributed computation among a large number of machines with restricted local memory size and communication bandwidth. 
We formally introduce the MPC model in \Cref{assec:mpc-prelim}.
}.
This connection between MPC and transformers is a sharp improvement of a similar result by \cite{sht24}.
\begin{theorem}[Simplified version of \Cref{thm:tight-mpc}]\label{thm:tight-mpc-informal}
For constant $\delta, \epsilon >0$, any $R$-round MPC protocol with $N$ machines with $O(N^\delta)$ bits of local memory each can be simulated by a transformer of depth $L = O(R)$ and embedding dimension $m = O(N^{\delta + \epsilon})$.
\end{theorem}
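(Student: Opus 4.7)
The approach is to build a transformer that simulates the MPC protocol by assigning each of the $N$ MPC machines to a single input token. The $O(N^\delta)$ bits of local memory held by each machine are encoded directly into its token embedding, which has dimension $m = O(N^{\delta+\epsilon})$ and therefore accommodates the local state plus a small polynomial amount of scratch space for routing tags, machine identifiers, and softmax temperature scaling. Each of the $R$ MPC rounds is simulated by a constant number of transformer layers, yielding total depth $L = O(R)$.

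A single round is simulated by three sublayers. First, a token-wise MLP computes that machine's set of outgoing messages along with their destination identifiers from its local state alone; this step is free in the communication model of \Cref{assec:prelim-trans}, where MLPs are allowed to be arbitrary functions of a single embedding. Second, a self-attention block routes these messages from senders to receivers. Third, another MLP folds the received messages into the machine's updated local state, completing the round.

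The technically demanding step is the routing block, which must deliver up to $O(N^\delta)$ arbitrarily addressed messages per token within $O(1)$ attention layers. I would implement it as content-addressable retrieval: each outgoing message is placed in a distinguished orthogonal slot of the value vector tagged by its (destination, slot-index) pair, and queries at the receiver are designed so that a sharp softmax selects exactly the slots addressed to it. Because $m$ is large enough to hold $\Theta(N^\delta)$ nearly-orthogonal slot encodings with $N^\epsilon$ slack for indexing and temperature sharpening, a constant number of attention heads suffices to aggregate all incoming messages for each receiver with exponentially small error. The routing construction is a refinement of the scheme of \cite{sht24}, reusing the same communication-complexity framework but with tighter message packing into the shared embedding subspace.

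The main obstacle will be showing that the dimension overhead above the MPC local memory is only $N^\epsilon$ rather than the larger polynomial in the prior bound. This improvement hinges on exploiting the MPC load-balance guarantee that each machine sends and receives at most $N^\delta$ messages per round, which permits organizing the entire routing pattern into $O(1)$ structurally clean groups, each handled by a single attention head operating in a subspace of dimension $O(N^\delta)$ rather than allocating one head per message. Carefully bounding the softmax approximation error and composing it over $R$ rounds via a union bound will complete the argument, with the per-round error made small enough (by pushing the temperature into the $N^\epsilon$ budget) that the accumulated error remains negligible even when $R$ scales with $N$.
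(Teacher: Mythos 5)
Your overall skeleton (one token per machine, local memory packed into the embedding, MLPs for local computation, attention for routing, constant layers per round) matches the framework of \cite{sht24}, which the paper builds on. But the heart of the statement—achieving $m = O(N^{\delta + \epsilon})$ rather than a much larger polynomial—is exactly where your proposal has a gap, and your plan as written would not close it.

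The difficulty you wave away is the one that stopped \cite{sht24} at $\delta < 1/4$: a single softmax attention layer computes, for each receiver, a weighted \emph{sum} of entire value vectors, so if a sender packs $\Theta(N^\delta)$ outgoing messages addressed to $\Theta(N^\delta)$ distinct receivers into one value vector, the receiver cannot cleanly extract just its own slot from the attended mixture. The known routing lemma (Lemma~3.2 / B.7 of \cite{sht24}, restated in the paper as \Cref{lemma:route-sht24,lemma:route-new}) delivers one round of MPC traffic in one attention layer at a cost of $m = O(k^4 s \log q)$ where $k$ is the number of distinct machines a node communicates with. With unrestricted fan-out $k = \Theta(s) = \Theta(N^\delta)$, this is $m = O(N^{5\delta} \log N)$, nowhere near $N^{\delta + \epsilon}$. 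Your claim that ``a constant number of attention heads suffices to aggregate all incoming messages\ldots with exponentially small error'' by appealing to near-orthogonal slot encodings is exactly the assertion that needs proof, and it contradicts the best known bound for this step; you do not give any mechanism to disentangle, at the receiver, the co-mingled messages destined for other machines.

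The paper's actual move is structural and is absent from your proposal: it introduces a \emph{bounded-fan} MPC model ($(\gamma,\delta,\rho)$-MPC) where each machine may exchange messages with at most $O(N^\rho)$ other machines per round, then proves (\Cref{prop:mpc-sim}) that one round of ordinary MPC can be emulated by $O((1+\gamma)^2/\rho^2)$ rounds of bounded-fan MPC via a tree-structured ``neighborhood routing'' protocol (route packets through nested neighborhoods of geometrically shrinking radius, with a precomputation phase to balance loads). Only \emph{then} is the transformer simulation applied (\Cref{cor:mpc-sht24}), where the small fan-out $k = O(N^\rho)$ makes the routing lemma's $O(k^4 s\log q)$ term equal $O(N^{\delta + 4\rho}\log N)$, and choosing $\rho = \Theta(\epsilon)$ gives $m = O(N^{\delta + \epsilon})$. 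The extra depth spent on neighborhood routing is precisely what buys the tight width; you cannot get both with a direct one-round-per-layer simulation. So the missing idea is the intermediate bounded-fan model and the multi-round reduction to it, not a more careful slot-packing argument.
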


Results pertaining to \depthone{} are stated in detail and proved in \Cref{asec:1L}.
In addition, we discuss the triangle counting task (and the more general clique counting task) in \Cref{ssec:triangle}, where we show a distinct result for much smaller depth ($L = O(\log\log N)$) that includes pause tokens.

\subsection{Parallelizable tasks and \logdepth{} transformers}\label{ssec:med-tasks}

We define the family of \emph{parallelizable tasks} to consist of graph reasoning tasks that are $\L$-complete and are equivalent to graph connectivity in $O(1)$-rounds, as proved by \cite{ns22}.
This family includes (but is not limited to): graph connectivity, minimum spanning forest, cycle detection, $st$-connectivity, number of connected components, graph bipartiteness, planarity testing, and one-cycle vs two-cycles testing.
While graph connectivity and minimum spanning forest were shown to be computable by logarithmic depth transformers with small polynomial width (\logdepth{}) by previous work \cite{sht24}, this poses broader questions:
Are all parallelizable graph reasoning tasks computable by a transformer of logarithmic depth and small embedding dimension? And do sub-logarithmic-depth transformers exist that solve any parallelizable tasks?

We first show that all parallelizable tasks can be solved in two logarithmic-depth scaling settings.

\begin{theorem}
\label{thm:parallelizable-pos}
For any parallelizable task, there exists transformers in \logdepthpause{} and \logdepthpause{} that solve the task.
\end{theorem}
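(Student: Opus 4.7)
The strategy is to reduce the transformer construction to an application of \Cref{thm:tight-mpc-informal} by exhibiting appropriate MPC protocols. The key starting observation is the result of \cite{ns22} already invoked in the section: every parallelizable task reduces in $O(1)$ MPC rounds to undirected graph connectivity. So it suffices to solve connectivity within the MPC budget induced by each of the two scaling regimes, compose it with the $O(1)$-round reduction to obtain an $O(\log N)$-round protocol for the target parallelizable task, and then invoke \Cref{thm:tight-mpc-informal} to convert the MPC protocol into a transformer.

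For the \logdepthwide{} regime, the plan is to fix a small constant $\delta' < \epsilon$ and use a standard low-memory MPC connectivity algorithm on $N$ machines with local memory $O(N^{1/2 + \delta'})$ that terminates in $O(\log N)$ rounds (for instance, a Borůvka-style hook-and-contract protocol, or one of the strongly sublinear connectivity algorithms in the MPC literature). Plugging this into \Cref{thm:tight-mpc-informal} with memory parameter $\delta = 1/2 + \delta'$ yields a transformer of depth $O(\log N)$ and embedding dimension $O(N^{1/2 + \epsilon})$ with no pause tokens, matching the \logdepthwide{} constraints. For \logdepthpause{}, we instead pick an even smaller $\delta' < \epsilon/2$ and use an MPC connectivity protocol that runs with $\poly(N)$ machines, each of local memory $O(N^{\delta'})$, in $O(\log N)$ rounds; the extra $\poly(N) - N$ machines are supplied by the pause tokens permitted in \logdepthpause{}, which \Cref{thm:tight-mpc-informal} treats on equal footing with ordinary input tokens. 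A second application of the simulation theorem with $\delta = \delta'$ then gives a transformer of depth $O(\log N)$ and embedding dimension $O(N^{\epsilon})$.

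The main obstacle is verifying that the off-the-shelf MPC connectivity algorithms truly achieve $O(\log N)$ rounds within the two target memory profiles, without hidden assumptions on graph diameter or on the total number of machines that would clash with the regime parameters. Once this is confirmed, the rest of the argument is a mechanical composition: the \cite{ns22} reduction handles the broad range of parallelizable tasks uniformly, and \Cref{thm:tight-mpc-informal} packages everything into a transformer of the stated depth, width, and pause-token counts. A subsidiary issue is making sure the $O(1)$-round reduction of \cite{ns22} can itself be realized within the small-memory MPC model; this should follow because the reduction is local and touches each graph element only a bounded number of times, so it can be scheduled with the same memory budget used for the connectivity subroutine.
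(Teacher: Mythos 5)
Your \logdepthpause{} argument matches the paper's: the paper proves this case via \Cref{cor:log-med}, which is exactly the composition of a log-round MPC connectivity protocol (\Cref{thm:cc22-connectivity}) with the $O(1)$-round equivalence of \cite{ns22} (\Cref{thm:ns22-med}), fed into \Cref{thm:tight-mpc}.

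The \logdepthwide{} half of your argument has a genuine gap. You propose to reuse the same ``connectivity $+$ NS22 reduction'' pipeline, merely raising the local memory to $O(N^{1/2+\delta'})$. But \Cref{thm:ns22-med} is not memory-neutral in the sense you assume: converting an $R$-round $(\gamma,\delta)$-MPC protocol for connectivity into one for an arbitrary parallelizable task produces a $(\gamma',\delta)$-protocol with $\gamma' = \max(\gamma,3)$, i.e.\ it inflates the \emph{global} memory to $\Omega(n^4)$ regardless of how small the local memory $s$ is. In \Cref{thm:tight-mpc} this global memory translates into $N' = \max(0, O(n^{1+\gamma-\delta}) - n)$ pause tokens, so with $\gamma' \geq 3$ and $\delta < 1$ you unavoidably get $N' = \Omega(n^{3})$ pause tokens --- which violates the \logdepthwide{} requirement $N' = 0$. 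Your ``subsidiary issue'' paragraph gestures at local scheduling, but the obstruction is the total machine count, not the per-machine memory, and it cannot be waved away.

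The paper sidesteps this by using a different ingredient in the \logdepthwide{} case: Theorem~1 of \cite{ns22} (\Cref{thm:nc-mpc-ns22}), which gives an $O(\log n)$-round $(\gamma,\delta)$-MPC protocol for \emph{any} problem in $\NC^2$ (hence all of $\L$ and $\NL$, hence all parallelizable tasks directly, without routing through connectivity) for \emph{any} $\gamma > 0$, provided $\delta > 1/2$. Because $\gamma$ can be taken as small as desired while still having $\delta > 1/2$, one can choose $\gamma \leq \delta$ so that the pause-token count from \Cref{thm:tight-mpc} collapses to zero, yielding a transformer with $m = O(N^\epsilon)$ for any $\epsilon > 1/2$ and $N' = 0$. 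So the two regimes really do require two different MPC results: the connectivity-plus-equivalence route for \logdepthpause{}, and the circuit-simulation route for \logdepthwide{}. Your proposal only reconstructs the first.
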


\Cref{thm:parallelizable-pos} is stated formally as \Cref{thm:parallelizable-pos-formal} and proved in \Cref{assec:med-tasks}.
Both components of the theorem are a consequence of a novel relationship between the MPC model and transformers (\Cref{thm:tight-mpc-informal}) and the analysis of MPC protocols for graph reasoning tasks by \cite{ns22}.
The \logdepthpause{} result is the direct implication of an $O(1)$-round MPC equivalence (\Cref{thm:ns22-med}) between all parallelizable tasks and an MPC protocol that solves the connectivity task (\Cref{thm:cc22-connectivity}). 
The \logdepthwide{} bound is a consequence of \Cref{thm:nc-mpc-ns22}, which shows that all languages in $\NC^2$ (including all languages in $\L$ and $\NL$) can be evaluated by an MPC protocol with $O(\log N)$ rounds and with local memory $O(N^{1/2 + \epsilon})$~\cite{ns22}.

We further prove the conditional optimality of logarithmic depth.

\begin{restatable}
{theorem}{corgraphloglb-informal}\label{thm:graph-log-lb-informal}
Conditional on \Cref{conj:1vs2}, any transformer that solves some parallelizable task with width $mH = O(N^\epsilon)$ and pause tokens $N' = \poly(N)$ must have depth $L = \Omega(\log N)$.
\end{restatable}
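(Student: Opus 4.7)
The plan is to argue by contradiction, reducing a sublogarithmic-depth transformer solution to an MPC protocol that contradicts the one-cycle vs two-cycles conjecture. Because the parallelizable family is closed under $O(1)$-round MPC equivalence and includes the one-cycle vs two-cycles task itself, it suffices to produce a conjecture-violating MPC protocol from any hypothetical parameter-efficient transformer for \emph{any} parallelizable task.

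Suppose, for contradiction, that some parallelizable task $T$ admits a transformer family with depth $L = o(\log N)$, width $mH = O(N^\epsilon)$, and $N' = \poly(N)$ pause tokens. First, by the $O(1)$-round MPC equivalence between parallelizable tasks established by \cite{ns22} and used in \Cref{thm:parallelizable-pos}, a solver for $T$ yields a solver for one-cycle vs two-cycles after at most $O(1)$ additional MPC rounds of input re-encoding and output decoding. The target will be to realize the transformer computation itself as an MPC protocol so this whole pipeline fits the conjecture's hypothesis.

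Second, I would invoke a converse of \Cref{thm:tight-mpc-informal}: a transformer with depth $L$, embedding dimension $m = O(N^\epsilon)$, and $O(\poly(N))$ tokens can be simulated by an MPC protocol with $O(L)$ rounds and local memory $O(N^{\delta})$ for some $\delta < 1$ depending only on $\epsilon$ and the exponent in $N'$. This direction is implicit in the bandwidth-based interpretation of transformers in \cite{sht23,sht24}: per-token MLPs are local computations on each machine holding that token's embedding, while an attention layer's low-rank interaction between $\poly(N)$ tokens of dimension $O(N^\epsilon)$ can be evaluated in $O(1)$ MPC rounds by distributing tokens across machines and aggregating via standard sorting and partial-sum primitives. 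Chaining $L$ such simulations yields the desired $O(L)$-round MPC protocol. Combining with the previous step produces an $o(\log N)$-round MPC protocol with local memory $O(N^{1-\delta'})$ solving one-cycle vs two-cycles, directly contradicting \Cref{conj:1vs2}.

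The main obstacle is making the transformer-to-MPC simulation fit into the local memory budget required by the conjecture. Because the pause token count $N'$ is polynomial in $N$ and each embedding has width $O(N^\epsilon)$, the total state per layer is polynomial but possibly exceeds any single machine's memory; the simulation must therefore partition tokens across $\poly(N)$ machines so that each stores only a sublinear number of embeddings, and must implement the softmax-attention aggregation in $O(1)$ rounds within that budget. Getting the exponent arithmetic right---so that the resulting $\delta'$ strictly satisfies the conjecture's $N^{1-\delta'}$ local memory hypothesis---is the delicate quantitative step, but it follows the template already used in establishing the forward direction of \Cref{thm:tight-mpc-informal}.
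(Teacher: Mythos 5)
Your proposal is correct and follows essentially the same route as the paper: assume a sublogarithmic-depth transformer for some parallelizable task, simulate it by an MPC protocol with $O(L)$ rounds and sublinear local memory (the paper cites Theorem~3.4 of \cite{sht24} for this converse, which you correctly anticipate must exist), then chain through the $O(1)$-round MPC equivalence of parallelizable tasks to obtain an $o(\log N)$-round MPC protocol for one-cycle-vs-two-cycles, contradicting \Cref{conj:1vs2}. The paper packages the last two steps as \Cref{cor:log-med-opt}, but the logic is identical.
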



This result (stated formally as \Cref{thm:graph-log-lb} is a combination of the conditional depth lower bound on graph connectivity by \cite{sht24} and the $O(1)$-round equivalence between all parallel tasks.

\subsection{Search tasks}\label{ssec:hard-tasks}

\textit{Search tasks} are similarly defined to be those that are $\NL$-complete and equivalent to shortest path in $O(1)$-rounds of MPC and include shortest path, strong connectivity, $st$-reachability, radius, diameter, and median.
Like before, the $O(1)$-round MPC equivalence translates to an $O(1)$-depth equivalence in transformers. 
We give a similar positive result for \logdepthwide{} transformers; whether these tasks can be solved by \logdepthpause{} transformers is unknown.

\begin{theorem}
\label{thm:search-pos}
For any search task, there exists a transformer in $\logdepthwide{}$ that sovles the task.
\end{theorem}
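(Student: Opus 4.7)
The plan is to chain together the two facts that the excerpt has already set up: the NC$^2$-in-MPC simulation of \cite{ns22} (invoked as \Cref{thm:nc-mpc-ns22} in the proof of \Cref{thm:parallelizable-pos}) and the transformer-from-MPC simulation given by \Cref{thm:tight-mpc-informal}. Since every search task is by definition $\NL$-complete, and $\NL \subseteq \NC^2$, each such task admits an MPC protocol of the form guaranteed by \Cref{thm:nc-mpc-ns22}. Composing that with \Cref{thm:tight-mpc-informal} then produces a transformer whose depth and width sit inside the \logdepthwide{} regime.

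Concretely, I would proceed in three steps. First, fix an arbitrary search task $T$ and an arbitrary $\epsilon > 0$; the target is a transformer in \logdepthwide{}, i.e.\ $L = O(\log N)$ and $m = O(N^{1/2 + \epsilon})$, with no pause tokens. Second, apply \Cref{thm:nc-mpc-ns22} to $T \in \NL \subseteq \NC^2$, with the small slack parameter $\epsilon/2$ in place of $\epsilon$, to obtain an MPC protocol for $T$ that runs in $R = O(\log N)$ rounds using $O(N^{1/2 + \epsilon/2})$ bits of local memory per machine. Third, feed this protocol into \Cref{thm:tight-mpc-informal} with $\delta = 1/2 + \epsilon/2$ and slack parameter $\epsilon/2$; this yields a transformer of depth $L = O(R) = O(\log N)$ and embedding dimension $m = O(N^{\delta + \epsilon/2}) = O(N^{1/2 + \epsilon})$. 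Since \Cref{thm:tight-mpc-informal} does not require any pause tokens, the resulting network lies in \logdepthwide{}.

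An alternative (and essentially equivalent) route would go through the $O(1)$-round MPC equivalence with the shortest-path task that is built into the definition of a search task: it suffices to exhibit a single $O(\log N)$-round, $O(N^{1/2+\epsilon})$-memory MPC protocol for shortest path, and the $O(1)$-round reductions preserve membership in \logdepthwide{} once composed with \Cref{thm:tight-mpc-informal}. I would mention this to parallel the structure of the proof of \Cref{thm:parallelizable-pos}, but use the direct $\NL \subseteq \NC^2$ argument as the main pathway since it avoids handling the reductions task-by-task.

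The main obstacle, such as it is, is bookkeeping around the exponent $\epsilon$: the MPC simulator and the transformer simulator each absorb a multiplicative slack in the embedding-dimension exponent, so one has to choose the intermediate slack (here $\epsilon/2$ at each stage) so the final exponent remains below $1/2 + \epsilon$. There is no genuine hardness in the argument beyond invoking \Cref{thm:tight-mpc-informal} and \Cref{thm:nc-mpc-ns22} in the right order; all the technical content is already sequestered in those two results.
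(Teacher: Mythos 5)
Your proposal matches the paper's proof: Theorem~\ref{thm:search-pos-formal} is derived exactly by combining Corollary~\ref{cor:log-wide} (which is itself the instantiation of Theorem~\ref{thm:nc-mpc-ns22} to $\L, \NL \subseteq \NC^2$) with Theorem~\ref{thm:tight-mpc}, and your $\epsilon/2$ bookkeeping is the same slack management the formal statements require. The alternative route you sketch via the $O(1)$-round equivalence with shortest path is what the paper uses for Theorem~\ref{thm:search-equiv}, not for this result, so your choice of the direct $\NL \subseteq \NC^2$ pathway as primary is correct.
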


This theorem, which is restated in \Cref{assec:hard-tasks} as \Cref{thm:search-pos-formal}, is also an immediate consequence of \Cref{thm:nc-mpc-ns22}.

While the minimum depth of a transformer with small embedding dimension that solves a search task is not identified, we prove that the minimum depth needed to solve some all search task is approximately equivalent in \Cref{thm:search-equiv}.


\subsection{Retrieval tasks and \depthone{} transformers}\label{ssec:easy-tasks}

Graph tasks whose algorithms consist of a single look-up or aggregation step can be efficiently solved by single-layer transformers.
This result assumes that the graph $G = (V, E)$ is encoded as some input sequence $X$ of length $N = O(|V| + |E|)$ that expresses each edge and vertex a single token. 
This encoding scheme is detailed in \Cref{asec:1L}.

\begin{restatable}
{theorem}{thmposoneLinformal}\label{thm:pos-1L-informal}
For any retrieval task (including node count, edge count, edge existence, and node degree) there exists a transformer in \depthone{} that solves the task.
\end{restatable}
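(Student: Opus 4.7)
The plan is to construct, for each retrieval task, an explicit single-layer multi-headed transformer of embedding dimension $m = O(\log N)$ that writes the correct answer to the task token, using the tokenization of \Cref{fig:transformer_architecture}. At the embedding layer each vertex token $v_i$ carries a vertex-type tag together with the $O(\log N)$-bit index $i$; each edge token carries an edge-type tag together with the bit-patterns of its two endpoints; and the task token carries a task-type tag together with the bits of its query vertices. All of this fits comfortably in $O(\log N)$ dimensions.

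For the counting tasks (node count, edge count) I would use a single attention head whose query at the task token aligns with the relevant type tag and is orthogonal to all others; temperature scaling makes the softmax concentrate uniformly on the $c$ tokens of the chosen type, with value $1$ at each. The aggregated output at the task token is then $1/c$, from which the final MLP---which is an arbitrary function in the \cite{sht24} model---recovers $c$ exactly.

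For the query tasks (node degree and edge existence) the attention head matches on vertex identity, using a $\pm 1$ bit-encoding of indices in $\{-1,+1\}^{O(\log N)}$, for which the inner product between two encodings gap-separates equal from unequal identifiers by $\Omega(\log N)$. Sufficient temperature scaling then concentrates the softmax on the edge tokens whose endpoints match the query. For node degree the aggregated value is $1/d(v)$, which the MLP inverts. For edge existence I include a self-attention fallback from the task token with score $0$ and value $0$, so that when any matching edge exists the softmax concentrates on it (aggregated value $\to 1$), while if no edge matches the fallback dominates (aggregated value $\to 0$); the MLP thresholds the result.

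The main obstacle is realizing the equality predicate on $O(\log N)$-bit vertex identifiers through the low-rank attention mechanism while keeping $m = O(\log N)$ and maintaining sharp-enough softmax concentration for reliable decoding. The $\pm 1$ bit-encoding supplies a clean $\Omega(\log N)$ inner-product gap between matching and non-matching identifiers; combined with the flexibility of the \cite{sht24} model---which permits both a sufficient effective attention temperature and an arbitrary-function MLP at the end---this gap translates into a clean final output for all four retrieval tasks simultaneously.
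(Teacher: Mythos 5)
Your high-level strategy matches the paper's: design an input MLP that embeds each vertex with an approximately orthogonal identifier in $O(\log N)$ dimensions, have the task token's attention key/query score single out the relevant tokens, drive softmax concentration with large key/query inner products, and decode the resulting scalar with the arbitrary output MLP. The treatment of node count and edge count (attend uniformly to one token type, recover the count from the $1/c$-shaped output) is essentially identical to the paper's. But there are two genuine gaps.

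First, the claimed near-orthogonality property of a $\pm 1$ \emph{bit-encoding of indices} does not hold. If $\eta(v) \in \{-1,+1\}^{m'}$ is the sign version of the binary representation of $v$, then two indices that differ in a single bit (e.g.\ $v$ and $v+1$ for even $v$) have $\eta(u)^{\top}\eta(v) = m' - 2$; the gap to the diagonal $\eta(v)^{\top}\eta(v) = m'$ is $2$, not $\Omega(\log N)$. Worse, for unequal identifiers the inner products range over $\{-m',\dots,m'-2\}$, so when you aggregate two endpoint embeddings into an edge token (as you need for node degree and edge existence), a non-matching edge whose endpoints are both Hamming-close to the query vertex can score higher than a matching edge whose other endpoint is Hamming-far. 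To actually get $|\eta(u)^{\top}\eta(v)| \ll m'$ uniformly for all $u \neq v$, you need either a random encoding (Rademacher signs, or the paper's random-phase sinusoids) justified by a Hoeffding/union-bound argument, or an explicit code with relative distance near $\tfrac12$. The paper's Claim~C.3 is exactly that probabilistic argument; your proposal treats it as free.

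Second, edge existence requires the attention score to implement a predicate on the \emph{unordered pair} $\{u,v\}$ with one bilinear form, and you don't specify how the edge tokens are keyed. Simply concatenating or summing $\eta(u')$ and $\eta(v')$ without care reintroduces the problem above, since the score $\eta(u)^{\top}\eta(u') + \eta(v)^{\top}\eta(v')$ mixes matching and non-matching contributions. The paper's edge embedding $\xi(u,v) = \tfrac{m'}{m'+\eta(u)^{\top}\eta(v)}\big(\eta(u)+\eta(v)\big)$ is normalized precisely so that $\xi(u,v)^{\top}\xi(u,v)\approx 2m'$ while cross terms between distinct pairs remain $O(\sqrt{m'\log n})$; some such normalization is needed to make the softmax separation go through. (Also, a small bookkeeping slip: the task token and the matching vertex token also receive top attention scores, so the aggregated value for node degree is roughly $1/(d(v)+2)$, not $1/d(v)$; the output MLP must be chosen accordingly.)
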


We formalize this statement in \Cref{thm:pos-1L} and prove it their in \Cref{asec:pos-1L}.
These rely on proving the existence of a useful input MLP $\phi$ that precomputes embeddings with useful structure for all retrieval tasks.

In contrast, we show that a collection of parallelizable and search tasks cannot be efficiently solved by transformers in \depthone{}.

\begin{restatable}
{theorem}{thmnegoneLinformal}\label{thm:neg-1L-informal}
Any single-layer transformer that solves the graph connectivity, shortest path, or cycle detection task 
has width satisfying $mH = \tilde\Omega(N)$.
\end{restatable}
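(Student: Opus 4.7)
The plan is to appeal to the multi-party communication framework of \cite{sht23,sht24}, which represents a single-layer transformer with $H$ heads and embedding dimension $m$ as a protocol in which each output token extracts only $\tilde{O}(mH)$ bits of information (with the $\tilde{O}$ absorbing $O(\log N)$ bit-precision factors) from the attention mechanism. Given this framework, the lower bound reduces to exhibiting, for each of the three tasks, a family of inputs for which solving the task forces $\tilde\Omega(N)$ bits of information to reach the task token.

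For graph connectivity, I would reduce the classic two-party set disjointness problem on universe $[n]$ to an $st$-connectivity instance on $\Theta(n)$ vertices. Given sets $A, B \subseteq [n]$ held by Alice and Bob, construct a graph on vertex set $\{s, t, v_1, \dots, v_n, u_1, \dots, u_n\}$ with permanent edges $(v_i, u_i)$, Alice-controlled edges $(s, v_i)$ for $i \in A$, and Bob-controlled edges $(t, u_i)$ for $i \in B$. Then $s$ and $t$ are connected if and only if $A \cap B \neq \emptyset$. Using the token layout of \Cref{fig:transformer_architecture}, Alice's and Bob's edges correspond to disjoint subsets of the edge tokens, and the task token ``are $s$ and $t$ connected?'' serves as the protocol's output. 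Since randomized two-party set disjointness has communication complexity $\Omega(n)$, any single-layer transformer solving connectivity must satisfy $mH \cdot \mathrm{polylog}(N) = \Omega(n)$, giving $mH = \tilde\Omega(N)$.

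Cycle detection and shortest path follow from small modifications of the same reduction. For cycle detection, I would attach a fixed auxiliary path from $s$ to $t$ through dedicated new vertices and edges, so that the resulting graph contains a cycle if and only if there is an $st$-path through the disjointness gadget, i.e.\ if and only if $A \cap B \neq \emptyset$. For shortest path, the identical $st$-connectivity construction works directly, since the $st$ shortest-path length is finite exactly when $A \cap B \neq \emptyset$, and a transformer outputting the shortest-path length can trivially be used to detect this event.

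The main obstacle is the careful bit-level accounting when translating attention into a two-party protocol: tracking floating-point precision, bounding the number of bits extracted per attention head, and verifying that the partition of input tokens between the two parties is compatible with the vertex/edge/task token decomposition of \Cref{fig:transformer_architecture}. The abstract machinery in \cite{sht23,sht24} already packages most of this bookkeeping, so the principal conceptual step is checking that each of the three graph reductions above respects the tokenization scheme used throughout the paper and that the answer is extractable from the designated task token after a single round of attention. A secondary subtlety is that the disjointness hardness must hold against randomized protocols with constant error, which is provided by the standard unconditional $\Omega(n)$ randomized communication complexity lower bound for set disjointness.
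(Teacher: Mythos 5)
Your proposal is correct and takes essentially the same approach as the paper: reduce two-party set disjointness to each graph task, then invoke the observation that a single-layer transformer with width $mH$ and $O(\log N)$-bit precision yields a $\tilde{O}(mH)$-bit one-way protocol, since the softmax numerator and normalizer restricted to Alice's tokens can be summarized and sent to Bob. The paper packages that last step as a standalone lemma (what you called the ``bookkeeping'' from \cite{sht23,sht24}) and then applies it to each task. Your $st$-connectivity gadget (fixed middle edges $(v_i,u_i)$, Alice-controlled $(s,v_i)$, Bob-controlled $(u_i,t)$) is a transposition of the paper's (fixed end edges from $s$ and $t$, Alice- and Bob-controlled middle edges), and both are equally valid. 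A few minor divergences worth noting: the paper uses the \emph{deterministic} disjointness bound, which suffices here since the transformer in the theorem is deterministic, so invoking the randomized bound is unnecessary (though not wrong); the paper also constructs a second hard family with constant degree, to show the hardness does not hinge on the star-like gadget; and for shortest path the paper appends a backup $s$--$t$ path so the answer is always a finite length, whereas you rely on the task's $-1$ sentinel for disconnected pairs — both are sound under the paper's task definition, yours is slightly simpler, and the paper's variant additionally shows hardness persists on always-connected inputs.
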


The proof of the formal counterpart of this statement (\Cref{thm:neg-1L}) appears in \Cref{asec:neg-1L} and is a consequence of a standard communication complexity argument.
A more generalized result than \Cref{thm:neg-1L-informal} was proved by \cite{mss21}, which establishes that all problems outside of $\NC^1$---which include all $\L$-complete and $\NL$-complete languages, and hence, all parallelizable and search tasks---cannot be solved by \textit{constant-depth} transformers with polynomial width because they cannot be computed by $\textsf{TC}^0$ (constant-depth threshold circuits). 
We nonetheless include this theorem to demonstrate a clean lower bound that applies to very simple input graph instances.

\subsection{Triangle counting}\label{ssec:triangle}



We finally construct depth-efficient transformers for triangle counting due to the MPC algorithms of \cite{belmr21}.
Unlike previous positive results, which applied uniformly across all graphs instances of bounded size, the complexity of the corresponding transformers for triangle counting is a function of the arboricity\footnote{
The \textit{arboricity} is the minimum number of spanning forests needed to cover the graph, which grows at most linearly with the degree of the graph.
} of the input graph.
When the arboricity grows sub-polynomially with $N$---as is the case for bounded-degree graphs---no pause tokens are necessary.
Unlike the parallelizable and search classes of problems, strictly sub-logarithmic depth is attainable with pause tokens, even for worst-case graphs.



\begin{theorem}\label{cor:triangle-informal}
There exists a transformer that computes the number of triangles in any input graph of arboricity $\alpha$ and has embedding dimension $m = O(N^\epsilon)$, depth $L = O(\log\log N)$, and pause tokens \[N' = \begin{cases}O(\alpha N^{1-\epsilon}) & \text{if $\alpha = \Omega(N^\epsilon)$} \\ 0 & \text{otherwise.}\end{cases}\]
\end{theorem}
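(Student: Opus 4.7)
The plan is to derive this theorem as a direct corollary of the MPC-to-transformer simulation in \Cref{thm:tight-mpc-informal}, combined with the arboricity-parameterized MPC triangle counting algorithm of \cite{belmr21}. The \cite{belmr21} algorithm counts triangles in $O(\log\log N)$ MPC rounds using machines with local memory $O(N^\delta)$ for any fixed $\delta > 0$, with total memory on the order of $\tilde O(N + \alpha N)$; this is what drives both the sub-logarithmic depth bound and the arboricity dependence in the final pause token count.

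First I would fix $\delta = \epsilon/2$ and invoke the MPC algorithm of \cite{belmr21} with this choice of local memory exponent. The number of machines required is the total memory divided by the local memory, which yields $M = O(N^{1-\delta})$ when the arboricity is small and $M = O(\alpha N^{1-\delta})$ when $\alpha$ dominates. Next I would encode the input graph into $N = O(|V| + |E|)$ tokens using the same encoding scheme as in \Cref{ssec:easy-tasks}, and append $N' = \max(0, M - N)$ blank pause tokens so that the total token count matches the machine count of the MPC protocol. In the two cases of the theorem, $M - N = 0$ when $\alpha = o(N^\epsilon)$ (so no pause tokens are needed), while $M - N = O(\alpha N^{1-\epsilon})$ when $\alpha = \Omega(N^\epsilon)$, matching the claimed bound on $N'$.

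Finally I would apply \Cref{thm:tight-mpc-informal} to the above MPC protocol with $R = O(\log\log N)$ rounds and local memory $O(N^{\epsilon/2})$. The theorem yields a transformer of depth $L = O(R) = O(\log\log N)$ and embedding dimension $m = O(N^{\delta + \epsilon/2}) = O(N^\epsilon)$. Since the simulation preserves the number of machines as the number of tokens, and since the pause tokens serve exactly as blank scratch cells for machines whose initial contents are empty, the simulated transformer correctly outputs the triangle count computed by the MPC protocol.

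The main subtlety, and the only place where care is required, is verifying that the \cite{belmr21} algorithm is compatible with \Cref{thm:tight-mpc-informal} in two respects: the initial data distribution across machines must match the token-per-machine convention of the input encoding (so that edges and vertices map cleanly to the starting state of the MPC protocol), and the ``extra'' machines corresponding to pause tokens must begin empty rather than holding a designated portion of the input. Both are handled by a standard routing round that redistributes edges into arboricity-bounded buckets at the start of the protocol, which adds only $O(1)$ rounds and is absorbed into the $O(\log\log N)$ depth bound. Once this compatibility is verified, the remaining argument is bookkeeping on the exponents of $N$ in the memory and machine counts.
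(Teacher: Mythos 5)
Your high-level strategy is exactly the one the paper takes: invoke the $(\gamma,\delta)$-MPC triangle-counting protocol of \cite{belmr21}, set the global-memory exponent $\gamma$ so that $N^\gamma = \Theta(\alpha)$, and push the protocol through the MPC-to-transformer simulation of \Cref{thm:tight-mpc}. The gap is in the exponent bookkeeping, and it is not a rounding error.

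Setting $\delta = \epsilon/2$ so that the slack in \Cref{thm:tight-mpc-informal} can also be $\epsilon/2$ does give $m = O(N^{\epsilon/2 + \epsilon/2}) = O(N^\epsilon)$, but it wrecks the pause-token count. With local-memory exponent $\delta = \epsilon/2$ the number of MPC machines is $q = \Theta(N^{1+\gamma-\delta}) = \Theta(\alpha N^{1-\epsilon/2})$, so $N' = \max(0, O(\alpha N^{1-\epsilon/2}) - N)$. This is $O(\alpha N^{1-\epsilon/2})$, not the claimed $O(\alpha N^{1-\epsilon})$, a genuine polynomial-factor discrepancy of $N^{\epsilon/2}$; and $N'$ vanishes only when $\alpha = O(N^{\epsilon/2})$, not on the whole claimed range $\alpha = o(N^\epsilon)$. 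The asymmetry in \Cref{thm:tight-mpc} matters here: the local-memory exponent $\delta$ controls \emph{both} the pause-token count $N' = \max(0,O(N^{1+\gamma-\delta})-N)$ and (via $\delta' > \delta$, $m = O(N^{\delta'})$) the embedding dimension. You should therefore take $\delta$ as close to $\epsilon$ as the constraint $\delta < \delta' = \epsilon$ allows, putting essentially the entire $\epsilon$ budget into the local memory and leaving only a vanishing slack for $\delta' - \delta$; a fifty-fifty split needlessly sacrifices a polynomial factor in $N'$. (Even with $\delta = \epsilon - o(1)$ the bound is attained only up to an $N^{o(1)}$ factor, but that looseness is already baked into the strict inequality $\delta < \delta'$ in \Cref{thm:tight-mpc}; your $\epsilon/2$ choice overshoots well beyond it.)

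The closing paragraph about a ``standard routing round'' to match the MPC input distribution to the token layout and to ensure the extra machines start empty is not needed: \Cref{thm:tight-mpc} (via the initialization lemma underlying it, Lemma~B.4 of \cite{sht24}) already specifies how the length-$N$ encoded input is distributed across the first $\lceil N/s \rceil$ simulated machines and how pause tokens are treated as blank machines. You do not get to (and do not need to) insert an extra preprocessing MPC round; the simulation handles it.
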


This result is a special case of \Cref{cor:clique}, a more general result about clique counting that appears in \Cref{assec:triangle}.




\paragraph*{Theoretical conclusions.}

These results provide a tight characterization of the the reasoning capabilities of transformers whose depth, width, and input padding conform to different scaling regimes.
They strengthen the established connection between transformers and massively parallel computation (MPC) \cite{sht24} and generalize the resulting representational bounds to broader categories of graph tasks.
We conclude that the logarithmic-depth regime is apt for for considering tasks in $\L$ and $\NL$, which had previous illuminated the limitations of transformers with constant depth and a limited number of chain-of-thought tokens \cite{ms23-cot}. 
While expressivity does not imply learnability, these theoretical benchmarks sharply characterize the fundamental limitations of transformers and coincide with experimental results conveyed in the subsequent section.


\section{Empirical graph reasoning capabilities}\label{sec:transformers-gnns}

We further illuminate the reasoning capabilities of transformers by conducting an empirical investigation of the abilities of a variety of neural architecture and training settings to learn graph algorithmic tasks.
We use the GraphQA benchmark tasks~\cite{fhp23} for our experiments. We evaluate standard autoregressive transformers---both small models (at most 60M parameters) trained from scratch and fine-tuned (FT) T5-11B model~(with 11B parameters)~\cite{raffel2020exploring}. For the fine-tuned models, we explore task-specific fine-tuning---and contrast those results with graph neural networks (GNNs) and prompting-based methods on pre-trained LLMs.

These experimental results validate key tenets of our theoretical results and demonstrate the utility of transformers' algorithmic reasoning capabilities.
Our principal empirical conclusions are as follows:
\begin{enumerate}[left=0pt]
    \item \textbf{Transformers excel at global reasoning tasks.} 
    Transformers outperform GNNs on tasks that require efficiently aggregating information about distant nodes in a graph, such as connectivity and shortest path.
    \item \textbf{GNNs uncover local graph structure with few samples.} 
    While transformers are capable of efficiently expressing all graph learning tasks under investigating, the structural limitations of GNNs provide them with favorable inductive biases for intrinsically local tasks, such as cycle check and node degree, and permit them to outperform transformers in a low-sample regime.
    \item \textbf{Trained transformers outperform LLM prompting.}  
    Transformers trained to explicitly solve graph reasoning tasks consistently attain greater accuracy across tasks than a variety of prompting strategies applied to more recent larger LMs. 
\end{enumerate}
A comprehensive evaluation of each GraphQA task on every training setting appears in \Cref{asec:empirical}, in addition details about transformer training, the GraphQA benchmark, and alternative GNN and prompting approaches.

\subsection{Transformers excel at global reasoning tasks}\label{ssec:transformer-win}

As indicated in \Cref{sec:hierarchy}, graph reasoning algorithms can be categorized based on the extent to which they entail aggregating ``local'' information about nodes and their immediate neighbors or modeling ``global'' connections between nodes separated by a long distances.
This section investigates the following question about transformers and long-distance reasoning:
\begin{quote}
    \textit{When do transformers outperform GNNs on tasks that require global reasoning?}
\end{quote}
We consider two tasks that require reasoning across long distances in a graph instance: evaluating \textbf{connectivity} and computing the \textbf{shortest path} between a pair of nodes.
Neither of these tasks can be solved by only investigating the neighbors of the source and sink node, which therefore implies that some analysis of global graph structure is necessary.

\begin{figure}[t]
\centering
    \begin{subfigure}[b]{0.45\textwidth}
        \centering
        \begin{tabular}{lcc}
        \toprule
                & \multicolumn{2}{c}{\textbf{\# of training samples}} \\
                \cmidrule(lr){2-3}
                \textbf{Model} & \textbf{1K}& \textbf{100K}\\ 
                     \midrule
                    GCN~\cite{kipf2016semi} & 83.8 & 83.8\\
                    MPNN~\cite{gilmer2017neural} & 94.0 & 94.4\\
                    GIN~\cite{xhlj18} & 93.8 & 94.0\\
                    \midrule
                    60M transformer
                    & 92.9 & 98.0 \\
                    11B transformer (FT)
                    & \textbf{98.4} & --- \\
                    \bottomrule
        \end{tabular}
        \caption{Connectivity classification accuracy for trained transformers and GNNs.}
        \label{tab:connectivty}    
    \end{subfigure}
   \qquad\quad
    \begin{subfigure}[b]{0.45\textwidth}
        \centering
       \begin{tikzpicture}
\begin{axis}[height=4cm,width=\linewidth,ymajorgrids=true,xmin=100,xmax=100000,ymax=1,ymin=0.88,xmode=log,legend style={at={(1,0)},anchor=south east,xlabel={Number of Training Samples (log scale)},ylabel={Accuracy}}
]
\addplot[thick,color=cycle4,mark=*] coordinates {
(100, 1.0)
(200, 1.0)
(500, 1.0)
(1000, 1.0)
(2000, 1.0)
(5000, 1.0)
(10000, 1.0)
(20000, 1.0)
(50000, 1.0)
(100000, 0.9899193048477173)
};
\addplot[thick,color=cycle1,mark=square*] coordinates {
(100, 0.9112902879714966)
(200, 0.8870967626571655)
(500, 0.9264112710952759)
(1000, 0.930443525314331)
(2000, 0.9243951439857483)
(5000, 0.9455645084381104)
(10000, 0.9465725421905518)
(20000, 0.9576612710952759)
(50000, 0.9697580337524414)
(100000, 0.9798386693000793)
};
\legend{Train,Test}
\end{axis}
\end{tikzpicture}
        \vspace{-4ex}
        \caption{Connectivity classification accuracy of 60M transformers by training set size.}
        \label{fig:connectivity}
    \end{subfigure}
    \caption{Accuracy of a variety of trained transformers and GNNs on the connectivity task.}
    \label{fig:connectivity_both}
\end{figure}
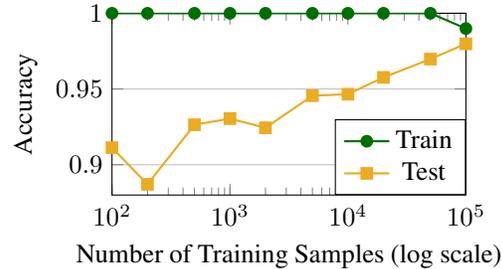


\begin{wraptable}[15]{r}{0.5\textwidth}
\centering
\vspace*{-1.5ex}
\setlength{\tabcolsep}{4pt}
\begin{tabular}{lcc}
\toprule
& \multicolumn{2}{c}{\textbf{\# of training samples}} \\
\cmidrule(lr){2-3} 
     \textbf{Model} & \textbf{1K}& \textbf{100K}\\ 
     \midrule
    GCN~\cite{kipf2016semi} & 50.2 & 55.0\\
    MPNN~\cite{gilmer2017neural} & 66.8 & 72.6\\
    GIN~\cite{xhlj18} & 54.0 & 58.6 \\
    \midrule
    60M transformer
    &  57.4 & \textbf{97.1} \\
    11B transformer (FT) & 92.8 & --- \\
    \bottomrule
\vspace*{-2.5ex}
\caption{Transformers vs GNNs on shortest path: Fine-tuned large transformers outperform other transformers and GNNs, even the alternatives are trained on much larger training sets.}
\label{tab:shortest_path}
\end{tabular}
\end{wraptable}

\Cref{tab:connectivty}  displays the accuracy of a variety of trained transformers and GNNs on the connectivity task contrastsing the performance of all such models when trained on 1,000 and 100,000 graph connectivity instances.
In the most restricted sample complexity regime, trained GNNs are consistently more accurate than the small transformer; however, increasing the number of training samples yields a far more substantial improvement in the performance of the small transformer, which outperforms all GNNs trained on 100,000 samples.
Notably, the pre-trained transformer, fine-tuned on just 1000 training instances, nearly solves the connectivity task. This suggests significant enhancements due to the larger model size and the data-rich fine-tuning phase. 
\Cref{fig:connectivity} plots the training and test error of ten small transformers trained on connectivity datasets of increasing size and reveals a sharp and continual improvement in accuracy.
The fine-tuned T5 transformer has similar accuracy to the most sample-rich small transformer and exceeds that of all GNNs.


On the other hand, \Cref{tab:shortest_path} demonstrates that the MPNN GNN models outperform small transformers when trained to compute the shortest path, even on larger training datasets.
However, the fine-tuned T5 model has far higher accuracy than all alternatives, even when trained only 1000 samples.

\paragraph*{Theoretical interpretation:}
Because connectivity is the prototypical example of a task in the parallelizable class and can thus be efficiently implemented by \logdepth{} transformers with very small width (\Cref{thm:parallelizable-pos}), the fact that small transformers succeed in solving nearly all connectivity instances is illuminating but not surprising. 
In contrast, message-passing GNNs are unable to solve connectivity in a similarly depth-efficient manner due to fundamental capacity limitations.\footnote{
See \Cref{ssec:appendix-graphreasoning} for a discussion of prior work on theoretical relationships between GNNs and both the Weisfeiler-Leman graph isomorphism test \cite{xhlj18} and the \congest{} distributed computing model and their implication that GNNs cannot solve connectivity in a parameter-efficient manner.
}

Shortest path belongs to the search class of graph reasoning tasks and is $\NL$-complete.
\Cref{thm:search-pos} implies that shortest path is computable by \logdepthwide{} transformers, which are likely to require very large embedding dimension to be learnable by finite samples. 
This task can only be computed in a depth- and parameter-efficient manner if a variety of search tasks, including all-pairs shortest-path and diameter, are as well (\Cref{thm:search-equiv}).
The fact that only the pre-trained model has nearly optimal performance on shortest path reinforces the theoretical intuition that solving shortest path requires a very large number of model parameters.










\subsection{GNNs uncover local graph structure with few samples}\label{ssec:gnn-win}

While small transformers outperform GNNs on graph reasoning algorithms that entail analysis of long-range graph structure, their empirical successes are not uniform.
Here, we investigate the following question:

\begin{quote}
    \textit{When do GNNs outperform transformers on graph reasoning tasks?}
\end{quote}


\begin{wraptable}[12]{r}{0.6\textwidth}
\centering
\vspace*{-2.5ex}
\begin{tabular}{lcccc}
\toprule
& \multicolumn{2}{c}{\textbf{Node Degree}} & \multicolumn{2}{c}{\textbf{Cycle Check}} \\
\cmidrule(lr){2-3}  \cmidrule(lr){4-5}
\footnotesize
     \textbf{Model} & \textbf{1K}& \textbf{100K} & \textbf{1K}& \textbf{100K}\\ 
     \midrule
    GCN~\cite{kipf2016semi} & 9.8 & 9.4 & 83.2 & 83.2\\
    MPNN~\cite{gilmer2017neural} & 99.4 & \textbf{99.8} & 99.0 & \textbf{100.0} \\
    GIN~\cite{xhlj18} & 36.2 & 37.8 & 98.8 &  83.2\\
    \midrule
    60M transformer &  31.6 & 91.7 & 97.1 & 98.0 \\
    11B transformer (FT) & 68.8 & --- & 98.0 & --- \\
    \bottomrule
\vspace*{-2.5ex}
\caption{Transformers vs GNNs on cycle check and node degree: GNNs are favorably biased for  local structure.}
\label{tab:local}
\end{tabular}
\end{wraptable}

\Cref{fig:connectivity} and \Cref{tab:shortest_path} demonstrate that GNNs outperform small transformers in the low-sample regime, despite the sufficient expressivity of transformers.
This gap in performance, which is reinforced for the \textbf{node degree} and \textbf{cycle check} tasks in \Cref{tab:local}, suggests that GNNs have a beneficial \textit{inductive bias} for learning graph reasoning tasks that can be solved by attending exclusively to local heuristics.

Just as the bounded kernel size of convolutional neural networks (CNNs) enables the sample-efficient learning of relevant textures and gradients for image analysis, message-passing GNNs are unable to send messages instantaneously across multiple edges, which simplifies a search of the space of ``one-hop'' graph algorithms and represents a positive inductive bias.
In contrast, the ability of transformers to send information between any pair of input tokens---and the alternative inductive biases suggested by the input positional encoding---likely induces a steeper sample complexity to learn node degree.






\paragraph*{Theoretical interpretation:}
While model expressivity is necessary for learnability, it is not sufficient.
The locality constraints of message-passing GNNs likely provides a favorable inductive bias for learning tasks like node degree with an exclusively on local structure that makes learning these tasks possible in a sample-efficient manner.
While cycle check is more representationally difficult for GNNs than transformers in the worst case (see \Cref{ssec:appendix-graphreasoning}), the random graphs sampled for the GraphQA benchmark have very small cycles (\Cref{fig:appendix:cycle_length}) and do not resemble the large-diameter worst-case instance.

\subsection{Trained transformers outperform LLM prompting}\label{ssec:prompting-lose}

Large language models (LLMs) are regularly evaluated by their reasoning abilities, and it remains an open research question to determine what kinds of training data best teaches models to solve logical problems.
We investigate the extent to which LLMs can already perform graph reasoning tasks without being trained explicitly to do so.
\begin{quote}
    \textit{Do transformers trained explicitly to solve graph reasoning tasks outperform prompt-tuning approaches on much larger LLMs?}
\end{quote}

\begin{table}[t]
\resizebox{\textwidth}{!}{%
\setlength{\tabcolsep}{3pt}
\begin{tabular}{llcccccccc} 
\toprule
& & \multicolumn{4}{c}{\textbf{Retrieval tasks}} & \multicolumn{2}{c}{\textbf{Parallelizable Tasks}} & \multicolumn{1}{c}{\textbf{Search Tasks}} & \multicolumn{1}{c}{\textbf{Subgraph Counting}}\\
\cmidrule(lr){3-6} \cmidrule(lr){7-8} \cmidrule(lr){9-9} \cmidrule(lr){10-10}
 & \textbf{Method} & \textbf{Node count} & \textbf{Edge count} & \textbf{Edge existence} & \textbf{Node degree} &  \textbf{Connectivity} & \textbf{Cycle check} & \textbf{Shortest path} &  \textbf{Triangle counting}  \\ \midrule
\multirow{5}{*}{\begin{sideways}\scriptsize Prompting \end{sideways}} & \acr{zero-shot}~\cite{fhp23} & 21.7 & 12.4 & 44.5 & 14.0 & 84.9 & 76.0 & 11.5 & 1.5 \\
& \acr{zero-cot}~\cite{fhp23} & 14.6 & 9.4 & 33.5 & 10.4 & 73.5 & 32.3 & 33.6 & 12.7 \\
& \acr{few-shot}~\cite{fhp23} & 25.3 & 12.0 & 36.8 & 17.4 & 79.4 & 37.4 & 22.7 & 3.0 \\
& \acr{cot}~\cite{fhp23} & 27.6 & 12.8 & 42.8 & 29.2 & 45.2 & 58.0 & 38.6 & 8.1 \\
& \acr{cot-bag}~\cite{fhp23} & 26.9 & 12.5 & 37.3 & 28.0 & 45.2 & 52.1 & 40.4 & 8.1 \\
\midrule
\multirow{3}{*}{\begin{sideways}\scriptsize Ours \end{sideways}} &
60M transformer-1K
& \underline{100.0} & \underline{100.0} &	67.6 &	31.5 &	92.9 &	\underline{97.1} &	57.4 &	\underline{33.4} \\
& 
60M transformer-100K
& \textbf{100.0} &	\textbf{100.0} &	\underline{96.1} &	\textbf{91.7} &	\underline{98.0} &	\textbf{98.0} &	\underline{97.2} &	\textbf{40.5} \\
& 
11B transformer (FT)-1K
& \textbf{100.0} & 45.0 & \textbf{100.0} & \underline{68.8} & \textbf{98.4} & \textbf{98.0} & \textbf{92.8} & 26.0\\
\bottomrule
\end{tabular}
}
\caption{
Comparison of GraphQA task accuracies of transformers explicitly trained for graph reasoning and LLMs with a variety of prompting strategies.
}\label{table:llm}

\end{table}

In \Cref{table:llm}, we contrast the capabilities of trained transformer models with several prompt-based approaches to querying LLMs.
Task-specific transformers---including the fine-tuned 11B transformers---consistently dominated the prompt-based approaches, despite the vast difference in parameter count and the almost certain presence of graph reasoning in the LLM's corpus.

\paragraph*{Theoretical interpretation:}
While the representational capabilities of LLMs to solve reasoning tasks is much greater than that of small transformers, this performance gap suggests that their effective reasoning capacity is much weaker and that it may be improved by a richer training corpus that includes synthetic tasks.

Finally, we observe that the near-perfect performance of trained transformers on the node count, edge count, and edge existence is consistent with the representational easy of those tasks, as suggested by the existence of efficient \depthone{} transformer implementations.

\section{Conclusion}
This paper provides a comprehensive evaluation of transformer models' graph reasoning capabilities, shedding light on their effectiveness across diverse graph reasoning tasks. By introducing a novel representational hierarchy, the study distinguishes between retrieval, parallelizable, and search reasoning tasks and offers insights into the performance of transformers at varying levels of granularity.
The empirical investigation reveals that transformers exhibit strong performance in graph-based reasoning problems, often matching or surpassing specialized graph models. 
Furthermore, the study highlights transformers' exceptional ability to capture global graph patterns effectively, showcasing their capability in understanding long-range dependencies, a critical factor in solving tasks involving global graph structures. 
Overall, this work crystallizes precise representational trade-offs that reflect the fundamental reasoning capabilities of transformers and demonstrates that the tasks used to quantify those capabilities are indeed learnable in a sample-efficient and parameter-efficient manner.


While the hierarchy introduced by this work effectively separates graph algorithmic tasks into distinct equivalence classes with significant implications for their computability by transformers, several questions remain for future research. 
We focused on graph reasoning tasks due to their relevance to the broader context of transformers, GNNs, and parallel algorithms. 
However, the complexity classes presented here could potentially be extended to a wider range of algorithmic problems.
While our assumption of unbounded-size MLPs provides strong lower bounds, further research into whether parallelizable tasks can be represented by transformers with bounded-size MLP units would complement this existing work. 
Broader experimental results that empirically evaluate the scaling laws would more directly assess the relevance of representational theoretical results to learnability.

\bibliography{bib.bib}
\bibliographystyle{plainnat}

\appendix

\section{Theoretical preliminaries}\label{sec:prelims}

For some vector $v \in \R^N$, let \[\sm(v) = \frac{1}{\sum_{i=1}^N \exp(v_i)} \paren{\exp(v_1), \dots, \exp(v_N)} \in \R^N.\]
Let $e_i \in \R^m$ denote the $i$th elementary vector, $\vec1 = (1, \dots, 1) \in \R^m$ and $\vec0 = (0, \dots, 0) \in \R^m$.
Let $[n] = \{1, \dots, n\}$.

\subsection{Transformer models}\label{assec:prelim-trans}

We use a similar theoretical definition of a multi-layer transformer model to the one employed by \cite{sht24}.
We use a \textit{bounded communication} theoretical model of the standard bidirectional transformer of \cite{vsp17}, which assumes that the principal limitation of the transformer's representational power is the low rank of the self-attention matrix.
Several theoretical assumptions are necessary to model transformers in this regime: 
\begin{itemize}
    \item the interleaved element-wise multi-layer perceptron (MLP) units compute arbitrary functions;
    \item the embedding dimension $m$ and number of heads per layer $H$ are much smaller than the sequence length $N$; and
    \item all model parameters, inputs, and intermediate products can be represented with $O(\log N)$-bit numbers.
\end{itemize}
These assumptions are justified in greater detail by \cite{sht24}.

Furthermore, we also model transformers as having $N'$ additional ``pause token'' inputs, which are embeddings that contain no information relevant information about the input instance, but which provide a longer ``work tape'' that can be utilized for computational purposes \cite[see e.g.][]{goyal2023think, pfau2024let}.
While structurally similar to chain-of-thought reasoning \cite{wei2022chain}, these pause tokens are not determined by multiple auto-regressive passes over the transformer and do not encode useful information as input.
Such pause tokens exist both in the theoretical results of \Cref{sec:hierarchy} (as blank tokens at the end of the sequence) and in the empirical results of \Cref{sec:transformers-gnns} (as placeholders between the graph representation and the task query for graphs that can be represented in fewer tokens than the maximum sequence length).

\begin{definition}
Let $\trans{m, H, L}{N, N'}$ denote the family of all bidirectional \textit{transformers} with embedding dimension $m$, number of heads $H$, and depth $L$, which operate on input sequences on length $N$ with $N'$ blank pause tokens appended to the end\footnote{If $N'=0$, we denote the family as $\trans{m, H, L}{N}$.}.
Any transformer $f \in \trans{m, H, L}{N, N'}$ is a function of the form $f: \R^{N \times d} \to \R^N$ for some input dimension $d$, which parameterized by query, key, and value matrices \[Q_{\ell, h}, K_{\ell, h}, V_{\ell, h} \in \R^{m \times m}, \text{ for $\ell \in [L], \ h \in [H]$},\]
and arbitrary element-wise MLPs with positional embeddings $\phi^0, \dots, \phi^L$ and $\psi$ with 
\begin{align*}
    \phi^0: &\R^d \times \N \to \R^m, \\
    \phi^\ell: &\R^m \times \N \to \R^m, \ \text{for $\ell\in\set{1, \dots, L}$}, \\
    \psi: &\R^m \times \N \to \R,
\end{align*}
where for sequence $Y$ of length $N + N'$, \[\phi^\ell(Y) = (\phi^\ell(y_1, 1), \dots, \phi^\ell(y_{N+N'}, N+ N')).\]
To evaluate $f(X)$ on some input sequence $X = (x_1, \dots, x_N) \in \R^{N \times d}$, we define intermediate embeddings $Y^0, \dots, Y^L \in \R^{(N+N') \times m}$ as follows:
\begin{itemize}
    \item The initial embedding is computed by applying the first MLP $\phi^0$ to all elements of the input $X$, along with $N'$ ``blank'' inputs: \[Y^0 = \phi^0(X) = (\phi^0(x_1, 1), \dots, \phi^0(x_N, N), \phi^0(0, N+1), \dots, \phi^0(0, N + N')).\]
    \item The intermediate embeddings $Y^{\ell}$ for $\ell \in \set{1, \dots, L}$ are computed by applying a unit of \textit{multi-headed attention} (parameterized by $Q_{\ell, h}, K_{\ell, h}, V_{\ell, h}$ for all $h \in [H]$) to the previous embedding $Y^{\ell-1}$, followed by an element-wise application of the MLP $\phi^\ell$.
    That is,
    \begin{align*}
        Y^\ell 
        &= \phi^\ell\paren{\sum_{h=1}^H \sm\paren{\phi(Y^{\ell-1}) Q_h K_h^\T \phi(Y^{\ell-1})^\T} \phi(Y^{\ell-1}) V_h}.
    \end{align*}
    \item The output $f(X) \in \R^N$ is computed to be the first $N$ outputs of $\psi(Y^L)$. That is,
    \[f(X) = (\psi(Y^L_1, 1), \dots, \psi(Y^L_N, N)).\]
\end{itemize}
\end{definition}

While our theoretical results pertain to the bidirectional regime (where no masking is applied to the self-attention layer), all results in \Cref{asec:1L} and all lower bounds in \Cref{asec:transformer-mpc} apply to autoregressive transformers as well.
Our empirical results utilize causal masking.

Furthermore, an $L$-layer bidirectional transformer that operates on sequences of length $N$ with $N'$ pause tokens can be simulated by and $L$-layer autoregressive transformer with $O(L \cdot (N + N'))$ pause tokens.
In the regime where $N' = \poly(N)$ and $L = O(\log N)$, any positive results for bidirectional models likewise apply to autoregressive models.



\subsection{Graph reasoning tasks}\label{ssec:appendix-graphreasoning}

We exclusively consider graph reasoning tasks that apply to undirected and unweighted graphs $G = (V, E)$ of bounded size, i.e. $|V| + |E| = O(N)$ for some size parameter $N$.
We define the tasks used for experimentation as follows:
\begin{itemize}
    \item \textbf{Node count}: Given graph $G$, compute $|V|$.
    \item \textbf{Edge count}: Given graph $G$, compute $|E|$.
    \item \textbf{Edge existence}: Given graph $G$ and vertices $u, v \in V$, return $\indicator{(u, v) \in E}$.
    \item \textbf{Node degree}: Given graph $G$ and vertex $u \in V$, return $\deg(u)= \abs{(u, v) \in E}$.
    \item \textbf{Connectivity}: Given graph $G$ and vertices $u, v \in V$, return 1 if there exists a path of edges between $u$ and $v$ and 0 otherwise.
    \item \textbf{Cycle check}: Given graph $G$, return 1 if there exists a cycle of any length $\geq 3$ and 0 otherwise.
    \item \textbf{Shortest path}: Given graph $G$ and vertices $u, v \in V$, return the smallest number of edges that forms a path between $u$ and $v$ if one exists and $-1$ otherwise.
    \item \textbf{Triangle counting}: Given graph $G$, return the number of triangles in the graph: $\abs{\set{(u, v, w): (u,v), (v, w), (u, w) \in E}}$.
\end{itemize}

These tasks are foundational to computer science and are solved by famous polynomial-time algorithms, such as Dijkstra's algorithm for shortest path and Kruskal's for spanning trees.
Several of these tasks reap significant algorithmic benefits from parallel computing models, including graph connectivity, which can be evaluated in a logarithmic number of rounds by employing an ``iterative merging'' strategy~\cite{asswz18}.  

The amenability of some of these tasks to parallel computing is the core principle underlying our theoretical results on transformer model capacity and the resulting contrasts with GNNs.
For instance, the iterative merging algorithm for connectivity can be simulated by a logarithmic-depth transformer, but message-passing GNNs cannot do without a substantial width scaling (see \Cref{asec:gnn_limitations}).
Furthermore, this parallel computing algorithm is widely believed to be optimal, and a crystallization of this as \Cref{conj:1vs2} is the bedrock of our transformer optimality conjectures.



The multi-layer results of \Cref{asec:transformer-mpc} are representation-agnostic; the bounds apply to any fixed encoding of vertices and edges as a transformer input sequence $X \in \R^{N \times d}$.
The input representation for \Cref{asec:1L} must satisfy the more specified \textit{node/edge encoding scheme}, which represents each vertex and edge as a single embedding, followed by a query embedding (with optional blank embeddings as needed).
This encoding scheme is reflected by \Cref{fig:transformer_architecture} and closely resembles the graph instance encoding used for the experimental results\footnote{The empirical encoding scheme uses two tokens---not one---to encode each edge and has fixed size $N$. For all graphs whose encodings do not require all $N$ tokens, the remaining tokens are blank placeholders that precede the query tokens.}.

Throughout the paper, we partition these and other graph reasoning tasks into three representational categories: retrieval, parallelizable, and search tasks.
The retrieval category contains \textit{only} four of the tasks used for experiments:
\begin{itemize}
    \item \textbf{Retrieval tasks} include node count, edge count, edge existence, and node degree.
\end{itemize}
On the other hand, the other categories reflect two equivalence classes introduce by \cite{ns22}:
\begin{itemize}
    \item \textbf{Parallelizable tasks} are defined to be $\L$-complete and equivalent to connectivity in $O(1)$ rounds of MPC. 
    These tasks include connectivity, cycle check, planarity testing, minimum cut, bipartiteness testing, minimum spanning forest, connected components, one-cycle versus two-cycles (see \Cref{conj:1vs2}), and \# connected components.
    \item \textbf{Search tasks} are defined to be $\NL$-complete and equivalent to shortest path in $O(1)$ rounds of MPC.
    These tasks include shortest path, strong connectivity (for directed graphs), all-pairs shortest path, median, diameter, radius, directed cycle detection, and $st$-reachability.
\end{itemize}
Triangle counting does not appear in any of these classes and is separately analyzed in \Cref{ssec:triangle}.


\section{Multi-layer transformers and parallelizable/search graph reasoning tasks}\label{asec:transformer-mpc}

In this appendix, we formalize and prove all results in \Cref{ssec:med-tasks,ssec:hard-tasks} that pertain to the parallelizable and search categories of graph reasoning tasks.

The primary technical tool used to establish these results is an improved relationship between the MPC model of distributed communication of~\cite{ksv10} and our theoretical model of a transformers (\Cref{assec:prelim-trans}). 
This result is presented informally as \Cref{thm:tight-mpc-informal}, and formally as \Cref{thm:tight-mpc}.
Because graph reasoning tasks are well studied in the MPC model of computation \citep[e.g.][]{asswz18,ns22}, this theorem enables the transfer of those positive results to transformer architectures.
Similarly, negative results that pertain to the MPC model, including those conditioned on the well-known \textit{one-cycle versus two-cycle conjecture} (\Cref{conj:1vs2}), imply negative results for transformers.

\begin{restatable}
[Formal version of \Cref{thm:tight-mpc-informal}; transformers simulate MPC]
{theorem}{thmtightmpc}\label{thm:tight-mpc}
For constants $0 < \delta < \delta' < 1$ and $\gamma > 0$, an $R$-round deterministic $(\gamma, \delta)$-MPC protocol with $n$ inputs can be simulated by a transformer $f \in \trans{m, H, L}{N, N'}$ with depth $L = O(R)$, single heads $H = 1$, embedding dimension $m = O(n^{\delta'})$, context length $N = n$, and blank chain-of-thought tokens $N' = \max(0, O(n^{1 + \gamma - \delta}) - n)$.
\end{restatable}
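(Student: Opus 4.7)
The plan is to simulate each MPC machine with one transformer token. A $(\gamma,\delta)$-MPC protocol uses $p = O(n^{1+\gamma-\delta})$ machines, so I would pad the input of length $n$ with $N' = \max(0, p - n)$ blank pause tokens to match the machine count. Each token's embedding of dimension $m = O(n^{\delta'})$ holds $\Theta(n^{\delta'} \log n)$ bits of information (at $O(\log n)$ bits per coordinate), which strictly exceeds the $s = O(n^\delta)$ bits of local MPC memory since $\delta' > \delta$; the surplus will store routing metadata and scratch space. The local computation performed by each MPC machine per round is absorbed into the unrestricted element-wise MLP units, one MLP per simulated round.

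The core difficulty is simulating the communication phase of a single MPC round by a constant number of single-head attention layers. Each round moves at most $s$ bits in and out of every machine, so the communication pattern is a bipartite multigraph on senders and receivers with both in- and out-degree bounded by $s$. Because $m \log n \gg s$, each machine's entire outgoing payload for the round can be packed into one embedding vector together with destination identifiers (each occupying $O(\log p) = O(\log n)$ bits). The task then reduces to a routing primitive in which every receiver token gathers its payloads from the correct sender tokens in parallel.

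To implement the primitive with single-head attention, I would invoke an edge-coloring decomposition (K\"onig's theorem): a bipartite multigraph of maximum degree $\Delta$ decomposes into $\Delta$ matchings, and, when each matching can transport $\Theta(m \log n)$ bits, the number of matchings needed collapses to a constant independent of $n$ thanks to the $n^{\delta'-\delta}$ factor of headroom. Each matching is then realized by a single attention layer in which a receiver's query encodes the identifier of its partnered sender, the sender's key encodes its own identifier, and value vectors carry the payload. Scaling the query/key matrices by a large enough constant concentrates the softmax on the matched source up to $o(1)$ error within the $O(\log n)$-bit precision regime. Stacking $O(1)$ such attention-plus-MLP blocks per MPC round yields $L = O(R)$ depth overall.

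The main obstacle I anticipate is ensuring that single-head softmax attention, which is inherently a weighted average rather than a hard lookup, cleanly separates matched sender/receiver pairs from unmatched ones across many tokens simultaneously. The matching decomposition must guarantee that in every sub-round every receiver competes for exactly one sender, and the key/query dot products must exceed non-matching pairs by $\Omega(\log n)$ to suppress softmax leakage within bounded-precision arithmetic; the slack $\delta' > \delta$, rather than equality, is precisely what finances this orthogonal-routing overhead. Once these three ingredients---MLP-based local simulation, matching decomposition of the communication graph, and single-layer matched routing via scaled attention---are assembled, the claimed parameter bounds follow.
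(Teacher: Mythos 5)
Your overall architecture---one token per machine, local computation in the MLPs, communication via attention---matches the paper's. The genuine gap is in the routing step. The bipartite communication graph of a single MPC round has maximum degree up to $s = O(n^\delta)$: a machine with $s$ words of outbox can address $\Theta(s)$ distinct destinations. K\"onig's theorem therefore yields $\Theta(n^\delta)$ matchings, and since each token can participate in only one edge per matching, each matching costs its own attention layer. Your claim that the matching count ``collapses to a constant'' because each matching can carry $\Theta(m\log n)$ bits conflates bits-per-edge with number-of-edges-per-node; the headroom $n^{\delta'-\delta}$ does not reduce the degree. So your plan as written gives depth $O(R\, n^\delta)$, not $O(R)$. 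Batching several matchings into one layer also fails: attending to $k$ senders simultaneously through single-head softmax forces a weighted average, and (as in Lemma~3.2 of \cite{sht24}) disentangling those $k$ payloads afterward costs an embedding blowup polynomial in $k$, so the depth$\times$width product never escapes the $n^\delta$ barrier by direct routing alone.

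The paper's actual mechanism is multi-hop routing, not matching decomposition. It introduces a restricted model, $(\gamma,\delta,\rho)$-MPC, where each machine may exchange messages with at most $k=O(n^\rho)$ others per round; this bounded fan-out is what \Cref{cor:mpc-sht24} can simulate with a \emph{single} attention layer of width $m = O(n^{\delta+4\rho}\log n)$, which fits inside $O(n^{\delta'})$ when $\rho < (\delta'-\delta)/4$. The heart of the argument is \Cref{prop:mpc-sim}: one round of unrestricted $(\gamma,\delta)$-MPC is simulated by $O((1+\gamma)^2/\rho^2)$ rounds of $(\gamma,\delta,\rho)$-MPC by routing every packet through a tree of geometrically shrinking ``neighborhoods,'' each hop sending it to a machine closer to its final destination while keeping per-machine fan-out at $O(n^\rho)$ and local memory at $O(s)$. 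The slack $\delta'>\delta$ finances the tree fan-out and the constant-factor memory overhead during routing, not a reduction in the number of matchings. Without some version of this indirection, your proof cannot reach $L=O(R)$ across the full range $0<\delta<\delta'<1$; a direct matching-per-layer approach recovers, at best, the earlier and weaker Theorem~3.1 of \cite{sht24}, which only applies for $\delta < 1/4$.
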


We formally introduce the MPC distributed computing model in \Cref{assec:mpc-prelim}, along with the one-cycle versus two-cycle conjecture and a collection of positive results from \cite{ns22}.
In \Cref{ssec:log-depth}, we formally present the task-specific graph reasoning results introduced in \Cref{ssec:med-tasks,ssec:hard-tasks} and prove them as implications of \Cref{thm:tight-mpc}. 
We finally contextualize and prove \Cref{thm:tight-mpc} in \Cref{assec:tight-mpc} by modifying the proof strategy of a similar result \cite{sht24} and by showing that MPC protocols can be simulated by a weaker model of distributed computation.


\subsection{MPC preliminaries}\label{assec:mpc-prelim}

The massively parallel computation (MPC) model of~\cite{ksv10} formalizes distributed computing frameworks such as MapReduce~\cite{dg04} as theoretical models that are amenable to rigorous analysis.
MPC pertains to a regime where an input that consists of $n$ ``words'' is distributed across a very large number of machines $q$ (e.g. $q \approx n^{0.95}$), each of which contains a bounded local memory $s$ (e.g. $s \approx n^{0.1}$) where computation on individual machines is inexpensive but communication between machines is costly.
We use the definition of MPC of \cite{asswz18}, which quantifies the complexity of a protocol by the local memory $s = O(n^\delta)$, the global memory $sq = O(n^{1 + \gamma})$, and the number of communication rounds $R$.

\begin{definition}\label{def:mpc}
For global and local memory constants $\gamma, \delta > 0$ and input size $n$, an \emph{$R$-round $(\gamma, \delta)$-MPC protocol} specifies a distributed computing protocol over $q = \Theta(n^{1 + \gamma -\delta})$ machines, each of which contains a local memory of size $s = O(n^\delta)$. 
\begin{itemize}[nosep]
    \item An input to the protocol, which is encoded as a length-$n$ sequence of $p = \Theta(\log n)$-bit words, is distributed across the local memories of the first $\ceil{n / s}$ machines.
    \item In each of the $R$ rounds, each machine computes an arbitrary function its local memory at the time, which specifies at most $s$ words to be transmitted to other machines.
    \item Messages are simultaneously transmitted (subject to the constraint that each machine sends and receives at most $s$ words of information), and each machine's new local memory is set equal to the messages received.
    \item After the final round, the output of the protocol is a concatenation of the local memories of the first $\ceil{n / s}$ machines.
\end{itemize}
We say that an MPC protocol computes some function $f: \Z_{2^p}^n \to \Z_{2^p}^n$ if for any $X \in \Z_{2^p}^n$ encoded as input to the protocol, the output of the protocol is $f(X)$.
\end{definition}





\subsubsection{MPC and graph reasoning tasks}\label{asssec:mpc-graph}

In this section, we introduce previously known results about the abilities of Massively Parallel Computation protocols to solve graph reasoning tasks.
The novel results presented in the subsequent section are the primarily the immediate implications of these results and \Cref{thm:tight-mpc}.

\paragraph*{MPC round equivalence for parallelizable and search tasks}

These results largely reflect the contributions of \cite{ns22}, which establish a hierarchy of graph reasoning tasks that is reflected in our distinction between parallelizable and search tasks.
They show that all parallelizable tasks are equivalent to connectivity in $O(1)$ rounds of MPC and that all search tasks are likewise equivalent to shortest path. (See their Figure~1.)
Concretely, they prove the following for all graphs $G = (V, E)$ with $|V| + |E| = O(n)$.

\begin{theorem}[Theorem 4 of \cite{ns22}; round-equivalence of parallelizable tasks]\label{thm:ns22-med}
If there exists an $R$-round $(\gamma, \delta)$-MPC protocol for any parallelizable task---including graph connectivity, $st$-connectivity, cycle detection, formula evaluation, planarity testing, graph bipartiteness, connected components, minimum spanning forest, and minimum cut, one-cycle versus two-cycle testing---for some $\gamma > 0$ and $\delta \in (0, 1)$, then there exists an $R'$-round $(\gamma', \delta)$-MPC protocol for any other other parallelizable task where $R' = R + O(1)$ and $\gamma' = \max(\gamma, 3)$.
\end{theorem}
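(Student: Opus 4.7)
The plan is to establish $O(1)$-round MPC reductions between graph connectivity and each task in the parallelizable family, organized in a hub-and-spoke structure with connectivity (equivalently, connected components labeling) at the center. Since round-equivalence is transitive, it suffices to show each listed task reduces to connectivity in $O(1)$ rounds and vice versa, paying only a constant additive overhead in rounds and allowing the global memory budget to grow to $O(n^3)$ (which is what the $\gamma' = \max(\gamma, 3)$ bookkeeping accommodates).

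First I would establish the interchangeability of graph connectivity, $st$-connectivity, and connected components labeling in $O(1)$ MPC rounds. Given a connected components labeling, $st$-connectivity reduces to a single lookup after routing the labels of $s$ and $t$ to a designated machine; the reverse direction assigns each component a canonical representative (e.g., the minimum vertex ID) that can be computed via one round of sorting and aggregation enabled by the $O(n^\delta)$ local memory and the expanded global memory budget.

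Next I would handle each remaining task via short combinatorial reductions: minimum spanning forest follows from connectivity via a single Bor\r{u}vka-style round of edge merging using the cubic global memory; cycle detection reduces to the arithmetic identity that a graph is acyclic iff $|E| = |V| - \#\text{CC}$, a single comparison after component labeling; bipartiteness reduces to connectivity on the bipartite double cover, where each vertex $v$ is duplicated to $v_0, v_1$ and each edge $(u,v)$ becomes $(u_0, v_1)$ and $(u_1, v_0)$, with the original graph being bipartite iff $v_0$ and $v_1$ lie in different components for every $v$; one-cycle versus two-cycle testing follows directly from counting components and checking their sizes; and minimum cut reduces to enumerating small candidate cuts in parallel and verifying each via a connectivity query on the edge-deleted graph.

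The main obstacle will be planarity testing and formula evaluation, which lack direct combinatorial reductions to connectivity. Since all tasks in the family are $\L$-complete, log-space reductions between them exist, but the challenge is compiling a log-space reduction into an $O(1)$-round MPC protocol rather than the $O(\log n)$ rounds one gets naively. This is where the $\gamma' \geq 3$ slack becomes essential: the $O(n^3)$ global memory permits materializing the full configuration graph of a log-space Turing machine computing the reduction, and resolving reachability in that auxiliary graph via a single invocation of the connectivity oracle in a constant number of rounds. Verifying that this flattening respects both the local memory bound $s = O(n^\delta)$ and the total round count is the most technically demanding step, and is the place where I would expect most of the detailed work in the ns22 argument to be concentrated.
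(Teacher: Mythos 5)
This theorem is imported verbatim as Theorem~4 of \citet{ns22} and the paper under review provides no proof of it; there is therefore no ``paper's own proof'' to compare your sketch against. What the paper does is \emph{use} this equivalence as a black box (together with \Cref{thm:cc22-connectivity} and \Cref{thm:tight-mpc}) to derive \Cref{thm:parallelizable-pos-formal} and \Cref{thm:graph-log-lb}.

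Judging your sketch on its own merits: the hub-and-spoke structure with connectivity at the center and the explanation of why $\gamma'=\max(\gamma,3)$ arises (parallelizing polynomially many oracle calls, hence cubic global memory) are both on the right track, and your reductions for cycle detection (acyclicity $\iff |E| = |V| - \#\mathrm{CC}$), bipartiteness (bipartite double cover), and one-cycle-vs-two-cycles are correct. However there are concrete errors in the harder cases. The claim that minimum spanning forest ``follows from connectivity via a single Bor\r{u}vka-style round'' is wrong: a single Bor\r{u}vka round only halves the number of components, and $\Theta(\log n)$ such rounds are needed to build the full MSF. The $O(1)$-round reduction actually used is the edge-by-edge characterization: assuming distinct weights, $e\in\mathrm{MSF}$ iff $e$'s endpoints are disconnected in the subgraph of strictly lighter edges, so one runs $|E|$ connectivity instances in parallel; this is precisely what burns the extra global memory. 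Your minimum-cut step (``enumerating small candidate cuts in parallel and verifying each via a connectivity query'') does not work as stated because there is no polynomial bound on the number of candidate cuts without invoking additional structure (e.g., a Karger-style sparsification or a known near-linear cut algorithm). Finally, you are right to flag formula evaluation and planarity as the hard cases, but compiling an arbitrary $\L$-completeness reduction via the Turing-machine configuration graph does not automatically stay within $O(1)$ MPC rounds: the configuration graph has $\mathrm{poly}(n)$ vertices and building it in constant rounds under the $O(n^\delta)$ local-memory constraint is itself the thing that needs to be argued, so this is circular as written. The NS22 argument instead gives tailored constant-round reductions for these problems rather than a generic logspace-to-MPC compiler.
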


\begin{theorem}[Theorem 5 of \cite{ns22}; round-equivalence of search tasks]\label{thm:ns22-hard}
If there exists an $R$-round $(\gamma, \delta)$-MPC protocol for any search task---including $st$-reachability (for directed graphs), strong connectivitity, directed cycle detection, shortest path, all-pairs shortest path, diameter, radius, and median---for some $\gamma > 0$ and $\delta \in (0, 1)$, then there exists an $R'$-round $(\gamma', \delta)$-MPC protocol for any other other search task where $R' = R + O(1)$ and $\gamma' = \max(\gamma, 2)$.
\end{theorem}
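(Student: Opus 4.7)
The plan is to establish the claimed pairwise $O(1)$-round MPC equivalence by designating $st$-reachability as a canonical hub task and producing, for every other search task $T$ on the list, a pair of $O(1)$-round reductions between $T$ and $st$-reachability. Composition through the hub then yields an $(R + O(1))$-round protocol for any target task $B$ from any $R$-round protocol for a source task $A$. The crucial resource is the enlarged global-memory budget $\gamma' = \max(\gamma, 2)$, which provides $\Omega(n^3)$ global words; this enables running up to $n^2$ parallel copies of the oracle on modified inputs without exceeding the memory budget, while keeping per-machine local memory at $O(n^\delta)$.

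\textbf{Reductions in both directions.} For the direction from $T$ to $st$-reachability, each task encodes reachability via a problem-specific gadget constructible in one MPC sort/scatter round: shortest path and all-pairs shortest path test finiteness of the returned distance; strong connectivity and directed cycle detection use modifications such as adding a back-edge $(t, s)$ so that $s$ reaches $t$ in $G$ iff the modified graph contains a directed cycle through a designated vertex; and diameter, radius, and median use more elaborate gadgets that append a long chain after $t$ and a universal hub over all other pairs, forcing $\mathrm{dist}(s, t)$ to appear as the unique extremal value. For the reverse direction from $st$-reachability to $T$, parallel instantiation of the oracle is the workhorse: (i) shortest path follows from the layered-graph construction with $n{+}1$ distance layers combined with $n$ parallel reachability queries (one per candidate distance), followed by a minimum-finding aggregation; (ii) all-pairs shortest path follows by $n$ parallel single-source shortest-path instances; (iii) diameter, radius, and median follow by one aggregation round on the APSP matrix; and (iv) strong connectivity and directed cycle detection follow from $n$ parallel reachability queries on $G$ and on $G^{\mathrm{rev}}$.

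\textbf{Main obstacle.} The central bottleneck is the reduction from $st$-reachability to shortest path, which seeds the parallel pipeline used by all other hard-direction reductions. The layered graph has $\Theta(n^2)$ vertices, so running $n$ parallel reachability instances on it consumes $\Theta(n^3)$ global words, exactly saturating the budget $n^{1 + \gamma'} = n^3$ and demanding careful accounting. I expect the trickiest checks to be that (a) constructing and distributing the $n$ layered-graph copies across machines takes $O(1)$ rounds via standard MPC sort/scatter primitives (Goodrich-style sort in $O(1/\delta)$ rounds when local memory is $n^\delta$); (b) every machine's local memory stays within $O(n^\delta)$ throughout; and (c) the final minimum aggregation over the $n$ reachability answers terminates in $O(1)$ rounds via a tournament that combines $\Theta(n^\delta)$ answers per machine per round. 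Once this template is established, the remaining reductions follow by the same parallel-instantiation scheme combined with small graph gadgets.
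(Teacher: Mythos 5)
The statement \Cref{thm:ns22-hard} is not proved in the paper: it is stated verbatim as Theorem~5 of \cite{ns22} and invoked as an external black box, so there is no proof in the paper to compare against. I therefore evaluate your sketch on its own terms.

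Your high-level structure (fix a hub problem, give $O(1)$-round MPC reductions between every listed task and the hub in both directions, compose) is the standard scaffolding and is plausibly the same skeleton as in \cite{ns22}. The "easy" directions you list — each task encodes $st$-reachability via a constant-factor-size gadget such as finiteness of a returned distance or addition of a back-edge — preserve the input size up to a constant factor and hence genuinely preserve both $\delta$ and $\gamma$ under an $O(1)$-round sort/scatter. That part is fine.

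The gap is in the reverse direction, and it concerns parameter bookkeeping rather than the graph-theoretic gadgets. Two separate problems arise.

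First, the layered graph used to recover shortest path from a reachability oracle has $\Theta(n^2)$ vertices and edges, so the input fed to the reachability protocol has size $n^* = \Theta(n^2)$, not $n$. A $(\gamma, \delta)$-MPC protocol invoked on inputs of size $n^*$ uses local memory $O\bigl((n^*)^\delta\bigr) = O(n^{2\delta})$ per machine and global memory $O\bigl((n^*)^{1+\gamma}\bigr) = O(n^{2+2\gamma})$. The theorem you are trying to prove requires the derived protocol to be $(\gamma', \delta)$ \emph{with the same} $\delta$ relative to the original input size $n$, which is violated. The global budget $O(n^{1+\gamma'}) = O(n^3)$ is also exceeded as soon as $\gamma > 1/2$. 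You flag the $n^3$ budget as "exactly saturated," but the arithmetic only works out if the oracle itself uses only $O(\text{input size})$ words, which is the $\gamma = 0$ case.

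Second, the same undercount appears in your "$n^2$ parallel copies of the oracle" argument for the APSP/diameter/radius/median reductions. Running $k$ independent copies of a $(\gamma, \delta)$-MPC protocol on inputs of size $n$ costs $\Theta(k \cdot n^{1+\gamma})$ words, so $n^2$ copies cost $\Theta(n^{3+\gamma})$, which exceeds $O(n^{1+\max(\gamma,2)})$ for every $\gamma > 0$. Charging each copy $O(n)$ is again the $\gamma = 0$ special case.

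Neither gap is fixable by rebalancing constants; what is missing is the actual mechanism in \cite{ns22} that keeps the oracle's input size at $O(n)$ and controls the footprint of the parallel invocations. As written, your sketch proves a weaker statement in which $\gamma'$ grows with $\gamma$ and the local-memory exponent changes, which is not the claimed theorem.
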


The equivalance of \Cref{thm:ns22-med} has more immediate implications for the round complexity of parallelizable tasks than \Cref{thm:ns22-hard} does for search tasks because the round complexity of graph connectivity is well-understood to be $O(\log n)$ and thought to be optimal.
We first present a deterministic MPC positive result for graph connectivity.

\begin{theorem}[Theorem 6.2 of \cite{cc22}; log-round connectivity MPC algorithm]\label{thm:cc22-connectivity}
For any $\gamma > 0$ and $\delta \in (0, 1)$, there exists a deterministic $O(\log n)$-round $(\gamma, \delta)$-MPC protocol that solves graph connectivity on any graph $G = (V, E)$ of size $|V| + |E| = O(n)$.
\end{theorem}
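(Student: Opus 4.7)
The plan is to realize a Boruvka-style connected-components algorithm in the MPC model, shrinking the number of active super-vertices by at least a factor of two in each epoch. Initially each vertex is its own super-vertex, and after $O(\log n)$ epochs every connected component is collapsed to a single super-vertex; at that point, answering whether $u$ and $v$ are connected reduces to comparing the two super-vertex labels stored in the vertex-to-root map. Each epoch consists of three phases: (i) for every super-vertex that still has an outgoing edge, select a canonical incident edge to a different super-vertex; (ii) merge super-vertices along the selected edges so that each connected fragment of the resulting ``mating graph'' collapses to a single new super-vertex; (iii) relabel the global edge list to use the new super-vertex IDs and discard self-loops. Since each merged fragment contains at least two old super-vertices, the super-vertex count at least halves per epoch, which caps the total number of epochs at $O(\log n)$.

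To show that each epoch costs only $O(1)$ MPC rounds under a $(\gamma, \delta)$ protocol with $\gamma > 0$ and $\delta \in (0, 1)$, I would invoke the standard $O(1)$-round MPC primitives---stable sorting, segmented prefix sums, broadcast, and key-based aggregation---all of which are available in this memory regime. Concretely, sorting the edge list lexicographically by endpoint labels groups all edges incident to each super-vertex on consecutive machines; an aggregation then picks one outgoing edge per super-vertex; contracting the induced mating graph is handled by a constant number of pointer-jumping rounds, which suffices whenever the mating graph is arranged as an oriented star forest of bounded depth; and relabeling the edge list back in terms of the new super-vertex IDs is another sort-and-join. Total global memory remains $O(n^{1+\gamma})$ throughout because each epoch manipulates data whose size is proportional to the number of surviving edges.

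The main obstacle is enforcing \emph{determinism}. Most previously known $O(\log n)$-round MPC connectivity algorithms use random hashing or random orientations both to guarantee that each mating fragment has constant contraction depth and to load-balance the contraction work across machines. Removing randomness while simultaneously preserving the $O(\log n)$ round count and the $O(n^\delta)$ local memory bound is the technical core of the construction. I would replace the random symmetry-breaking with deterministic symmetry breaking on unique vertex IDs (in the spirit of Cole--Vishkin style deterministic coloring) so that the canonical mating edges always form an oriented forest whose fragments can be contracted deterministically in $O(1)$ rounds, and verify that the resulting per-machine load never exceeds $O(n^\delta)$. Combining the deterministic mating procedure, the $O(1)$-round per-epoch primitives, and the halving analysis then yields the claimed $O(\log n)$-round deterministic $(\gamma, \delta)$-MPC protocol for graph connectivity on any graph of size $O(n)$.
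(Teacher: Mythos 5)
This theorem is quoted verbatim from Coy and Czumaj (cited as \cite{cc22}, Theorem~6.2) and is not proved in the paper at all---it is invoked as an external black box. Your job, then, was effectively to reconstruct (a piece of) the Coy--Czumaj argument, and the sketch as written has a genuine gap precisely at the step you flag as ``the technical core.''

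The Boruvka skeleton is fine, and the halving count of $O(\log n)$ epochs is correct. The problem is the contraction step inside each epoch. After each super-vertex deterministically selects, say, its minimum-ID outgoing neighbor, the resulting pseudoforest (after breaking $2$-cycles) is a rooted forest whose depth is \emph{not} bounded by a constant: already on a path graph the hook forest is a single chain of linear depth. Fully contracting a depth-$d$ forest to its roots requires $\Theta(\log d)$ rounds of pointer jumping, so a literal ``contract fully, then move to the next Boruvka epoch'' implementation pays $\sum_i O(\log n_i) = O(\log^2 n)$ rounds, not $O(\log n)$. Cole--Vishkin-style coloring does not repair this: in MPC with $n^\delta$ local memory you can indeed compress the $O(\log^* n)$ CV iterations into $O(1)$ rounds, but the output is a $3$-coloring of the hook forest, not a constant-depth star forest, and one round of star contraction on a $3$-colored forest shrinks each tree by only a constant fraction, leaving the same $\log$-per-epoch accounting. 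The randomized algorithms avoid this by random leader election, which makes the hook forest have expected constant depth; deterministic symmetry breaking by IDs gives you determinism but not bounded depth, which is exactly why derandomizing $O(\log n)$-round MPC connectivity was open until Coy--Czumaj. Their actual proof does not run Boruvka-with-full-contraction plus CV; it either amortizes the shortcutting across epochs via a Shiloach--Vishkin-style potential argument (hook once, shortcut once per phase, $O(1)$ rounds per phase, $O(\log n)$ phases total, with a careful potential function that tolerates partially-contracted trees), or it deploys explicit pseudorandom objects to simulate the random leader election deterministically. Either route is substantially more than ``replace random symmetry-breaking with deterministic symmetry breaking on unique vertex IDs,'' so as it stands your argument establishes an $O(\log^2 n)$-round deterministic protocol, not the claimed $O(\log n)$.
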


Hence, all parallelizable tasks can be solved with a logarithmic number of MPC rounds with arbitrarily small polynomial local memory.

\begin{corollary}[Log-round parallelizable MPC algorithms]\label{cor:log-med}
For any $\gamma > 0$ and $\delta \in (0, 1)$ and any parallelizable task, there exists a deterministic $O(\log n)$-round $(\gamma', \delta)$-MPC protocol that solves the task on any graph $G = (V, E)$ of size $|V| + |E| = O(n)$ for $\gamma' = \max(\gamma, 3)$.
\end{corollary}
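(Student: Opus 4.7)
The plan is to derive this corollary directly by composing the two cited results, Theorem~\ref{thm:cc22-connectivity} and Theorem~\ref{thm:ns22-med}, since graph connectivity is itself a parallelizable task and sits as the ``base'' of the $O(1)$-round MPC equivalence class.

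First, I would invoke Theorem~\ref{thm:cc22-connectivity} with the given $\gamma > 0$ and $\delta \in (0,1)$ to obtain a deterministic $O(\log n)$-round $(\gamma, \delta)$-MPC protocol $\Pi_{\text{conn}}$ that solves graph connectivity on any graph $G = (V, E)$ with $|V| + |E| = O(n)$. Next, for an arbitrary parallelizable task $T$, I would apply Theorem~\ref{thm:ns22-med} to $\Pi_{\text{conn}}$, which yields a deterministic MPC protocol $\Pi_T$ for $T$ with round complexity $R' = O(\log n) + O(1) = O(\log n)$, local memory parameter still $\delta$, and global memory parameter $\gamma' = \max(\gamma, 3)$. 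Since $T$ was arbitrary, this gives exactly the statement of the corollary.

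There is essentially no technical obstacle: the work has already been done by the two theorems being cited, and the corollary is just the syntactic composition of their parameter transformations. The only minor point to verify is that Theorem~\ref{thm:ns22-med} indeed takes connectivity (a parallelizable task) as a valid ``source'' task for its equivalence --- which it does, since connectivity is explicitly listed among the parallelizable tasks in the statement of that theorem. The bookkeeping on the global memory parameter (taking $\max(\gamma, 3)$) and on the additive $O(1)$ rounds absorbed into the $O(\log n)$ bound are both immediate.
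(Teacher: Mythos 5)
Your proof is correct and is exactly the composition the paper intends: instantiate Theorem~\ref{thm:cc22-connectivity} to get an $O(\log n)$-round $(\gamma,\delta)$-MPC protocol for connectivity, then feed it into the round-equivalence of Theorem~\ref{thm:ns22-med} to transfer to an arbitrary parallelizable task with $R' = O(\log n) + O(1)$ rounds and $\gamma' = \max(\gamma, 3)$. The paper treats this corollary as immediate from those two theorems without writing out a separate proof, so your argument matches the intended route.
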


The optimality of these logarithmic-round protocols is suggested by a conjecture about the hardness of distinguishing between two graphs with $n$ vertices and $n$ edges, one of whose edges are arranged in a single cycle of length $n$ and the other in two disjoint cycles of length $\frac{n}2$.
This is the well-known ``one-cycle versus two-cycle conjecture,'' which is widely employed as a condition for distributed computing hardness \cite[see e.g.][]{bks17,rvw18,gku19}.

\begin{conjecture}[One-cycle versus two-cycle conjecture, see e.g. \cite{gku19}]\label{conj:1vs2}
For any $\gamma > 0$ and $\delta \in (0, 1)$, any $(\gamma, \delta)$-MPC protocol that distinguishes a single length-$n$ cycle from two disjoint length-$\frac{n}2$ cycles uses $R = \Omega(\log n)$ rounds.
\end{conjecture}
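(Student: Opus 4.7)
The plan is to pursue an information-theoretic round-elimination lower bound tailored to the MPC model, showing that any protocol whose round count is asymptotically below $\log n$ cannot move enough global structural information onto any single machine to tell one $n$-cycle from two $\frac{n}{2}$-cycles. The starting point would be to fix arbitrary input placement distributions for the two families (say, a uniformly random vertex labeling of either a single cycle $C_n$ or a disjoint union $C_{n/2} \sqcup C_{n/2}$) and to argue that any deterministic $(\gamma,\delta)$-MPC protocol inducing different outputs on these two distributions must produce views on some machine whose induced distributions have non-negligible total variation distance. Since the only global invariant that differs between the two instances is connectivity of the full vertex set, I would isolate what structural information about the input graph is actually forced into any single machine's local memory after $R$ rounds.

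The key technical device would be an ``information reach'' definition: for each machine $j$ and round $r$, let $S_{j,r} \subseteq [n]$ be the (random) set of input edge tokens on which the contents of machine $j$ at the end of round $r$ can causally depend, and track its evolution. In round $r$, $S_{j,r}$ is the union of $S_{k,r-1}$ over all machines $k$ whose at-most-$s = O(n^{\delta})$ outgoing words reach $j$. Since each machine can receive at most $s$ words, a charging argument would control $\mathbb{E}|S_{j,r}|$ as a recurrence of roughly the form $|S_{j,r}| \leq s \cdot |S_{k^\star,r-1}|$ for the ``heaviest'' incoming source $k^\star$, giving an exponential-in-$R$ envelope. The crucial step is then to show that even after such growth, the induced subgraph of the true input graph on $S_{j,R}$ is, with high probability, a \emph{union of short paths} under both distributions, because a random labeling of either family exposes no cycle of length $o(n)$ within any set of size $n^{o(1)}$. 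Consequently the restriction of the edge set to $S_{j,R}$ has the same marginal law under $D_1$ and $D_2$, and hence the machine's state does too.

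From here the lower bound follows by a standard symmetrization / hybrid argument across the output machines: if no machine can output a bit whose distribution differs non-negligibly between $D_1$ and $D_2$, then the protocol cannot distinguish them with constant advantage, let alone deterministically. To convert the ``heaviest incoming source'' recursion into the claimed $R = \Omega(\log n)$ bound, one should show that for $R = c \log n$ with a sufficiently small constant $c = c(\gamma,\delta)$, the supports $S_{j,R}$ remain of size $n^{o(1)}$, which is the regime where local views are cycle-free with overwhelming probability. Extending the argument to randomized protocols would require combining this with a Yao-type minimax reduction over the two product distributions, and strengthening it to account for adaptive communication patterns would need a careful definition of causal dependence that survives the adversary's choice of routing.

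The main obstacle, and the reason this remains only a conjecture in the literature, is precisely controlling the ``heaviest source'' recurrence when the adversary is allowed arbitrary routing: nothing in the MPC model prevents one machine from concentrating information from $s$ genuinely distinct prior machines each round, so naive reach bounds give only $|S_{j,R}| \leq s^R$, which is vacuous once $R \delta \geq 1$. Turning this into a true $\log n$ lower bound requires an additional ingredient -- such as a graph-theoretic ``bottleneck'' argument showing that even if raw token counts grow, the induced subgraph on $S_{j,R}$ inherits a tree-like or path-like structure that cannot encode the parity-of-components invariant distinguishing $D_1$ and $D_2$. Devising and proving such a structural invariant that is preserved under arbitrary MPC routing is, I expect, the genuinely hard step, and is what prevents the conjecture from being a theorem.
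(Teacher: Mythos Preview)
The statement you are attempting to prove is a \emph{conjecture}, not a theorem, and the paper does not prove it. The one-cycle versus two-cycle conjecture is a well-known open problem in the MPC literature; the paper invokes it only as a \emph{hypothesis} underpinning conditional lower bounds (e.g., the conditional optimality of logarithmic depth for parallelizable tasks). There is therefore no ``paper's own proof'' to compare your proposal against.

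You already recognize this in your final paragraph: the naive reach bound $|S_{j,R}| \le s^R$ is vacuous as soon as $R\delta \ge 1$, and no one has found the additional structural ingredient that would rule out a protocol cleverly concentrating the single relevant bit (parity of connected components) into one machine within $o(\log n)$ rounds. Your sketch is a reasonable description of why the problem is hard and of the kind of indistinguishability argument one would want, but it is not a proof, and indeed a proof would resolve a major open question. The appropriate treatment here is exactly what the paper does: state the conjecture, cite the literature, and derive downstream results conditionally.
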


Under this conjecture, \Cref{thm:ns22-med} immediately implies the optimality of \Cref{cor:log-med}.

\begin{corollary}[Optimality of log-round parallelizable MPC algorithms]\label{cor:log-med-opt}
Conditional on \Cref{conj:1vs2}, for any $\gamma > 0$ and $\delta \in (0, 1)$, any $(\gamma, \delta)$-MPC protocol that solves a parallelizable graph task on all graphs $G = (V, E)$ of size $|V| + |E| = O(n)$ uses $R = \Omega(\log n)$ rounds.
\end{corollary}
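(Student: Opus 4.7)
The plan is to derive the conclusion by a direct reduction from the one-cycle versus two-cycle problem, using the round-equivalence of parallelizable tasks provided by \Cref{thm:ns22-med}. The crucial observation, which makes the argument essentially a one-line composition, is that the one-cycle versus two-cycle problem is itself explicitly listed among the parallelizable tasks in the statement of \Cref{thm:ns22-med}, so it can legitimately serve as the \emph{target} of the equivalence.

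Concretely, I would argue by contradiction. Suppose for contradiction that some parallelizable task $T$ admits an $R$-round $(\gamma, \delta)$-MPC protocol with $R = o(\log n)$ for some $\gamma > 0$ and $\delta \in (0,1)$, correctly solving $T$ on all graphs of size $O(n)$. Applying \Cref{thm:ns22-med} with $T$ as the source and the one-cycle versus two-cycle task as the target yields an $R'$-round $(\gamma', \delta)$-MPC protocol for one-cycle versus two-cycle with $R' = R + O(1) = o(\log n)$ and $\gamma' = \max(\gamma, 3)$. Since $\gamma' > 0$ and $\delta \in (0,1)$, this falls squarely inside the quantifier range of \Cref{conj:1vs2}, which asserts that any such protocol requires $\Omega(\log n)$ rounds. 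The resulting contradiction forces $R = \Omega(\log n)$, establishing the claim.

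There is essentially no technical obstacle in this argument, and I would not expect to need any new machinery. The only bookkeeping items worth checking are that the local memory parameter $\delta$ is preserved verbatim by \Cref{thm:ns22-med} (which it is, as stated), and that the inflated global memory constant $\gamma' = \max(\gamma, 3)$ still lies in the range of global memory parameters quantified by the conjecture (which it does, since \Cref{conj:1vs2} is stated for \emph{every} $\gamma > 0$). The entire content of the proof is the composition of the parallelizable-to-one-versus-two reduction with the assumed hardness of the latter; no additional ideas beyond those already encoded in \Cref{thm:ns22-med} and \Cref{conj:1vs2} are required.
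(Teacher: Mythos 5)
Your proof is correct and matches the paper's intended argument exactly: the paper states that \Cref{cor:log-med-opt} follows immediately from \Cref{thm:ns22-med} together with \Cref{conj:1vs2}, and your reduction of the one-cycle versus two-cycle problem to an arbitrary parallelizable task via the $O(1)$-round equivalence is precisely that composition. The bookkeeping you check (preservation of $\delta$, and that $\gamma' = \max(\gamma,3)$ remains in the conjecture's quantifier range) is the right thing to verify and poses no obstacle.
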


The round complexity of search tasks is less understood, and it is unknown if search tasks can be solved by $O(\log n)$-round $(\gamma, \delta)$-MPC protocols if $\delta \in (0, \frac12]$.




\paragraph*{MPC protocols for problems with bounded circuit size}

More general MPC constructions are known for problems that solved by bounded-size boolean circuits, which include both parallelizable and search tasks. 
The well-known $\NC^i$ classes of Boolean circuits that take $N$ inputs and have $\poly(n)$ gates and depth $O(\log^i n)$ have been shown to be computable by bounded-round MapReduce-like computational protocols \cite{fw19} and by MPC protocols in particular \cite{ns22}.

\begin{theorem}[Theorem 1 of \cite{ns22}; log-round circuit MPC algorithms]\label{thm:nc-mpc-ns22}
For any problem in $\NC^{i+1}$ and any $\gamma > 0$ and $\delta \in (\frac12, 1)$, there exists a deterministic $O(\log^i n)$-round $(\gamma, \delta)$-MPC protocol that solves the problem.
\end{theorem}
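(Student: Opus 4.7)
The plan is to take the bounded-fan-in Boolean circuit $C$ witnessing membership of the given problem in $\NC^{i+1}$, of depth $d = O(\log^{i+1} n)$ and size $\poly(n)$, and to simulate it by an MPC protocol that advances $\Theta(\log n)$ consecutive layers of $C$ in each constant-round ``macro-step.'' Since $d/\log n = O(\log^i n)$, this yields the claimed $O(\log^i n)$-round bound.

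First I would normalize $C$ to be strictly layered with fan-in $2$ (at a constant-factor cost in depth and a polynomial cost in size). Partition its $d$ layers into $M = O(\log^i n)$ blocks of $k = c\log n$ consecutive layers each, where $c \in (0, \delta)$ is a small constant to be fixed later. Within one block, every top-of-block gate is the output of a sub-circuit on at most $2^k = n^c$ bottom-of-block gates; because $c < \delta$, the entire sub-circuit fits inside the $O(n^\delta)$ local memory of one MPC machine.

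The key lemma is that one block can be simulated in $O(1)$ MPC rounds. Assign each top-of-block gate $g$ to a responsible machine $M_g$ and have it (i) gather the $\leq n^c$ relevant bottom-of-block values, (ii) locally evaluate $g$'s sub-circuit, and (iii) scatter the result so it can be consumed at the bottom of the next block. Steps (i) and (iii) are classical multi-source routing/aggregation operations: encode the block's wiring as $(\text{source id}, \text{destination id})$ tuples, sort by destination, and route each value to where it is required. These primitives run in $O(1)$ MPC rounds whenever $\delta > 1/2$ (Goodrich-style sorting, Lenzen-style routing), which is exactly the regime allowed by the hypothesis $\delta \in (\frac{1}{2}, 1)$.

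The main obstacle I anticipate is the global-memory accounting: one must keep $c = \Theta(1)$ --- otherwise the $\log n$ savings per macro-step collapse --- while ensuring that the worst-case per-block communication volume $|C| \cdot n^c$ still fits into $O(n^{1+\gamma})$ and that each machine's share of top-of-block gates is at most $n^\delta$. Both can be met by preprocessing the circuit to have size $O(n^{1+\gamma/2})$ (using standard circuit balancing for $\NC$, or by subdividing a macro-step into a constant number of rounds that each handle an $n^\delta$-sized subset of the top gates) and then setting $c = \min(\delta/2, \gamma/2) > 0$. With these choices, the $M = O(\log^i n)$ blocks can be simulated sequentially, each in $O(1)$ MPC rounds, giving the claimed $O(\log^i n)$-round $(\gamma, \delta)$-MPC protocol.
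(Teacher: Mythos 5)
The paper does not prove this statement — it is quoted as Theorem~1 of \cite{ns22} and used as a black box — so I compare your argument to that source. Your block-decomposition strategy (partition the depth-$O(\log^{i+1} n)$ fan-in-2 circuit into $O(\log^{i} n)$ macro-blocks of $\Theta(\log n)$ layers, let a machine gather the $\leq n^{c}$ bottom-of-block values feeding each top-of-block gate, evaluate locally, then route the outputs to the next block via constant-round MPC sorting/routing) is in fact the construction that \cite{ns22} use. The key observation that a $c\log n$-layer sub-circuit has fan-in $\leq n^{c}$ and thus fits in a single machine's $O(n^{\delta})$ memory when $c < \delta$ is exactly right.

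Two supporting steps are shaky, however. First, the circuit-size management. You correctly note that the per-block traffic is on the order of $|C|\cdot n^{c}$ and must fit into global memory $O(n^{1+\gamma})$, but ``preprocessing the circuit to have size $O(n^{1+\gamma/2})$ using standard circuit balancing for $\NC$'' is not a valid step: Spira/Brent balancing trades size \emph{up} to bring depth down, not the reverse, and there is no general transformation that compresses an arbitrary $\poly(n)$-size $\NC^{i+1}$ circuit to near-linear size. (The actual Theorem~1 of \cite{ns22} is stated for circuit families of size $O(n^{1+\epsilon})$ matching the global memory; the restatement in this paper silently drops that size hypothesis.) Your fallback — "subdividing a macro-step into a constant number of rounds that each handle an $n^{\delta}$-sized subset of the top gates'' — also does not yield $O(1)$ rounds per macro-step once $|C|$ is polynomially larger than $n^{\delta}$, since the subsets would have to be processed sequentially to respect local memory. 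Second, you attribute the hypothesis $\delta > \frac12$ to Goodrich/Lenzen-style constant-round routing, but those primitives run in $O(1)$ MPC rounds for \emph{any} constant $\delta > 0$ (the round count is $O(\log_{n^{\delta}} n^{1+\gamma}) = O((1+\gamma)/\delta)$), and \cite{ns22}'s Theorem~1 places no $\delta > \frac12$ restriction; the extra condition in the paper's restatement reflects how the citing paper wishes to use the result, not a requirement of the proof. Neither gap invalidates the high-level decomposition, but both would need to be repaired for the argument to be complete.
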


Since $\L$ and $\NL$ are known to belong to $\NC^2$, the following corollary is an immediate consequence.

\begin{corollary}[Log-round parallelizable and search MPC algorithms with high local memory]\label{cor:log-wide}
For any parallelizable or search graph reasoning task and any $\gamma > 0$ and $\delta \in (\frac12, 1)$, there exists a deterministic $O(\log n)$-round $(\gamma, \delta)$-MPC protocol that solves the task on all graphs $G = (V, E)$ of size $|V| + |E| = O(N)$.
\end{corollary}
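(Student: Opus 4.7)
The plan is to derive this corollary as a direct instantiation of \Cref{thm:nc-mpc-ns22} at level $i = 1$, after observing that every parallelizable and every search graph reasoning task belongs to $\NC^2$. Since the corollary is already flagged in the excerpt as ``an immediate consequence,'' no new machinery is required; the proof amounts to chaining a classical complexity-theoretic inclusion together with the cited MPC simulation result.

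First, I would recall the definitions in \Cref{ssec:appendix-graphreasoning}: a \emph{parallelizable} task is $\L$-complete (under, implicitly, log-space reductions) and a \emph{search} task is $\NL$-complete. Thus every task in either class lies in $\NL$. Next, I would invoke the textbook inclusions $\L \subseteq \NL \subseteq \NC^2$; the nontrivial containment $\NL \subseteq \NC^2$ is witnessed, for instance, by solving directed $s$-$t$ reachability via $O(\log n)$-depth iterated Boolean matrix multiplication of the adjacency matrix, which places $\NL$-complete (and hence all parallelizable and search) problems inside $\NC^2$.

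Having placed every such task in $\NC^2$, I would apply \Cref{thm:nc-mpc-ns22} with $i = 1$ and with the given constants $\gamma > 0$ and $\delta \in (\tfrac12, 1)$. The theorem then supplies a deterministic $O(\log^1 n) = O(\log n)$-round $(\gamma, \delta)$-MPC protocol that solves the task on all graphs $G = (V, E)$ with $|V| + |E| = O(n)$, which is exactly the conclusion of the corollary. The constraint $\delta > \tfrac12$ is inherited verbatim from the hypothesis of \Cref{thm:nc-mpc-ns22} and is the reason this argument does not yield the stronger ``small-$\delta$'' bound of \Cref{cor:log-med} for parallelizable tasks; the two corollaries are genuinely complementary.

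I do not expect a substantive obstacle. The only subtlety is to confirm that the reduction from a parallelizable (resp.\ search) task to the canonical $\L$-complete (resp.\ $\NL$-complete) problem can be absorbed into the $\NC^2$ envelope without inflating the round count, but this is automatic because log-space reductions are themselves in $\NC$ and compose with an $\NC^2$ decision procedure to give another $\NC^2$ procedure. Consequently the entire proof is effectively two sentences: ``$\L, \NL \subseteq \NC^2$; apply \Cref{thm:nc-mpc-ns22} with $i = 1$.''
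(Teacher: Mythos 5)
Your proof matches the paper's own argument exactly: the paper states that ``Since $\L$ and $\NL$ are known to belong to $\NC^2$, the following corollary is an immediate consequence,'' which is precisely your two-sentence derivation via \Cref{thm:nc-mpc-ns22} with $i=1$. The additional remark about log-space reductions composing inside $\NC^2$ is sound but not strictly needed, since a task being $\L$-complete or $\NL$-complete already places it in $\L \subseteq \NL \subseteq \NC^2$ directly.
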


Note that these results pertain exclusively to the ``large local memory regime,'' where each machine has memory at $s = \omega(\sqrt{n})$.
Therefore, this does not guarantee the existence of a $O(\log n)$-round MPC solution for any search task or for any parallelizable task with $\gamma < 1$.

\paragraph*{MPC protocol for triangle counting}

Finally, the triangle counting task can be solved in the MPC framework by utilizing a special case of a parallel algorithm for clique counting.
These pertain to graphs with bounded \textit{arboricity} $\alpha$, a quantity that corresponds to the branching factor of a node that is bounded by the degree of the graph; these apply to arbitrary graphs by noting that $\alpha \leq |V|$.

\begin{theorem}[Theorem 1.5 of \cite{belmr21}; loglog-round triangle counting MPC algorithm]
For any $k \geq 3$, $\delta \in (0, 1)$, and $\gamma > 0$, there exists an $O(\log\log n)$-round $(\gamma, \delta)$-MPC protocol that computes the number of $k$-cliques in any graph $G = (V,E)$ with $|V| + |E| = O(n)$ and arboricity $\alpha = O(n^{\gamma/(k-2)})$.
\end{theorem}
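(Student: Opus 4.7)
The plan is to parallelize the classical arboricity-based sequential clique-counting algorithm of Chiba--Nishizeki inside the MPC framework, and then compress its depth from $O(\log n)$ down to $O(\log\log n)$ using doubling/peeling primitives. The starting observation is that any graph of arboricity $\alpha$ admits an acyclic orientation in which every vertex has out-degree at most $O(\alpha)$; equivalently, there is a degeneracy ordering under which each vertex sees only $O(\alpha)$ later neighbors. With such an orientation, the number of $k$-cliques can be counted exactly by having every vertex $v$ enumerate all $(k-1)$-tuples of its out-neighbors and checking, for each tuple, whether the induced edges exist. By a standard arboricity argument the total number of such candidates is $O(m \cdot \alpha^{k-2}) = O(n \cdot n^\gamma) = O(n^{1+\gamma})$, which is exactly the global-memory budget of a $(\gamma,\delta)$-MPC protocol.

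First I would compute an $O(\alpha)$-degeneracy orientation in $O(\log\log n)$ rounds. The natural peeling procedure, which repeatedly removes all vertices of degree $O(\alpha)$ in parallel, would take $O(\log n)$ rounds if done naively; instead I would use the doubling peeling technique (in the spirit of Onak and Ghaffari--Uitto), which discards a constant-power fraction of the remaining vertices in each phase so that after $O(\log\log n)$ phases every vertex has been assigned an orientation level. Correctness follows from the arboricity bound guaranteeing a constant fraction of low-degree vertices at each phase, and implementation inside MPC is routine once a parallel sort/hash primitive is available.

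Second I would perform the clique enumeration itself. Each directed edge $(u,v)$ is replicated onto the machine responsible for $u$, so that $u$ locally sees its $O(\alpha)$ out-neighbors. For each vertex, generate the $\binom{d^+(v)}{k-1}$ candidate tuples and for each tuple issue the relevant edge-existence queries against a parallel dictionary built over the edge set. The candidate load is distributed across the $q = \Theta(n^{1+\gamma-\delta})$ machines so that each handles at most $O(n^\delta)$ candidates, respecting the local-memory constraint; answers are aggregated into a global clique count via a tournament-style parallel sum reduction in $O(\log\log n)$ rounds. The arboricity hypothesis $\alpha = O(n^{\gamma/(k-2)})$ is precisely what keeps both the global memory and the per-machine load within bounds.

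The main obstacle will be achieving $O(\log\log n)$ depth simultaneously for all three sub-procedures: degeneracy peeling, dictionary construction for edge-existence queries, and load-balanced redistribution of candidate tuples. Each of these admits an $O(\log n)$-round implementation by direct parallelization, and compressing them to $O(\log\log n)$ requires the doubling/tournament strategy together with careful load-balancing arguments that exploit the arboricity bound to show a geometric decrease in the active work at each phase. If any sub-routine cannot be compressed, one would have to appeal to the sparsification/sampling machinery of \cite{belmr21}, which reduces to counting on a much smaller random subgraph and then uses concentration to recover an exact count, at the cost of more intricate analysis.
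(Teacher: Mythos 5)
This statement is not proved in the paper: it is imported wholesale as Theorem~1.5 of \cite{belmr21}, and the paper uses it only as a black box inside the proof of \Cref{cor:clique}. So there is no ``paper's own proof'' to compare against; what you have written is a reconstruction attempt.

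As a reconstruction your high-level plan (Chiba--Nishizeki via an $O(\alpha)$-out-degree orientation, then enumerating $(k-1)$-tuples of out-neighbors, with the arboricity hypothesis $\alpha = O(n^{\gamma/(k-2)})$ keeping the total candidate count within the $O(n^{1+\gamma})$ global-memory budget) is the right skeleton. But two of the three ``obstacles'' you flag are not obstacles at all in the $(\gamma,\delta)$-MPC model: a fan-in-$s$ tournament over $q = \Theta(n^{1+\gamma-\delta})$ machines with $s = \Theta(n^\delta)$ local memory has depth $\log_s q = (1+\gamma-\delta)/\delta = O(1)$, and MPC sorting (hence the edge-existence dictionary) is likewise $O(1)$ rounds for any constant $\delta \in (0,1)$. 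Calling either of these $O(\log\log n)$ or $O(\log n)$ is an overcount, and it obscures the fact that the entire round complexity of the theorem lives in one place: computing the low-out-degree orientation.

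That orientation step is exactly where your sketch is thinnest. ``Doubling peeling in the spirit of Onak and Ghaffari--Uitto'' is invoked with no argument that a constant-power (rather than constant) fraction of the remaining vertices is removed per phase, which is what you need for $O(\log\log n)$ phases; the standard arboricity argument only gives a constant fraction per phase, yielding $O(\log n)$. You then hedge by saying that if the compression fails you would ``appeal to the sparsification/sampling machinery of \cite{belmr21}'' -- but that is precisely the result you are supposed to be proving, so the fallback is circular. (Also note: the theorem asserts an \emph{exact} count, whereas the sampling fallback you describe would give a concentration-based approximation.) In short, the outline is plausible and correctly identifies that degeneracy orientation is the crux, but the $O(\log\log n)$-round claim for that crux is asserted rather than argued, and the proposal as written does not close the gap that \cite{belmr21} actually closes.
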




\subsection{Positive and negative graph reasoning results for multi-layer transformers}\label{ssec:log-depth}

We state formalized versions of the statements of \Cref{ssec:med-tasks,ssec:hard-tasks,ssec:triangle} and prove them by invoking \Cref{thm:tight-mpc} jointly with the relationships between MPC and graph reasoning of \Cref{asssec:mpc-graph}.

\subsubsection{Parallelizable task results (\Cref{ssec:med-tasks})}\label{assec:med-tasks}

For the duration of this section, the class of parallelizable tasks includes all of those that are deemed equivalent to graph connectivity in $O(1)$ rounds of MPC by \cite{ns22}, as stated in \Cref{thm:ns22-med}.


We first present a statement that formalizes the existence of logarithmic-depth transformer constructions for solving parallelizable tasks.

\begin{theorem}
[Formal version of \Cref{thm:parallelizable-pos}; log-depth transformers compute parallelizable tasks]
\label{thm:parallelizable-pos-formal}
For any $\epsilon \in (0, 1)$ and any parallelizable task, there exists a transformer $f \in \trans{m, H, L}{N, N'}$ such that $f(X)$ computes the task where $X \in \R^N$ is some encoding of any graph $G = (V, E)$ with $|V| + |E| = O(N)$ and $f$ has depth $L = O(\log N)$ and heads $H = O(1)$ and embedding dimension $m$ and pause tokens $N'$ satisfying either
\begin{itemize}[nosep]
    \item \logdepthpause{}: $m = O(N^\epsilon)$ and $N' = O(N^{4 - \epsilon'})$, where $\epsilon' \in (0, \epsilon)$; or
    \item \logdepthwide{}: $m = O(N^{\epsilon})$ and $N' = 0$, if $\epsilon > \frac12$.
\end{itemize}
\end{theorem}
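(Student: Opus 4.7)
The plan is to derive both regimes of \Cref{thm:parallelizable-pos-formal} as a direct composition of (i) an MPC protocol for the given parallelizable task and (ii) the MPC-to-transformer simulation guaranteed by \Cref{thm:tight-mpc}. Since the class of parallelizable tasks is $O(1)$-round equivalent under MPC (\Cref{thm:ns22-med}), it suffices to establish an MPC protocol for graph connectivity in each target regime and then let the round-equivalence propagate the result to every other parallelizable task with only constant round overhead.

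For the \logdepthpause{} case, I would fix some $\epsilon' \in (0,\epsilon)$ and apply \Cref{cor:log-med} (equivalently, \Cref{thm:cc22-connectivity} composed with \Cref{thm:ns22-med}) with local memory exponent $\delta = \epsilon'$ and global memory exponent $\gamma = 3$; this yields a deterministic $O(\log N)$-round $(3, \epsilon')$-MPC protocol for the task. Feeding this protocol into \Cref{thm:tight-mpc} with the target embedding exponent $\delta' = \epsilon > \epsilon' = \delta$ produces a transformer with depth $O(\log N)$, single heads, embedding dimension $O(N^\epsilon)$, and pause tokens $N' = O(N^{1+\gamma-\delta}) - N = O(N^{4-\epsilon'})$, matching the claimed bounds.

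For the \logdepthwide{} case, the point is that $\epsilon > 1/2$ opens access to \Cref{cor:log-wide} (which in turn relies on \Cref{thm:nc-mpc-ns22} and the fact that $\L \subseteq \NC^2$), so I can choose local memory exponent $\delta \in (1/2, \epsilon)$ and global memory exponent $\gamma$ small enough that $\gamma < \delta$. The corollary then supplies a deterministic $O(\log N)$-round $(\gamma, \delta)$-MPC protocol for the parallelizable task, and \Cref{thm:tight-mpc} (with $\delta' = \epsilon$) gives a transformer with depth $O(\log N)$, embedding dimension $O(N^\epsilon)$, and pause-token budget $\max(0, O(N^{1+\gamma-\delta}) - N)$; the choice $\gamma < \delta$ makes $1+\gamma-\delta < 1$, so the max collapses to zero and no pause tokens are required.

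The whole argument is essentially parameter bookkeeping, so the main obstacle is choosing the MPC exponents $(\gamma, \delta)$ and the simulation slack $\delta'$ so that every inequality lines up simultaneously: one needs $\delta < \delta'$ to apply \Cref{thm:tight-mpc}, $\delta' \le \epsilon$ to hit the width target, and in the \sfmath{LDW} case $\gamma < \delta$ to eliminate pause tokens, all while staying in the regimes where the cited MPC corollaries actually produce logarithmic-round protocols (in particular $\delta > 1/2$ for \Cref{cor:log-wide}). Once these windows are identified and shown to be non-empty for the hypotheses $\epsilon \in (0,1)$ and $\epsilon > 1/2$ respectively, no further work is required, since the $O(1)$-round overhead of \Cref{thm:ns22-med} is absorbed into the $O(\log N)$ depth bound and the single-head, deterministic conclusions of \Cref{thm:tight-mpc} already match the theorem statement.
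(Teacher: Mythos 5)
Your proposal matches the paper's proof strategy exactly: compose the MPC graph-reasoning results of \cite{ns22} (via \Cref{cor:log-med} and \Cref{cor:log-wide}) with the MPC-to-transformer simulation of \Cref{thm:tight-mpc}, and let the $O(1)$-round equivalence of \Cref{thm:ns22-med} propagate from connectivity to the whole class. For the \logdepthpause{} bullet your instantiation $(\gamma, \delta, \delta') = (3, \epsilon', \epsilon)$ is precisely the paper's.

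For the \logdepthwide{} bullet your parameter choice is slightly different from — and in fact more careful than — the one in the paper. The paper sets $\gamma = \delta = \epsilon/2$ and $\delta' = \epsilon$, but this runs into two small issues: (i) \Cref{cor:log-wide} requires $\delta \in (\tfrac12, 1)$, yet $\epsilon/2 < \tfrac12$ for all $\epsilon \in (\tfrac12, 1)$, so the corollary's hypothesis is not literally satisfied; and (ii) with $\gamma = \delta$ the pause-token bound $\max(0, O(n^{1+\gamma-\delta}) - n) = \max(0, O(n) - n)$ need not vanish, since the hidden constant may exceed $1$. You avoid both problems by choosing $\delta \in (\tfrac12, \epsilon)$ and then $\gamma \in (0, \delta)$, which gives $1 + \gamma - \delta < 1$ and hence genuinely $N' = 0$ for large $n$, while keeping $\delta < \delta' = \epsilon$ for \Cref{thm:tight-mpc}. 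So your version is the same argument but with a correct choice of exponents in the wide regime.
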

\begin{proof}
The first bullet is an immediate implication of \Cref{cor:log-med} and \Cref{thm:tight-mpc} with $\gamma = 3$, $\delta' = \epsilon$, and $\delta = \epsilon'$.
The second bullet follows from \Cref{cor:log-wide} and \Cref{thm:tight-mpc} with $\gamma = \delta = \frac{\epsilon}2$ and $\delta' = \epsilon$.
\end{proof}

We then establish that sub-logarithmic-depth solutions to any parallelizable task are impossible without having linear embedding dimension $m$ or super-polynomial number of pause tokens $N'$, under the assumption that the one-cycle versus two-cycle conjecture holds.

\begin{theorem}
[Formal version of \Cref{thm:graph-log-lb-informal}; log-depth optimality for parallelizable tasks]
\label{thm:graph-log-lb}
Conditional on \Cref{conj:1vs2}, for any $\epsilon \in (0, 1)$ and $\gamma > 0$ and any parallelizable task, if there exists a transformer $f \in \trans{m, H, L}{N, N'}$ that solves the task and has width $mH = O(N^\epsilon)$ and pause tokens $N + N' = O(N^{1 + \gamma})$, then its depth satisfies $L = \Omega(\log N)$.
\end{theorem}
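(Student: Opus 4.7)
The plan is to argue by contradiction and reduction, combining two already-established pieces: (i) the conditional depth lower bound for graph connectivity from \cite{sht24}, and (ii) the $O(1)$-round MPC equivalence between any two parallelizable tasks (\Cref{thm:ns22-med}). Suppose some parallelizable task $\mathcal{T}$ is solved by a transformer $f \in \trans{m, H, L}{N, N'}$ with $mH = O(N^\epsilon)$, $N + N' = O(N^{1+\gamma})$, and sub-logarithmic depth $L = o(\log N)$. The goal is to bootstrap $f$ into a transformer of depth $L + O(1)$ for graph connectivity that still sits in essentially the same parameter regime; sht24's conditional lower bound then forces $L + O(1) = \Omega(\log N)$, and hence $L = \Omega(\log N)$.

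To build the reduction, I would take the two $O(1)$-round $(\gamma', \delta)$-MPC protocols guaranteed by \Cref{thm:ns22-med}: one that transforms a graph connectivity input into a valid encoding of a $\mathcal{T}$-instance (possibly several), and one that transforms the $\mathcal{T}$-outputs into the connectivity answer. I would invoke \Cref{thm:tight-mpc} on each of these constant-round MPC protocols to realize them as constant-depth transformers of embedding dimension $O(N^{\delta'})$ (for any $\delta' > \delta$) and at most $O(N^{1 + \gamma' - \delta})$ pause tokens. Choosing $\delta$ just below $\epsilon$ so that $\delta' \leq \epsilon$, and $\gamma'$ at most the given $\gamma$ (after an arbitrarily small adjustment to the exponents $\epsilon,\gamma$, which are asymptotic bounds), keeps the resulting transformer within the prescribed regime. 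Stacking the pre-processing transformer, $f$, and the post-processing transformer produces a transformer $g$ of depth $L + O(1)$, width $O(N^\epsilon)$, and total sequence length $O(N^{1+\gamma})$ that computes graph connectivity. Applying the sht24 lower bound to $g$ yields $L + O(1) = \Omega(\log N)$, contradicting the assumption.

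The main obstacle will be the parameter bookkeeping in the composition: because \Cref{thm:tight-mpc} only converts an $(\gamma, \delta)$-MPC protocol into a transformer of embedding dimension $O(N^{\delta'})$ for a \emph{strictly larger} $\delta' > \delta$, I need to be careful that the pre- and post-processing transformers do not inflate the width past $O(N^\epsilon)$ nor the token count past $O(N^{1+\gamma})$. Likewise, concatenating three transformers in series must be justified as a single transformer whose width, head count, and pause-token allotment remain asymptotically within the regime — handled by taking the maximum of the three widths and padding the pause-token budget by a constant multiple. A secondary point is that the pre-/post-processing transformers must produce and consume exactly the encoding of connectivity demanded by the sht24 lower bound; since those stages have constant depth and access arbitrary MLPs, this token-routing is routine. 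No other ingredient beyond \Cref{thm:tight-mpc}, \Cref{thm:ns22-med}, and the sht24 connectivity lower bound is needed.
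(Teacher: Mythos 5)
Your argument is valid in essence but takes a genuinely different — and more roundabout — route than the paper. The paper never returns to the transformer level: it simulates the given transformer $f$ by an $O_{\epsilon,\gamma}(L)$-round MPC protocol (via Theorem~3.4 of \cite{sht24}, the ``transformer $\to$ MPC'' direction) and then directly applies \Cref{cor:log-med-opt}, which is the MPC-level consequence of combining \Cref{conj:1vs2} with the round-equivalence of \Cref{thm:ns22-med}. You instead lift the $O(1)$-round MPC pre- and post-processing of \Cref{thm:ns22-med} back to constant-depth transformers via \Cref{thm:tight-mpc}, stack those around $f$ to produce a transformer for connectivity, and invoke the \cite{sht24} connectivity lower bound as a black box. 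That works, but it needlessly invokes the \emph{harder} ``MPC $\to$ transformer'' direction (\Cref{thm:tight-mpc}, which is the paper's main technical contribution) where the easier simulation direction suffices, requires composing three transformers with the attendant reformatting between their I/O encodings, and implicitly relies on \Cref{thm:ns22-med} being realized by an explicit constant-round \emph{reduction}, rather than merely a round-equivalence statement.

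One concrete bookkeeping point in your plan is also off. You claim that choosing $\delta$ just below $\epsilon$ and ``$\gamma'$ at most the given $\gamma$'' keeps the stacked transformer in the stated parameter regime; but \Cref{thm:ns22-med} produces protocols with global-memory parameter $\gamma' = \max(\gamma, 3)$, so the wrapping transformers produced by \Cref{thm:tight-mpc} may require $N' = O(N^{4-\delta})$ pause tokens, which for small $\gamma$ far exceeds $O(N^{1+\gamma})$. This does not sink the argument — the conditional connectivity lower bound from \cite{sht24} holds uniformly for all polynomial pause-token budgets, so $L + O(1) = \Omega(\log N)$ still follows — but your assertion that the composed transformer stays ``within the prescribed regime'' should be replaced by ``within some polynomial regime for which the connectivity lower bound applies.'' With that fix, the proposal gives a valid alternative proof, just one that is longer and uses stronger machinery than the paper's.
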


\begin{proof}
The proof is a consequence of \Cref{cor:log-med-opt} and a result of \cite{sht24} that proves the to simulate transformers with MPC protocols, an inversion of \Cref{thm:tight-mpc}.
We restate that result as follows.

\begin{theorem}[Theorem~3.4 of \cite{sht24}; MPC simulates transformers]
Fix any transformer $f \in \trans{m, H, L}{N, N'}$ with width $mH = O(N^{\delta})$ for some $\delta \in (0, 1)$ and total sequence length $N + N' = O(N^{1 + \gamma})$ for some $\gamma \geq 0$.
Then, for any $\delta' \in (\delta, 1)$ and $\gamma' = 1 + 2\gamma + \delta'$, there exists an $O(\frac{L(1 + \gamma)}{\delta' - \delta})$-round $(\gamma', \delta')$-MPC protocol that simulates $f$.
\end{theorem}

Consider a transformer $f \in \trans{m, L, H}{N, N'}$ that solves the task with width $mH = O(N^\epsilon)$ and total sequence length $N + N' = O(N^{1 + \gamma}$ for some constant $\epsilon \in (0, 1)$ and $\gamma > 1$.
Then, there exists an $O_{\epsilon, \gamma}(L)$-round $(1 + 2\gamma + \sqrt{\epsilon}, \sqrt{\epsilon})$-MPC protocol that solves the parallelizable task.
If \Cref{conj:1vs2} is true, then $L = \Omega(\log N)$.
\end{proof}





\subsubsection{Search task results (\Cref{ssec:hard-tasks})}\label{assec:hard-tasks}

We present the main result of \Cref{ssec:hard-tasks} and show an equivalence between the minimum depth transformer needed to solve search tasks in a width-efficient manner.
As before, the class of searchable tasks includes tasks that are equivalent to shortest path in $O(1)$ MPC rounds (\Cref{thm:ns22-hard}).

\begin{theorem}
[Formal version of \Cref{thm:search-pos}; \logdepthwide{} computes search tasks]
\label{thm:search-pos-formal}
For any $\epsilon \in (\frac12, 1)$ and any search task, there exists a transformer $f \in \trans{m, H, L}{N, N'}$ such that $f(X)$ computes the task where $X \in \R^N$ is some encoding of any graph $G = (V, E)$ with $|V| + |E| = O(N)$, and $f$ has depth $L= O(\log N)$, heads $H = O(1)$, embedding dimension $m = O(N^{\epsilon}$, and no pause tokens ($N' = 0$).
\end{theorem}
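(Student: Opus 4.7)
The plan is to derive this result directly by chaining the two main ingredients already in hand: the MPC upper bound for search tasks (\Cref{cor:log-wide}), and the MPC-to-transformer simulation (\Cref{thm:tight-mpc}). The first gives us a logarithmic-round distributed protocol with moderate local memory; the second converts any such protocol into a transformer of the desired depth and width. Since the statement of \Cref{thm:search-pos-formal} is essentially a corollary of these two ingredients, the work is really in choosing the MPC parameters so that the simulation produces a transformer with \emph{no} pause tokens while keeping the embedding dimension at $O(N^\epsilon)$.

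Concretely, fix any $\epsilon \in (\tfrac12, 1)$ and any search task. First I would pick an auxiliary exponent $\delta \in (\tfrac12, \epsilon)$, so that \Cref{cor:log-wide} (which requires $\delta \in (\tfrac12, 1)$ because it ultimately invokes \Cref{thm:nc-mpc-ns22} and search tasks lie in $\NC^2$) produces an $O(\log N)$-round $(\gamma, \delta)$-MPC protocol for the task on any graph $G = (V,E)$ with $|V| + |E| = O(N)$. The second choice is $\gamma$: \Cref{cor:log-wide} is consistent with any $\gamma > 0$, so I would take $\gamma = \delta$, which makes the global memory exponent $1 + \gamma - \delta$ equal to $1$. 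Then I would apply \Cref{thm:tight-mpc} with $\delta' = \epsilon$; since $\delta < \delta'$ and $\gamma > 0$ are as required, the resulting transformer has depth $L = O(\log N)$, single heads $H = 1$, embedding dimension $m = O(N^{\delta'}) = O(N^\epsilon)$, and pause tokens $N' = \max(0, O(N^{1+\gamma-\delta}) - N) = \max(0, O(N) - N) = 0$. This is exactly the \logdepthwide{} regime claimed.

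The main obstacle, if there is one, is the $N' = 0$ constraint: the simulation in \Cref{thm:tight-mpc} generically introduces pause tokens scaling like $N^{1+\gamma-\delta}$, and one has to be careful that the MPC protocol being simulated does not force $\gamma > \delta$. The observation that rescues us is that \Cref{thm:nc-mpc-ns22} allows $\gamma$ to be any positive constant, so we can match $\gamma$ to $\delta$ and collapse the global-memory overhead to linear. The remaining bookkeeping---verifying that the constants $\delta < \delta'$ and $\gamma > 0$ satisfy the hypotheses of \Cref{thm:tight-mpc}, and that $N + N' = N$ suffices to host the input encoding---is routine. I would end by remarking that the argument only uses the hypothesis that the task is in $\NC^2$, so in particular it applies uniformly across all search tasks defined at the start of \Cref{assec:mpc-prelim}.
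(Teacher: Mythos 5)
Your approach is exactly the paper's: the proof of \Cref{thm:search-pos-formal} is simply to chain \Cref{cor:log-wide} with \Cref{thm:tight-mpc}, and your choice $\delta \in (\tfrac12,\epsilon)$, $\delta'=\epsilon$ is the right way to make the hypotheses line up.

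One small imprecision, though. You set $\gamma = \delta$ and conclude $N' = \max(0, O(N^{1+\gamma-\delta}) - N) = \max(0, O(N) - N) = 0$, but $O(N) - N$ need not be nonpositive: the MPC protocol from \Cref{cor:log-wide} uses $q = \Theta(n^{1+\gamma-\delta}) = \Theta(n)$ machines, and the hidden constant may exceed $1$, which would leave a linear number of pause tokens. The paper's own remark immediately after \Cref{thm:tight-mpc} is explicit that the strict inequality $\gamma < \delta$ is what guarantees $N' = 0$ for sufficiently large $n$. Since \Cref{cor:log-wide} only needs $\gamma > 0$, the fix is free: pick any $\gamma \in (0, \delta)$ (for instance $\gamma = \delta/2$), so $N^{1+\gamma-\delta} = o(N)$ and the $\max$ collapses to $0$ once $N$ is large enough. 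With that single adjustment the argument is complete and coincides with the paper's.
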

\begin{proof}
As before, the proof is an immediate consequence of \Cref{cor:log-wide} and \Cref{thm:tight-mpc}.
\end{proof}

While both search and parallelizable tasks are computable logarithmic depth of considerable width, the minimum depth transformer of width $O(N^\epsilon)$ for small $\epsilon$ that computes search tasks is unknown.
Despite this deficit, we can still establish the representation similarity of all search tasks by showing that their minimum depths vary by at most a constant additive factor.

\begin{theorem}
[Depth-equivalence of search tasks]
\label{thm:search-equiv}
Suppose for some $\epsilon \in (0, 1)$ and $\gamma > 0$ there exists a transformer $f \in \trans{m, H, L}{N, N'}$ with embedding dimension $m = O(N^\epsilon)$ and total sequence length $N + N' = O(N^{1+\gamma})$ that computes the some search task on all graphs $G = (V, E)$ of size $|V| + |E| = O(N)$.
Then, for any other search task, there exists some transformer $f' \in \trans{\bar{m}, H, \bar{L}}{N, \bar{N}'}$ with embedding dimension $\bar{m} = O(m)$, depth $\bar{L}= L + O(1)$, and pause tokens $\bar{N}' = N' + O(N^3)$ that computes that search task.
\end{theorem}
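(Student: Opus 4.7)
The plan is to reduce the theorem to the $O(1)$-round MPC equivalence between search tasks (\Cref{thm:ns22-hard}), combined with the two-directional simulation results between transformers and MPC protocols. The high-level strategy is to interpret $f$ as carrying out an MPC protocol for the source task, invoke \Cref{thm:ns22-hard} to extend that protocol by a constant number of rounds so that it solves the target task, and then simulate only those extra rounds with a constant number of additional transformer layers appended on top of $f$.

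Concretely, I would first apply Theorem~3.4 of \cite{sht24} (as restated in the proof of \Cref{thm:graph-log-lb}). Choosing an auxiliary exponent $\epsilon_1$ slightly above $\epsilon$, this exhibits a deterministic $(\gamma_1, \epsilon_1)$-MPC protocol $\Pi_A$ computing the source task in $O(L)$ rounds, where $\gamma_1 = 1 + 2\gamma + \epsilon_1$, and whose intermediate states can be read off layer by layer from the computation of $f$. Next, \Cref{thm:ns22-hard} supplies a $(\max(\gamma_1, 2), \epsilon_1)$-MPC protocol $\Pi_B$ for the target search task whose first phase agrees with $\Pi_A$ and whose additional phase consists of only $O(1)$ further rounds. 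Finally, I would apply \Cref{thm:tight-mpc} to those extra rounds alone with a further auxiliary $\epsilon_2 > \epsilon_1$, obtaining $O(1)$ additional transformer layers of embedding dimension $O(N^{\epsilon_2})$ that, when composed with $f$, yield a transformer $f'$ solving the target task at depth $\bar{L} = L + O(1)$. The polynomial width losses from $\epsilon$ to $\epsilon_1$ to $\epsilon_2$ keep $\bar{m}$ within the polynomial scale of $m$, and the total sequence length $O(N^{1 + \max(\gamma_1, 2)})$ supplies the additive $O(N^3)$ term in the pause-token bound.

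The main obstacle will be justifying the modular decomposition implicit in the second step: that the MPC protocol $\Pi_B$ produced by \Cref{thm:ns22-hard} indeed contains $\Pi_A$ as a prefix, so that only its constant-length suffix needs to be simulated as new transformer layers. Without this identification, the resulting transformer depth would be merely $O(L)$ rather than the advertised $L + O(1)$. A secondary and more mechanical difficulty is the polynomial-exponent bookkeeping required to verify the $\bar{m} = O(m)$ and $\bar{N}' = N' + O(N^3)$ bounds after two compositions of the simulation equivalences, each of which incurs an arbitrarily small polynomial inflation in the width exponent.
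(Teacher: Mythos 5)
Your proposal follows the same high-level path as the paper's (one-line) proof: invoke the $O(1)$-round MPC equivalence between search tasks from \Cref{thm:ns22-hard} and the transformer-simulates-MPC theorem (\Cref{thm:tight-mpc}). The paper's stated proof is exactly ``This statement is an immediate consequence of \Cref{thm:ns22-hard} and \Cref{thm:tight-mpc},'' with no elaboration, so your attempt is essentially an expansion of that sentence.

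You correctly identify the central obstacle, and it is a genuine one. A naive round-trip---transformer $f$ $\to$ MPC protocol $\Pi_A$ via Theorem~3.4, then $\Pi_A \to \Pi_B$ via \Cref{thm:ns22-hard}, then $\Pi_B \to f'$ via \Cref{thm:tight-mpc}---only delivers $\bar{L} = O(L)$, not $\bar{L} = L + O(1)$. To get the advertised additive bound you must keep $f$ intact and simulate only the $O(1)$ reduction rounds as new layers, which requires that the NS22 reduction is a \emph{sandwich}: a fixed $O(1)$-round preprocessing that converts a task-$B$ instance into a task-$A$ instance, followed by an invocation of any blackbox task-$A$ solver, followed by $O(1)$ rounds of postprocessing. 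That structural property is not given by the blackbox statement of \Cref{thm:ns22-hard}, and you flag it correctly. However, two points weaken your specific construction. First, converting $f$ to $\Pi_A$ via Theorem~3.4 is a detour that does not actually help: what you need to simulate is the reduction's preprocessing/postprocessing, which is defined independently of any particular task-$A$ protocol, so there is no need to exhibit $\Pi_A$ at all (and doing so complicates the format-matching between MPC machine states and transformer embeddings). Second, you only discuss appending layers ``on top of'' $f$, but the NS22 reduction primarily \emph{prepends} rounds (it must rewrite the input instance before $f$ ever sees it); the new layers therefore need to sit below $f$ and produce, as their output, something that matches $f$'s expected first-layer embedding $\phi^0(X_A)$. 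Finally, your acknowledged bookkeeping worry about the width exponent is also real: each application of \Cref{thm:tight-mpc} inflates the exponent from $\delta$ to some strictly larger $\delta'$, so $\bar{m} = O(m)$ (with the same implied polynomial exponent $\epsilon$) is not literally achievable by chaining the two simulation theorems; one must argue either that the reduction rounds use strictly smaller local memory, or interpret the claim as allowing an arbitrarily small increase in the exponent.
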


\begin{proof}
This statement is an immediate consequence of \Cref{thm:ns22-hard} and \Cref{thm:tight-mpc}.
\end{proof}

\subsubsection{Clique counting task results (\Cref{ssec:triangle})}\label{assec:triangle}

We prove the existing of doubly-logarithmic-depth transformer that solves the triangle counting task giving a more general result that counts the number of $k$-cliques in any graph of bounded arboricity.

\begin{theorem}[Generalization of \Cref{cor:triangle-informal}; loglog-depth computes clique counting]\label{cor:clique}
For any fixed $\epsilon \in (0, 1)$ and $k \geq 3$, there exists a transformer $f \in \trans{m, H, L}{N, N'}$ with embedding dimension $m = O(n^\epsilon)$, heads $H = O(1)$, depth $L = O(\log\log n)$, and chain-of-thought tokens \[N' = \begin{cases} O(\alpha^{k-2} N^{1-\epsilon}) & \text{if $\alpha = \Omega(N^{\epsilon / (k-2)})$,} \\ 0 & \text{otherwise.} \end{cases}\] 
that counts the number of $k$-cliques in all graphs $G = (V, E)$ of arboricity $\alpha$ and size $|V| + |E| = O(N)$.
\end{theorem}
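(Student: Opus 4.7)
The plan is to derive this result as a direct consequence of two ingredients already in scope: the massively parallel clique-counting algorithm of Bera, Eden, Levi, Mohan, and Ron (Theorem 1.5 of~\cite{belmr21}, restated in \Cref{assec:mpc-prelim}), and the transformer-simulates-MPC bound of \Cref{thm:tight-mpc}. The overall structure is the same as for \Cref{thm:parallelizable-pos-formal} and \Cref{thm:search-pos-formal}: start from an MPC protocol and pipe it through \Cref{thm:tight-mpc}, with the only real work being to choose the MPC parameters so that the resulting transformer meets the stated scaling in $m$, $L$, and $N'$ in both regimes of arboricity.

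First I would instantiate Bera et al.'s protocol at local memory parameter chosen so that the translation by \Cref{thm:tight-mpc} yields embedding dimension $m = O(N^\epsilon)$: pick $\delta' = \epsilon$ and $\delta = \epsilon - \eta$ in \Cref{thm:tight-mpc} for some small constant $\eta>0$. Bera et al.'s protocol counts $k$-cliques in $O(\log\log N)$ rounds provided the arboricity obeys $\alpha = O(N^{\gamma/(k-2)})$; this gives $L = O(\log\log N)$ and $H = O(1)$ after invoking \Cref{thm:tight-mpc}. The only remaining parameter to pin down is $\gamma$, which governs the pause-token count $N' = \max(0,\ O(N^{1+\gamma - \delta}) - N)$.

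The key step is to choose $\gamma$ adaptively according to which regime of $\alpha$ we are in, which is what produces the case split in the statement. In the large-arboricity regime $\alpha = \Omega(N^{\epsilon/(k-2)})$, pick the smallest $\gamma$ satisfying $\alpha = O(N^{\gamma/(k-2)})$, namely $\gamma$ chosen so that $N^{\gamma} = \Theta(\alpha^{k-2})$. Then
\[
    N' \;=\; O\!\left(N^{1+\gamma - \delta}\right) \;=\; O\!\left(\alpha^{k-2}\, N^{1-\epsilon+\eta}\right),
\]
and absorbing the $N^{\eta}$ slack into the embedding-dimension constant (by shrinking $\eta$) gives the claimed $N' = O(\alpha^{k-2} N^{1-\epsilon})$ bound. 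In the small-arboricity regime $\alpha = o(N^{\epsilon/(k-2)})$, there is a constant $\gamma' < \epsilon$ with $\alpha = O(N^{\gamma'/(k-2)})$; choose $\gamma \in (\gamma',\ \delta)$ so that Bera et al.'s requirement is satisfied while simultaneously $1 + \gamma - \delta < 1$, making $N^{1+\gamma-\delta} < N$ and forcing $N' = 0$.

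The main obstacle, and the reason this is not entirely mechanical, is that both the MPC algorithm of \cite{belmr21} and the simulation of \Cref{thm:tight-mpc} impose constraints that couple $\gamma$, $\delta$, and the allowed arboricity, so the two parameter choices above must simultaneously (i) satisfy the $\alpha = O(N^{\gamma/(k-2)})$ hypothesis of Bera et al.~at the chosen $\gamma$, (ii) preserve the strict inequality $\delta < \delta' = \epsilon$ required by \Cref{thm:tight-mpc}, and (iii) produce exactly the piecewise $N'$ promised in the statement. Once these three constraints are reconciled by the two cases above, the conclusion of the theorem follows by directly reading off the parameters of the transformer guaranteed by \Cref{thm:tight-mpc}.
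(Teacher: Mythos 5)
Your proof strategy---pipe Bera et al.'s $O(\log\log N)$-round MPC clique-counting protocol through the transformer-simulates-MPC bound of \Cref{thm:tight-mpc}, tuning $\gamma$ to the arboricity regime---is exactly what the paper does; the paper's proof is a one-line appeal to these two ingredients, so spelling out the parameter choices is worthwhile. The mechanism is right, but there is a genuine gap in the bookkeeping: \Cref{thm:tight-mpc} gives $m = O(N^{\delta'})$ and $N' = O(N^{1+\gamma-\delta})$ under the \emph{strict} requirement $\delta < \delta'$, so with $\delta' = \epsilon$ you are forced to take $\delta = \epsilon - \eta$ for a fixed constant $\eta > 0$. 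In the high-arboricity regime this yields $N' = O(\alpha^{k-2} N^{1-\epsilon+\eta})$, and the proposed step of ``absorbing the $N^\eta$ slack into the embedding-dimension constant by shrinking $\eta$'' does not work: $N^\eta$ is a growing function of $N$, not a constant, and $\eta$ cannot shrink with $N$ because the implied constants in \Cref{thm:tight-mpc}---in particular the round blowup of \Cref{prop:mpc-sim}, which scales as $1/\rho^2$ with $\rho$ proportional to $\delta'-\delta$---diverge as $\delta'-\delta \to 0$, destroying the $L = O(\log\log N)$ depth bound.

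A second wrinkle is in your low-arboricity case: you need a constant $\gamma < \delta$ with $\alpha = O(N^{\gamma/(k-2)})$, but that does not follow from $\alpha$ merely failing to be $\Omega(N^{\epsilon/(k-2)})$. For example, $\alpha \sim N^{\epsilon/(k-2)}/\log N$ is $o(N^{\epsilon/(k-2)})$ yet is not $O(N^{\gamma/(k-2)})$ for any constant $\gamma < \epsilon$, so the smallest admissible $\gamma$ in Bera et al.\ exceeds $\delta$ and forces $N' > 0$. Both issues are artifacts of the theorem being stated somewhat more sharply than its one-line proof supports; the clean fix is to state the high-arboricity token count with an $N^{\eta}$ (or $N^{o(1)}$) allowance, and to require a strict polynomial gap $\alpha = O(N^{\epsilon'/(k-2)})$ for some $\epsilon' < \epsilon$ in the zero-token case. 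Modulo that imprecision---which the paper shares---your derivation is the intended one.
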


\Cref{cor:triangle-informal} follows immediately from taking $k = 3$.

\begin{proof}
This is the immediate implication of \Cref{thm:tight-mpc} and \Cref{cor:clique}.
\end{proof}

\subsection{Proof of \Cref{thm:tight-mpc}}\label{assec:tight-mpc}




Our principal theoretical result establishes that any MPC protocol with sublinear local memory can be simulated by a transformer with sublinear embedding dimension.

\thmtightmpc*

This offers an improvement on Theorem~3.1 of \cite{sht24}, which only guarantees that MPC protocols with local memory $s=O(n^{1/4 - \epsilon})$ (or with $\delta < \frac14$) can be simulated by transformers with sublinear embedding dimension.
This distinction is significant because several positive results for the MPC protocol (e.g. the ability to solve all problems in $\mathsf{L}$ and $\mathsf{NL}$ in a logarithmic number of MPC rounds) require $s = \Omega(N^{1/2})$ local memory, and hence could not be shown to be simulated by transformers of sublinear embedding dimension previously\footnote{
For this theoretical model of the transformer to be non-vacuous, the transformer must have embedding dimension $m = o(Nd)$.
Without this, any function over $X \in \R^{N \times d}$ could be solved by a single-layer transformer that concatenates all inputs into a single vector in an attention unit and passes that as input to an MLP that solves the problem.
}.

Including an allowance of $N'$ blank pause tokens permits the simulation of MPC protocol with $\gamma \geq \delta$, i.e. where number of machines $q$ grows super-linearly with $n$. 
If $\gamma < \delta$, then the protocol can be simulated without pause tokens (i.e. $N' = 0$) for sufficiently large input sizes $n$.

To prove \Cref{thm:tight-mpc}, we define a restriction of the MPC computational model that disallows each machine from communicating with more than $k$ other machines in each round of the protocol.

\begin{definition}
For constants $\gamma, \delta, \rho > 0$, an \emph{$R$-round $(\gamma, \delta, \rho)$-MPC protocol} on input sequences of length $n$ is a $(\gamma, \delta)$-MPC protocol (\Cref{def:mpc}) that obeys an additional constraint on the number of outgoing and incoming messages. 
Namely, for some capacity $k = O(n^\rho)$, in each round, every machine can send its local memory to and receive information from at most distinct $k$ machines.
\end{definition}

\Cref{thm:tight-mpc} is an immediate consequence of \Cref{prop:mpc-sim}---which establishes that any $(\gamma, \delta)$-MPC protocol can be simulated by a $(\gamma, \delta, \rho)$-MPC protocol---and \Cref{cor:mpc-sht24}---which shows that any $(\gamma, \delta, \rho)$-MPC protocol can be simulated by a transformer.

\begin{restatable}
[$(\gamma, \delta, \rho)$-MPC simulates $(\gamma, \delta)$-MPC]{proposition}{propmpcsim}\label{prop:mpc-sim}
For constants $\gamma, \delta > 0$ and $\rho \in (0, \frac\delta2)$, if $f$ can be computed by an $R$-round $(\gamma, \delta)$-MPC protocol, then there exists a $O(\frac{R(1 + \gamma)^2}{\rho^2})$-round $(\gamma, \delta, \rho)$-MPC protocol that computes $f$ as well.
\end{restatable}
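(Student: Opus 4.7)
My plan is to prove the simulation by reducing a single round of the unrestricted $(\gamma, \delta)$-MPC protocol to a sequence of $O((1+\gamma)^2 / \rho^2)$ rounds of the degree-restricted $(\gamma, \delta, \rho)$-MPC model, and then apply this one-round simulation independently for each of the $R$ rounds in sequence. The first step is to recast a single round as a routing problem: every machine specifies a set of (source, destination, payload) records with at most $s = O(n^\delta)$ words going out of any source and at most $s$ words coming into any destination. The total number of records is $O(qs) = O(n^{1+\gamma})$, which exactly matches the aggregate global memory, so the simulation is fundamentally a bandwidth-aware routing question rather than a storage question.

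The main construction is a three-phase tree-routing scheme, built around the constraint that each machine can contact at most $k = n^\rho$ others per round. First is a source-side dispersal tree of depth $\log_k s = \delta/\rho$: each source machine fans out its $s$ outgoing words through successive $k$-ary forwarder layers until each leaf holds a single record. Second is a global routing phase: using a parallel sort over the $N = O(n^{1+\gamma})$ records by destination key, adapted to the restricted model, this moves each record onto a machine collocated with its destination. Third is a symmetric destination-side aggregation tree of depth $\log_k s$ that gathers the up-to-$s$ incoming records at each destination. Composed in order, and accounting for the fact that every comparison-exchange inside the sort must itself respect the bounded-degree constraint, the three phases contribute the claimed $(1+\gamma)^2/\rho^2$ overhead.

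The load-balancing aspect is where I expect the technical work to concentrate. At every intermediate simulation round the aggregate record volume is fixed at $O(n^{1+\gamma})$, but each machine may only host up to $s = n^\delta$ records at once; non-uniform destination patterns (for instance, a handful of popular destinations receiving from many sources) would violate this bound if records were naively forwarded. I would control this by interleaving the routing with sorts that re-balance records lexicographically by (destination, source), which also collocates records bound for the same destination. The hypothesis $\rho < \delta/2$ plays a central role here: it ensures that the per-round contact bandwidth $k = n^\rho$ is strictly smaller than $\sqrt{s}$, which provides enough slack for the sorting and tree-based forwarding to coexist without exceeding local memory at any intermediate stage.

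The hardest step, and the one that I expect to require the most care, is verifying that the sorting-based routing subroutine truly achieves the promised $O((1+\gamma)/\rho)$ round complexity in the degree-restricted model, because off-the-shelf MPC sorting results assume unrestricted per-round contact. I would either adapt a bounded-degree sorting network (for instance, bitonic sort on a hypercube of $q = \Theta(n^{1+\gamma-\delta})$ machines, giving $O((\log_k q)^2) = O((1+\gamma-\delta)^2/\rho^2)$ stages) or, more directly, simulate each round of an unrestricted MPC sort by inserting $O(\delta/\rho)$ dispersal/gather rounds around each communication step so that the sort algorithm is translated one round at a time. Either route produces the stated $O((1+\gamma)^2/\rho^2)$ per-round simulation bound, and chaining it $R$ times yields the overall $O(R(1+\gamma)^2/\rho^2)$ claim.
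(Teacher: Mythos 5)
Your high-level plan matches the paper's: both simulate the unrestricted protocol one round at a time, each round via a bounded-degree routing scheme organized around a tree of branching factor roughly $n^{\Theta(\rho)}$, giving $O((1+\gamma)/\rho)$ levels and hence the quadratic overhead per simulated round. The concrete realization diverges, though, and the divergence hides a real gap. The paper does not route via sorting; it defines a \emph{neighborhood tree} that partitions the $q$ machines into nested blocks of sizes $\ell^{r-t}$ with $\ell = \Theta(n^{\rho/2})$, equips each packet with an explicit scoring function $z^{t,\tau}$ and itinerary $b^{t,\tau}$, and uses a tree-structured prefix-sum precomputation (propagate-up, aggregate, propagate-down) so that every machine can compute its own packets' itineraries locally before routing begins. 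The load balancing and bounded fan-in/fan-out are then proved by elementary claims about these scoring functions (Claims~\ref{claim:range}--\ref{claim:senders}); no sorting primitive is invoked.

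Your disperse--sort--gather alternative is plausible in outline, but the sorting step is where your argument breaks. You assert that bitonic sort on $q$ machines gives $O((\log_k q)^2) = O((1+\gamma-\delta)^2/\rho^2)$ stages, but bitonic sort is a binary comparison-exchange network: its round count is $O(\log^2 q) = \Theta\bigl((1+\gamma-\delta)^2 \log^2 n\bigr)$, not $O(1)$. The degree budget $k = n^\rho$ does not automatically compress the depth of a binary sorting network; you would need a genuinely $k$-ary sorting primitive (a columnsort or splitter-based bucket sort adapted to bounded fan-out), and establishing its round bound and memory bound is essentially the entire difficulty. Your fallback --- ``simulate each round of an unrestricted MPC sort by inserting $O(\delta/\rho)$ dispersal/gather rounds'' --- is circular: the standard constant-round MPC sort has per-round fan-out as large as $s = n^\delta \gg k$, so simulating one of its rounds under the degree constraint is exactly the all-to-all routing problem you set out to solve. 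You would need to either exhibit a bounded-degree sort with an honest round analysis, or abandon the sort and compute routing itineraries directly, which is what the paper's prefix-sum/scoring construction accomplishes.

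A smaller issue: the source-side ``dispersal tree'' assumes spare machines to act as forwarders and assumes local memory is never exceeded at intermediate levels. In the paper this is handled because the neighborhoods are a partition of the existing $q$ machines and Claim~\ref{claim:local} bounds each machine's load by $3s$ at every intermediate step; your sketch should explain where forwarder capacity comes from and why popular destinations do not overload any intermediate node, which in the paper is precisely the role of the $\ps^{t,\tau}$ and $\lds^{t,\tau}$ partial sums.
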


\begin{restatable}[Transformers simulate $(\gamma, \delta, \rho)$-MPC]{corollary}{cormpcsht}\label{cor:mpc-sht24}
For constants $\delta \in (0, 1)$ and $\gamma, \rho > 0$, an $R$-round deterministic $(\gamma, \delta, \rho)$-MPC protocol with $n$ inputs can be simulated by a transformer $f \in \trans{m, H, L}{N, N'}$ with depth $L = R + 1$, heads $H = 1$, embedding dimension $m = O(n^{\delta + 4\rho} \log n)$, context length $N = n$, and blank chain-of-thought tokens $N' = \max(0, O(n^{1 + \gamma - \delta}) - n)$. 
\end{restatable}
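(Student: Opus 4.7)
The plan is to view \Cref{cor:mpc-sht24} as a direct construction of a transformer that simulates the MPC protocol round-by-round, leveraging the fan-out bound $k = O(n^\rho)$ to control the rank needed in each self-attention layer. Alternatively, one can view it as a refinement of Theorem 3.1 of \cite{sht24}, where the bounded fan-out replaces the full local-memory width in the routing step. The first step is to allocate one token per MPC machine, yielding total sequence length $N + N' = q = \Theta(n^{1+\gamma-\delta})$: the first $\lceil n/s \rceil$ tokens carry the input (partitioned across local memories as prescribed by the MPC definition), and all remaining tokens are blank pause tokens. Each token's embedding is organized into three blocks: the current local memory of its machine ($O(n^\delta \log n)$ bits), an outgoing buffer of $k$ (destination-address, payload) pairs, and an incoming buffer of the same shape. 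One initial transformer layer formats the raw inputs into this block structure, accounting for the ``$+1$'' in $L = R+1$.

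The next step is to simulate each MPC round with a single attention-plus-MLP layer. The per-token MLP $\phi^\ell$ implements the MPC machine's arbitrary local transition function: from the local memory and the incoming buffer produced in the previous round, it computes the updated memory and populates the outgoing buffer with the $k$ destination-payload pairs to be transmitted. The attention mechanism then implements routing: for each receiver token, it should aggregate precisely those payloads whose destination tag equals the receiver's identifier. Since each machine sends to at most $k$ distinct machines and receives from at most $k$, the effective routing matrix is a sparse $\{0,1\}$-matrix with at most $k$ nonzeros per row and column. To express this with a single softmax head, I would construct queries from each token's identifier and keys from the destination tags in each token's outgoing buffer, using a sufficiently sharp (high-temperature) dot product so that softmax approximates hard selection within the $O(\log n)$-bit precision budget.

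The main obstacle will be the embedding-dimension accounting that yields exactly $m = O(n^{\delta+4\rho}\log n)$. The baseline cost $n^\delta \log n$ is just to carry each machine's local memory; the additional $n^{4\rho}$ factor should arise entirely from the routing mechanism. Each token must simultaneously carry $k = n^\rho$ destination tags (one factor of $n^\rho$), and to cleanly discriminate among up to $qk$ candidate (sender-slot, receiver) pairs, the $QK^\top$ quadratic form must realize address comparisons with margin large enough to survive softmax normalization and $O(\log n)$-bit precision; implementing such sharp selection with a single head typically costs an additional polynomial factor in the slot count. Together with padding for the softmax normalizer and for identifying unused slots, these combine to a factor of $n^{4\rho}$ in $m$. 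I do not expect the constant $4$ to be tight; it should emerge naturally from the worst-case composition of these factors.

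Correctness follows by induction on rounds: assuming the embedding at the end of layer $\ell$ faithfully encodes each machine's state at the end of MPC round $\ell$, the layer-$(\ell+1)$ construction reproduces the machines' messages and updated memories at the end of round $\ell+1$, with softmax-approximation error below precision. Reading the output from the first $\lceil n/s \rceil$ tokens matches the MPC output convention, and summing one setup layer with $R$ simulation layers gives depth $L = R+1$ as claimed.
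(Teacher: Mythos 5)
Your high-level architecture (one token per machine, embeddings carrying local memory and message buffers, MLPs for local transitions, attention for routing, an extra layer for input formatting) is the same template that the paper inherits from \cite{sht24}. The difference in method is that the paper does \emph{not} rebuild the simulation from scratch: it observes that the proof of Theorem~3.1 of \cite{sht24} already decomposes into input-formatting, per-round communication, and output-formatting lemmas, and that the only piece needing modification for $(\gamma,\delta,\rho)$-MPC is the routing step (their Lemma~B.7, which achieves $m = O(s^4\log q)$ using $H = O(\log\log q)$ heads). The paper then replaces that lemma with a single-headed variant, $m = O(k^4 s\log q)$ and $H=1$, which it attributes directly to Lemma~3.2 of \cite{sht24} --- that lemma already handles sparse routing with an explicit fan-out bound $k$. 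Plugging in $k = O(n^\rho)$, $s = O(n^\delta)$, $q = O(n^{1+\gamma-\delta})$ yields exactly $m = O(n^{\delta+4\rho}\log n)$.

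The genuine gap in your proposal is the routing step you yourself flag as ``the main obstacle.'' You correctly identify that each receiver must aggregate at most $k$ payloads with a single softmax head, and you correctly guess the mechanism (identifier-matching queries and keys, with a sharp dot product). But you do not actually construct the query/key/value embeddings, do not prove that a single head can realize the sparse $k$-in/$k$-out routing matrix within $O(\log n)$-bit precision, and do not derive the $k^4$ (i.e.\ $n^{4\rho}$) factor; you simply assert that it ``should emerge naturally from the worst-case composition of these factors,'' while noting you ``do not expect the constant $4$ to be tight.'' That constant is precisely the content of the result: the entire improvement over prior work is that bounded fan-out lets the routing cost scale as $k^4 s$ rather than $s^4$. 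Without a construction achieving that bound (or an explicit citation to one), the proof does not close. The paper avoids this by modularly reusing the sparse-routing lemma already proved in \cite{sht24}; if you want a self-contained argument, you would need to reproduce that construction, which involves carefully chosen nearly-orthogonal destination codes and an error analysis of the softmax under finite precision --- none of which appears in your sketch.

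Two smaller points: (i) your token count $N + N' = \Theta(n^{1+\gamma-\delta})$ is only correct when $\gamma \ge \delta$; when $\gamma < \delta$ the statement requires $N' = 0$ and $N = n$, so there are more tokens than machines and some tokens idle. (ii) The paper's extra ``$+1$'' layer corresponds to the input-formatting lemma of \cite{sht24}, consistent with your interpretation.
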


\begin{proof}[Proof of \Cref{thm:tight-mpc}]
Let $\rho := \min(\frac\delta2, \frac{\delta' - \delta}4) - \epsilon$ for some small constant $\epsilon$ (e.g. $\epsilon := \min(\frac\delta4, \frac{\delta' - \delta}8)$).
By \Cref{prop:mpc-sim}, we can simulate the target $R$-round $(\gamma, \delta)$-MPC protocol with an $R'$-round $(\gamma, \delta, \rho)$-MPC protocol for
\[R'=O\paren{\frac{R(1 + \gamma^2)}{\min((\delta' - \delta)^2, \delta^2)}}.\]
We apply \Cref{cor:mpc-sht24} to conclude that this latter protocol can be simulated by a transformer of depth $L = R' + 1$ and embedding dimension \[m = O(n^{\delta + 4 \rho} \log n) = O(n^{\delta'}).\qedhere\]
\end{proof}

We prove \Cref{prop:mpc-sim} in \Cref{assec:mpc-sim} by simulating a single round of a standard MPC with multiple rounds of restricted MPC, where messages are sent to intermediate ``neighborhoods''---which contain collections of machines with similar destinations---of increasing fine granularity.
We prove \Cref{cor:mpc-sht24} in \Cref{assec:mpc-sht24} by a minor adaptation of the proof of Theorem~3.1 of \cite{sht24}.

\subsubsection{Proof of \Cref{prop:mpc-sim}}\label{assec:mpc-sim}


Fix an $R$-round $(\gamma, \delta)$-MPC protocol $\pi$ with input size $n$, $q  = \Theta(n^{1 + \gamma - \delta})$ machines, and $s = O(n^\delta)$ local memory. 
To prove \Cref{prop:mpc-sim}, it suffices to show that a single round of MPC communication can be simulated an $O(\frac{(1 + \gamma)^2}{\rho^2})$-round $(\gamma, \delta, \rho)$-MPC protocol.

To formalize the communication procedure to simulate, we let $\outbox = (\outbox_1, \dots, \outbox_q)$ denote an encoding of the outgoing ``message packets'' from each source machine that obeys $\pi$'s local memory constraints. Concretely,
\[\outbox_i = \set{(\msg, \src, \dst): \src = i, \dst \in [q]} \ \st \ \sum_{\msg \in \outbox_i} |\msg| \leq s.\] 
Let $\inbox = (\inbox_1, \dots, \inbox_q)$ denote the same collection of message packets, organized by their destination machines.
\[\inbox_i = \set{(\msg, \src, \dst) \in \outbox_\src: \dst = i} \ \st \ \sum_{\msg \in \inbox_i} |\msg| \leq s.\] 
It suffices to prove \Cref{lemma:mpc-sim-round}.

\begin{lemma}
[$(\gamma, \delta, \rho)$-MPC simulates one communication round of $(\gamma, \delta)$-MPC]
\label{lemma:mpc-sim-round}
If $\rho < \frac\delta2$, there exists an $O(\frac{(1 + \gamma)^2}{\rho^2})$-round $(\gamma, \delta, \rho)$-MPC protocol that takes as input any $\outbox$ and returns the corresponding $\inbox$.
\end{lemma}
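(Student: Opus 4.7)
The plan is to recast the simulation of one $(\gamma,\delta)$-MPC communication round as a global routing problem on at most $qs = O(n^{1+\gamma})$ message words, and to solve it by a hierarchical decomposition of the $q = \Theta(n^{1+\gamma-\delta})$ machines into neighborhoods of geometrically refining granularity. I would arrange the machines as the leaves of a balanced $k$-ary tree of depth $D = \lceil \log_k q \rceil = \Theta((1+\gamma-\delta)/\rho)$; each internal node identifies a sub-neighborhood, and sibling sub-neighborhoods at the same level are pairwise disjoint. Messages are then routed top-down: in phase $\ell \in \{1, \dots, D\}$, every message $(\msg,\src,\dst)$ currently held inside the level-$(\ell-1)$ neighborhood of $\dst$ is forwarded to some machine inside the strictly finer level-$\ell$ neighborhood of $\dst$. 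Because each level-$(\ell-1)$ node has exactly $k$ children, a single phase requires each participating machine to interact with only $O(k)$ designated representatives, which respects the $(\gamma,\delta,\rho)$ fan-out cap $k = n^\rho$.

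The main obstacle, and the reason the bound is quadratic in $D$ rather than linear, is load balancing within a phase. After a refinement step one sibling neighborhood can be the target of far more messages than another, potentially overflowing the $s$-word local memory of any single receiver. To respect the memory cap I would nest a \emph{redistribution sub-protocol} inside every phase, which evenly spreads the arriving messages across the memory available in the receiving sub-neighborhood before the next refinement begins. The hypothesis $\rho < \delta/2$ is exactly what makes this inner sub-protocol viable: it ensures that the cumulative data aggregated at any intermediate level of the tree (roughly $k^c \cdot s$ words for a small constant $c$) stays well below the total memory of the neighborhood that must absorb it, so that sorting and redistribution can be carried out entirely by $k$-restricted local exchanges, e.g.\ via a bitonic-sort-style comparator network or by a recursive call to the same hierarchical scheme, in $O(D)$ rounds.

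Composing the $D$ outer phases with the $O(D)$-round inner redistribution sub-protocol yields the claimed $O(D^2) = O((1+\gamma)^2/\rho^2)$ bound; the final phase then delivers each message to its destination machine, at which point the output $\inbox$ is correctly assembled. The bookkeeping I expect to be most delicate is maintaining two invariants throughout the simulation: first, that after each phase every machine's local memory still holds at most $s = O(n^\delta)$ words, and second, that the redistribution sub-protocol genuinely respects the $k$-capacity constraint during each of its intermediate comparator or exchange steps rather than quietly exceeding it. Tracking the total per-neighborhood flows against the per-machine memory bound $s$ and the per-machine fan-out bound $k$ at every level of the recursion is where the proof will need to spend most of its technical effort; once these invariants are verified, the conclusion follows mechanically.
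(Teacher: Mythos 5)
Your high-level plan matches the paper's: decompose the $q$ machines into a hierarchy of geometrically refining neighborhoods, refine the destination neighborhood of each packet phase by phase, and pay an inner $O(D)$ redistribution cost per phase to respect the local-memory bound, yielding $O(D^2)$ rounds total. The paper's proof of \Cref{lemma:mpc-sim-round} defines the exact same structure of nested neighborhoods $\nbhd^t_j$ and valid ``intermediate outboxes'' $\outbox^t$, and its companion \Cref{lemma:mpc-sim-step} plays precisely the role of your inner sub-protocol.

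Where your proposal stops short is exactly the place you flag as requiring ``most of its technical effort'': the per-phase redistribution. The paper does not use a sorting network or a recursive call; instead it precomputes, in an $O(D)$-round propagate-up/aggregation/propagate-down pass, the partial-sum quantities $\ps^{t,\tau}$, $\lds^{t,\tau}$, $\ns^t$, and from them builds an explicit deterministic \emph{scoring function} $z^{t,\tau}(\packet)$ and \emph{itinerary function} $b^{t,\tau}(\packet)$. The claims about the scoring function (range, gap, local-memory bound, intra-class distance preservation, distinct-recipient/sender bounds) are what simultaneously guarantee memory $O(s)$, fan-out $O(\ell^2)$, and an $O(D)$-round execution. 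A ``bitonic-sort-style'' redistribution would need a nontrivial argument to stay within $O(D)$ rounds under a fan-out cap of $n^\rho$ on neighborhoods of polynomial size; it is far from immediate. Relatedly, you take the branching factor to be $k = n^\rho$ and assert fan-out $O(k)$, but in the paper's actual routing the per-round fan-out comes out to $O(\ell^2)$ rather than $O(\ell)$, forcing $\ell = \Theta(n^{\rho/2})$; your sketch does not confront this quadratic blow-up in the number of distinct communication partners, which is one of the reasons the inner protocol is delicate. So: same skeleton, correct intuition about what must be controlled, but the load-balancing mechanism that makes the skeleton walk is left as a black box, and it is the entire content of the lemma.
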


\begin{proof}

We define $r = O(\frac{1 + \gamma}\rho)$ intermediate steps and prove that each of those can be simulated.
The intuition is that each an intermediate step routes each packet $(\msg, \src, \dst)$ to a machine that belongs to the same ``neighborhood'' as $\dst$. 
Each step maps each packet to a neighborhood of smaller radius than the step before, until all packets have been transmitted to their proper destination location.

We define a tree of neighborhoods as follows.
Fix some branching factor $\ell = \Theta(n^{\rho/2})$ and number of intermediate steps $r = \ceil{\frac{\log q}{\log \ell}} = O(\frac{1 + \gamma}\rho)$.
For any step $t = 0, \dots, r$ and neighborhood index $j = 1, \dots, b_t := \ceil{\frac{q}{\ell^{r-t}}}$, we define
\[\nbhd^t_j = \set{(j-1) \ell^{r-t} + 1, (j-1) \ell^{r-t} + 2, \dots, j \cdot \ell^{r-t}} \cap [q]\]
and observe that it satisfies the size constraint $|\nbhd^t_j| \leq \ell^{r-t}$.
We let $\nbhd^t(\dst)$ denote the unique neighborhood $\nbhd^t_j$ satisfying $\dst \in \nbhd^t_j$.
We say that $\nbhd^t_j < \nbhd^t_{j'}$ if $j < j'$ (and equivalently, for all $i \in \nbhd^t_j$ and $i' \in \nbhd^t_{j'}$, $i < i'$).
Let 
\begin{align*}
    \children(\nbhd^t_j) &= \set{\nbhd^{t+1}_{(j-1)\ell + 1}, \dots, \nbhd^{t+1}_{j\ell}}, \\
    \desc^\tau(\nbhd^t_j) &= \set{\nbhd^{\tau}(i): i \in \nbhd^t_j},\\
    \parent(\nbhd^t_j) &= \nbhd^{t-1}_{\ceil{j / \ell}}, \\ 
    \ancestor^t(\nbhd^\tau_j) &= \nbhd^t(i) \ \text{for $i \in \nbhd^\tau_j$}.
\end{align*}
for some $\tau \geq t$.

Note that the following are true of neighborhoods.
\begin{itemize}
    \item The initial ``step 0'' neighborhood contains all machines (i.e. $\nbhd^0_1 = [q]$), and the final ``step $r$'' neighborhoods contain a single machine (i.e. $\nbhd^r_j = \{j\}$ for all $j \in [q]$).
    \item For any $t < r$ and $j \in [b_t]$, $\nbhd^t_j$ is the disjoint union of all sets in $\children(\nbhd^t_j)$ and $|\children(\nbhd^t_j)| \leq \ell$.
    \item $\set{\nbhd^t_1, \dots, \nbhd^t_{b_t}}$ comprise a disjoint union of $[q]$.
    \item For any $\dst \in [q]$, there exist unique sets $\nbhd^0(\dst), \dots, \nbhd^r(\dst)$ that satisfy \[\nbhd^r(\dst) \subset \dots \subset \nbhd^0(\dst)\]
    and $\nbhd^{t-1}(\dst) = \parent(\nbhd^{t}(\dst))$.
\end{itemize}

\begin{figure}[t]
    \centering
\resizebox{!}{0.8\textwidth}{
\begin{tikzpicture}[scale=0.4,->,>=stealth',shorten >=1pt,auto,node distance=3cm,
                    thick,main node/.style={circle,draw,font=\sffamily\small\bfseries}]
\node[align=center](0) at (3,30){Step $0$\\ \tiny 1 $\nbhd$\\ \tiny $q$ machines per $\nbhd$};
\draw[rounded corners=5pt, fill=blue!20] (0,0) rectangle (6,28); 
\node[align=center](0) at (3,27.25){$\nbhd^0_1$};
\node[draw, rounded corners, align=center, fill=pink] (1) at (3,25.5){Machine 1\\$\outbox^0_1$};
\node[draw, rounded corners, align=center, fill=pink] (2) at (3,22){Machine 2\\$\outbox^0_2$};
\node[draw, rounded corners, align=center, fill=pink] (3) at (3,1.5){Machine $q$\\$\outbox^0_q$};
  \foreach \y in {7,9,11,13,15,17,19}  
    \filldraw[black] (3,\y) circle (2pt); 
\node[align=center](0) at (13,30){Step $1$\\ \tiny $\ell$ $\nbhd$s\\ \tiny $q/\ell$ machines per $\nbhd$};
\draw[rounded corners=5pt,fill=blue!20] (10,17) rectangle (16,28); 
\node[align=center](4) at (13,27.25){$\nbhd^1_1$};
\node[draw, rounded corners, align=center,fill=pink] (5) at (13,25.5){Machine 1\\$\outbox^1_1$};
  \foreach \y in {22,22.5,23}  
    \filldraw[black] (13,\y) circle (1pt); 
\node[draw, rounded corners, align=center,fill=pink] (6) at (13,19){Machine $q/\ell$\\$\outbox^1_{q/\ell}$};
  \foreach \y in {12,14,16}  
    \filldraw[black] (13,\y) circle (2pt); 
\draw[rounded corners=5pt,fill=blue!20] (10,0) rectangle (16,11); 
\node[align=center](0) at (13,10){$\nbhd^1_\ell$};
  \foreach \y in {5,6,7,8,9}  
    \filldraw[black] (13,\y) circle (1pt); 
\node[draw, rounded corners, align=center,fill=pink] (2) at (13,1.5){Machine $q$\\$\outbox^1_{q}$};

\draw[->] (5.25,25) -- (12,2.8); 
\draw[->] (5.25,25) -- (10.5,20);

\draw[->] (5.25,21) -- (12,2.8); 
\draw[->] (5.25,21) -- (10.5,20);
\draw[->] (5.25,21) -- (10.8,24);

\draw[->] (5.25,2) -- (10.8,2.4); 
\draw[->] (5.25,2) -- (10.2,20);
\draw[->] (5.25,2) -- (10.5 ,23);

\node[align=center](0) at (23,30){Step $2$\\ \tiny $\ell^2$ $\nbhd$s\\ \tiny $q/\ell^2$ machines per $\nbhd$};
\draw[rounded corners=5pt,fill=blue!20] (20,24) rectangle (26,28); 
\node[align=center](0) at (23,27.25){$\nbhd^2_1$};
 \foreach \y in {25,25.6,26}  
    \filldraw[black] (23,\y) circle (1pt); 
\foreach \y in {22,22.5,23}  
    \filldraw[black] (23,\y) circle (1pt);
\draw[rounded corners=5pt,fill=blue!20] (20,17) rectangle (26,21); 
\node[align=center](0) at (23,20){$\nbhd^2_\ell$};
 \foreach \y in {18,18.5,19}  
    \filldraw[black] (23,\y) circle (1pt);     
  \foreach \y in {13,14,15}  
    \filldraw[black] (23,\y) circle (2pt); 
   
\draw[rounded corners=5pt,fill=blue!20] (20,7) rectangle (26,11); 
\node[align=center](0) at (23,10){$\nbhd^2_{\ell^2-\ell+1}$};
 \foreach \y in {8,8.5,9}  
    \filldraw[black] (23,\y) circle (1pt); 
\foreach \y in {5,5.5,6}  
    \filldraw[black] (23,\y) circle (2pt);
\draw[rounded corners=5pt,fill=blue!20] (20,0) rectangle (26,4); 
\node[align=center](0) at (23,3){$\nbhd^2_{\ell^2}$};
     \foreach \y in {1,1.5,2}  
    \filldraw[black] (23,\y) circle (1pt); 

\draw[->] (15.25,24) -- (22,25.4); 
\draw[->] (15.25,24) -- (20.5,20);
\draw[->] (15.7,19) -- (22,24.4); 
\draw[->] (15.7,19) -- (20.5,19);
\draw[->] (15.25,2) -- (22,2); 
\draw[->] (15.25,2) -- (22,8); 
\foreach \x in {27,28,29}  
  \filldraw[black] (\x,14) circle (2pt);

\foreach \x in {27,28,29}  
  \filldraw[black] (\x,30) circle (1 pt);

\node[align=center](0) at (33,30){Step $r$\\ \tiny $q$ $\nbhd$s\\ \tiny $1$ machine per $\nbhd$};
\draw[rounded corners=5pt,fill=blue!20] (30,24) rectangle (36,28); 
\node[align=center](0) at (33,27.25){$\nbhd^r_1$};
\node[draw, rounded corners, align=center,fill=pink] (6) at (33,25.5){Machine $1$\\$\outbox^r_{1}$};
\foreach \y in {7,9,11,13,15,17,19,21}  
    \filldraw[black] (33,\y) circle (2pt);
\draw[rounded corners=5pt,fill=blue!20] (30,0) rectangle (36,4); 
\node[align=center](0) at (33,3.3){$\nbhd^r_q$};
\node[draw, rounded corners, align=center,fill=pink] (6) at (33,1.5){Machine $q$\\$\outbox^r_{q}$};

\draw[->] (28,26) -- (31,26);
\draw[->] (28,25) -- (31,25);

\draw[->] (28,2) -- (31,2);
\draw[->] (28,1) -- (31,1);
\end{tikzpicture}
}
\caption{The neighborhood routing structure.}
\label{fig:neighborhoods}
\end{figure}
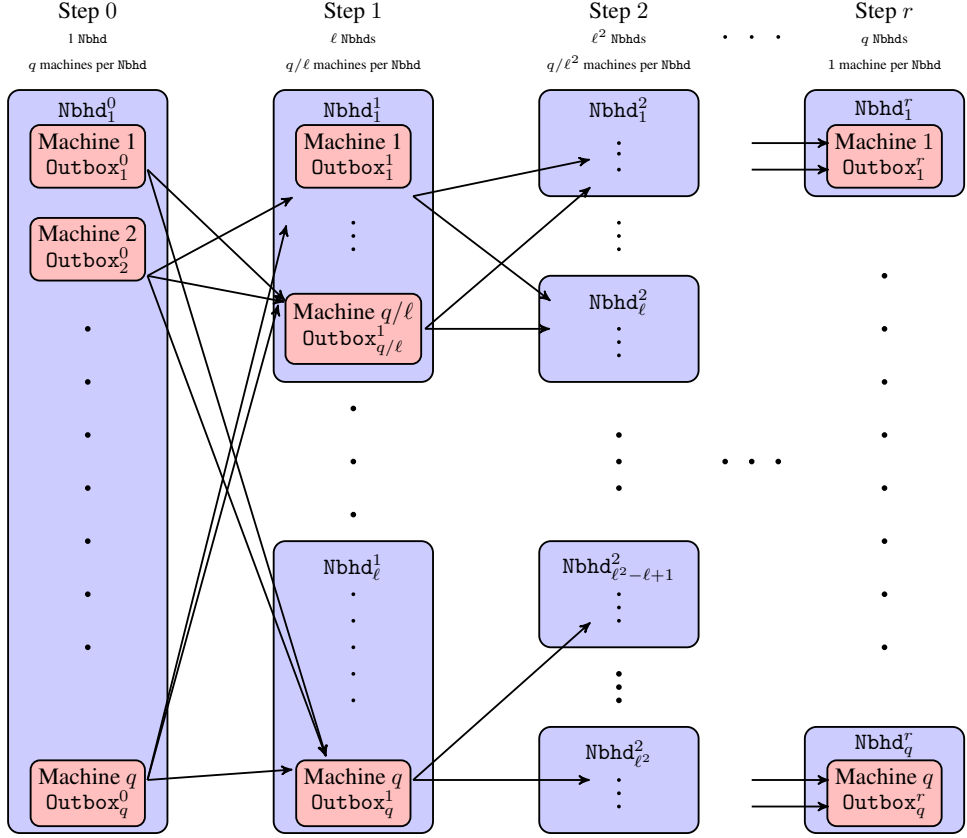

We define a collection of MPC machine states as ``intermediate outboxes'' $\outbox^t = (\outbox^t_1, \dots, \outbox^t_q)$ for each step $t$ to represent an assignment of message packets to machines in the same $t$th-level neighborhood as the packet destination.
That is, we require that each $\outbox^t_i$ satisfy
\begin{equation}\label{eq:outbox}
\outbox^t_i = \set{(\msg, \src, \dst): i \in \nbhd^t(\dst)}
\end{equation}
We say that $\outbox^t$ are \emph{valid intermediate outboxes} for $\outbox$ if:
\begin{enumerate}
    \item $\outbox^t$ satisfies \Cref{eq:outbox};
    \item there is a one-to-one mapping between packets in $\outbox$ and $\outbox^t$; and
    \item local memory constraints are maintained up to a factor of two\footnote{For the sake of simplicity, we assume that the encoding $\msg$ also contains all relevant metadata about the packet $\src$ and $\dst$. That is, we only track the message size $|\msg|$.}
\[\sum_{\msg \in \outbox^t_i} |\msg| \leq 2s.\]
\end{enumerate}

Note that $\outbox$ satisfies the conditions for $\outbox^0$, and the only sequence of embeddings that satisfies the conditions for $\outbox^r$ is $\inbox$.
An inductive argument that applies \Cref{lemma:mpc-sim-step} $r$ times completes the proof.
\end{proof}


\begin{lemma}
[$(\gamma, \delta, \rho)$-MPC simulates one intermediate step]\label{lemma:mpc-sim-step}
For any $t \in \set{ 0, \dots, r-1}$ and $\rho < \frac\delta2$, there exists an $O(\frac{1 + \gamma}{\rho})$-round $(\gamma, \delta, \rho)$-MPC protocol that takes as input any satisfactory $\outbox^t$ and returns some satisfactory $\outbox^{t+1}$.
\end{lemma}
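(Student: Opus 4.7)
Plan: The step decomposes cleanly across the disjoint $t$-level neighborhoods $\nbhd^t_1, \dots, \nbhd^t_{b_t}$: every packet currently held in $\nbhd^t_j$ is destined for a machine inside $\nbhd^t_j$ (because $\nbhd^{t+1}(\dst) \subset \nbhd^t(\dst)$), so no cross-neighborhood traffic is needed. I would design a subroutine that acts inside one neighborhood and run it in parallel across all $b_t$ of them; the per-round memory and fan-out bounds of the $(\gamma,\delta,\rho)$-MPC model are preserved because each machine participates in at most one routing. Fix $\nbhd^t_j$ with $L := |\nbhd^t_j| \leq \ell^{r-t}$. A simple accounting using the original $\inbox$ bound $\sum_{\msg \in \inbox_{\dst}} |\msg| \leq s$ applied to each final destination shows that the total packet volume destined for any child $\nbhd^{t+1}_{j'} \in \children(\nbhd^t_j)$ is at most $s|\nbhd^{t+1}_{j'}| = s\ell^{r-t-1}$; spreading this volume uniformly across the $\ell^{r-t-1}$ machines of that child yields at most $s$ bits per machine, well within the $2s$ invariant.

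Within $\nbhd^t_j$, I would reduce the task to a $k$-ary bucket sort on the keys ``index of the target child in $[\ell]$''. Recall that $k = n^\rho = \ell^2$, so a standard oblivious $k$-way merge-sort proceeds in $\lceil \log_k L \rceil$ rounds, each round merging $k$ already-sorted blocks into one; with $k = \ell^2$ and $L \leq \ell^{r-t}$, the depth is at most $(r-t)/2 \leq r/2 = O((1+\gamma)/\rho)$, exactly matching the target round budget. The hypothesis $\rho < \delta/2$ is precisely what makes each merge round implementable under the $(\gamma,\delta,\rho)$ constraints: since $k^2 = n^{2\rho} < n^\delta = s$, a single merging machine can simultaneously pull data from $k$ upstream blocks whose combined size fits within $s$ bits of memory. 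After the sort, packets sharing a target child occupy a contiguous block of $\ell^{r-t-1}$ consecutive machines, which by the contiguous index structure defining $\children(\nbhd^t_j)$ coincides exactly with one of the children.

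This yields the three required invariants for $\outbox^{t+1}$: no packet is lost or duplicated (sorting is a permutation), each packet $(\msg, \src, \dst)$ now resides on a machine of $\nbhd^{t+1}(\dst)$ by the alignment of the sorted output with the index ranges defining the children, and no machine holds more than $2s$ bits by the load-balance calculation of the first paragraph. The main obstacle is coordinating the $k$-ary merge rounds so that neither the $s$-bit memory bound nor the $k$-machine fan-out bound is breached mid-sort: a naive consolidation that tried to gather all $s\ell^{r-t-1}$ bits destined for some single child onto one machine would overflow memory by a factor of $\ell^{r-t-1}$. The hypothesis $\rho < \delta/2$ is exactly what permits each merge step to work on $k$ partial blocks whose combined size is at most $s$ per merging machine, so that the factor-of-two slack in the $2s$ invariant is enough to absorb the transient imbalance produced during the sort.
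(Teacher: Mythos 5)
Your high-level plan matches the paper's: you correctly observe that the step decomposes across disjoint $t$-level neighborhoods, that the goal is to route each packet to a machine in its target child neighborhood, and that the $\inbox$ capacity bound implies the total volume aimed at any child fits across that child's machines within the $O(s)$ budget. You also correctly identify the key hazard (transient concentration during consolidation) and that $\rho < \delta/2$ must be used to keep each step local.

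However, there is a genuine gap in the central step. You invoke ``a standard oblivious $k$-way merge-sort'' on the target-child key, but no such plug-in primitive exists that is simultaneously deterministic, load-balanced to $O(s)$ per machine at every intermediate round, and respects a $k = O(n^\rho)$ fan-out constraint, without further bookkeeping. Your claim that ``each merging machine can simultaneously pull data from $k$ upstream blocks whose combined size fits within $s$ bits of memory'' is not justified: the $k$ upstream blocks hold up to $ks \gg s$ words in total, and to receive only an $O(s)$-sized slice the merging machine needs to know \emph{exactly where to cut} each block. Determining those cut points is equivalent to computing per-packet prefix sums of message sizes, grouped by destination child — which is precisely what the paper's pre-computation phase does. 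Concretely, the paper spends $O(r-t)$ rounds of propagate-up, aggregation, and propagate-down over the neighborhood tree to compute the quantities $\ps^{t,\tau}$ and $\lds^{t,\tau}$, uses them to define explicit itinerary functions $b^{t,\tau}(\packet) = \lfloor z^{t,\tau}(\packet)/(2s) \rfloor$, and then proves a sequence of claims (itinerary range, gaps between scores, local memory bound, intra-class distance preservation, and the distinct-senders and distinct-recipients bounds) to verify that the resulting routing never exceeds $O(s)$ words per machine or $O(\ell^2)$ partners per round. This prefix-sum machinery is the technical heart of the lemma; without it, your merge-sort is a correct sketch of the \emph{goal} but not a protocol. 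Your round count $(r-t)/2$ also omits this pre-computation, which itself costs $\Theta(r-t)$ rounds, though the total is still $O(\frac{1+\gamma}{\rho})$ so this is a minor accounting point compared to the missing mechanism.
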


\begin{proof}
The protocol operates in two phases:
\begin{enumerate}
    \item The \emph{pre-computation phase}, where relevant message size metadata is computed using $O(r - t)$ rounds of communication.
    \item The \emph{message-passing phase}, where all messages are propagated in $r - t$ rounds of communication in order to convert $\outbox^t$ to $\outbox^{t+1}$.
\end{enumerate}
Since $r = O(\frac{1 + \gamma}\rho)$, the bound on total number of rounds is satisfied.

The algorithm maintains the invariant that all communication occurs within $t$-level neighborhoods $\nbhd^t_j$ without any mixing between neighborhoods; this is possible because of the assumed validity of $\outbox^t$, which implies that all messages whose packets appear in $\nbhd^t_j$ of $\outbox^t$ have ultimate destinations in $\nbhd^t_j$.
Concretely, if $(\msg, \src, \dst) \in \outbox^t_i$ and $i \in \nbhd^t_j$, then $\dst \in \nbhd^t_j$.

We first explain the routing strategy for the message-passing phase by describing an itinerary of machines that each message packet will be routed to and proving that this itinerary meets the conditions needed to be executed by an $(r-t)$-round $(\gamma, \delta, \rho)$-MPC protocol.
We then describe how the metadata necessary for the itinerary computations can be obtained during the pre-computation phase.

\paragraph*{Message-passing phase.}
We first introduce some \textit{packet} notation to track all relevant metadata about any message in the $t$th intermediate step.
Fix some particular input $\outbox^t$.
\begin{itemize}
    \item Let $\packet = (\msg, \src, \dst, \src^t)$ denote a \textit{packet} that contains a message $\msg$ and metadata concerning its ``global'' source $\src$ and destination $\dst$ (i.e. $(\msg, \src, \dst) \in \outbox_\src, \inbox_\dst$) and its ``local'' source $\src^t$ from within $\outbox^t$ (i.e. $(\msg, \src, \dst) \in \outbox^t_{\src^t}$).
    \item We write $\packet \in \outbox^t_i$ if $\src^t = i$ 
    and $\packet \in \nbhd$ if $\packet \in \outbox^t_i$ for some $i \in \nbhd$.
    \item Let $\srcnbhd^{t, \tau}(\packet) = \nbhd^\tau(\src^t)$ represent the neighborhood of size $\ell^{r-\tau}$ contains $\packet$ (i.e. $\packet \in \srcnbhd^{t, \tau}(\packet)$) and $\dstnbhd^{t}(\packet) = \nbhd^{t+1}(\dst)$ denote the neighborhood of size $\ell^{r-t-1}$ that contains the ultimate destination $\dst$. Because $\packet \in \nbhd^t_j$ if and only if $\dst \in \nbhd^t_j$, it follows that $\dstnbhd^t(\packet) \subset \srcnbhd^{t, t}(\packet)$.
    \item Let $|\packet| = |\msg|$ be the total number of bits needed to encode the packet.
    \item We establish a lexicographical ordering that depends first on $\src^t$ and next on $\dst$.
    That is, we say that $P' < P$ for some $P' = (\msg', \src', \dst', \src^{t \prime})$ if $\src^{t\prime} < \src^t$, or if $\src^{t \prime} = \src^t$ and $\dst' < \dst$.

\end{itemize}

We develop a \emph{packet scoring function} $z^{t, \tau}(\packet)$, which in turn induces an \emph{itinerary function} $b^{t, \tau}(\packet) \in [q]$ that assigns an intermediate machine to hold packet $\packet$ after $r-\tau$ communication steps.
These functions are carefully chosen in order to ensure that the local memories of each individual machine are bounded by $O(s)$ and that the number of machine each sends messages to and receives messages from is bounded by $O(\ell^2)$.
Critically, we ensure that $b^{t, \tau}(\packet) \in \srcnbhd^{t, \tau}(\packet)$; that is, we initially rearrange packets within solely within the smallest neighborhoods of $\outbox^t$ and gradually propagate packets more widely.
That way, each packet $\packet$ obeys a trajectory of the following form over $r-t$ MPC rounds:
\begin{multline*}
    b^{t, r}(\packet) = \src^t \in \srcnbhd^{t, r}(\packet) \Longrightarrow b^{t, r-1}(\packet) \in \srcnbhd^{t, r-1}(\packet) \Longrightarrow \dots \\  \Longrightarrow b^{t, t+1}(\packet) \in \srcnbhd^{t, t+1}(\packet)  \Longrightarrow b^{t, t}(\packet) \in \dstnbhd^t(\packet).
\end{multline*}

To define these functions, we first define partial sums of message sizes in order to later determine an itinerary of machines that $\packet$ will be routed to in between $\src^t$ and some destination machine in $\dstnbhd^t(\packet)$.
The first term, $\ps^{t, \tau}(\packet)$, sums the size of all ``lesser'' packets that share a destination neighborhood $\dstnbhd^t(\packet)$ and a $\tau$th level input neighborhood $\srcnbhd^{t, \tau}(\packet)$:
\begin{align*}
    \ps^{t, \tau}(\packet) &= \sum_{\substack{\packet' \in \srcnbhd^{t, \tau}(\packet) \\ \dstnbhd^t(\packet') = \dstnbhd^t(\packet) \\ \packet' < \packet}} |\packet'|.
\end{align*}
The second term, $\lds^{t, \tau}(\packet)$, sums the size of all packets that share an input neighborhood but have a ``smaller'' destination neighborhood than $\packet$:
\begin{align*}
 \lds^{t, \tau}(\packet) &= \lds^t(\srcnbhd^{t, \tau}(\packet), \dstnbhd^{t}(\packet)),
\end{align*}
where
\begin{align*}
    \lds^t(\srcnbhd, \dstnbhd) &= \sum_{\substack{\packet' \in \srcnbhd \\ \dstnbhd^{t}(\packet') < \dstnbhd}} |\packet'|.
\end{align*}

We now define the packet scoring and itinerary functions for any $\tau \in \set{t, \dots, r}$:
\[z^{t, \tau}(\packet) = \begin{cases}
2s \cdot \min \srcnbhd^{t, \tau}(\packet) + \lds^{t, \tau}(\packet) + \ps^{t, \tau}(\packet) & \tau \geq t + 1, \\
2s \cdot \min \dstnbhd^{t}(\packet)  + 2 \cdot \ps^{t, \tau}(\packet) & \tau = t.
\end{cases}\]
\[b^{t, \tau}(\packet) = \floor{\frac{z^{t, \tau}(\packet)}{2s}}.\]
We prove a series of claims to establish that the packet scoring and itinerary functions are properly defined.

\begin{claim}[Itinerary range]\label{claim:range}
For any $\tau \in \set{t, \dots, r}$, the itinerary function satisfies
\[b^{t, \tau}(\packet) \in \begin{cases}
\srcnbhd^{t, \tau}(\packet) & \tau \geq t + 1 \\
\dstnbhd^t(\packet) & \tau = t.
\end{cases}\]
As an immediate consequence, $b^{t, r}(\packet) = \src^t$ for $\packet = (\msg, \src, \dst, \src^t)$.
\end{claim}
\begin{proof}
We first bound the scoring function $z^{t, \tau}$.
Note that 
\begin{align*}
\lds^{t, \tau}(\packet) + \ps^{t, \tau}(\packet) 
&\leq \sum_{\packet' \in \srcnbhd^{t, \tau}(\packet)} |\packet'| - |\packet| \\
&\leq |\srcnbhd^{t, \tau}(\packet)| \cdot 2s - |\packet|.
\end{align*}
Therefore, for $\tau > t$, 
\begin{equation}\label{eq:src-nbhd-gap}
    z^{t, \tau}(\packet) \in \left[2s \cdot \min\srcnbhd^{t, \tau}(\packet), 2s \cdot (\min\srcnbhd^{t, \tau}(\packet) +  |\srcnbhd^{t, \tau}(\packet)|) - |\packet|\right],
\end{equation}
and
\[b^{t, \tau}(\packet) \in [\min\srcnbhd^{t, \tau}(\packet), \min\srcnbhd^{t, \tau}(\packet) +  |\srcnbhd^{t, \tau}(\packet)|),\]
which proves the first case of the claim.
The second case follows by observing that 
\[\ps^{t, t}(\packet) \leq s \cdot |\dstnbhd^t(\packet)| - |\packet|.\]
Were it not true, there would exist at least one machine $i \in \dstnbhd^t(\packet)$ that must receive more an $s$ quantity of messages at the end of the entire round of the protocol $\pi$ (i.e. $\sum_{\msg \in \inbox_i} |\msg| > s$), which contradicts the MPC assumption.
\end{proof}

\begin{claim}[Gaps between scores]\label{claim:gaps}
If $\packet_1 \neq \packet_2$ and $z^{t, \tau}(\packet_1) \leq z^{t, \tau}(\packet_2)$, then \[ z_{t, \tau}(\packet_1) + |\packet_1| \leq z^{t, \tau}(\packet_2).\]
\end{claim}
\begin{proof}
First, let $\tau > t$.
Consider the case where $\srcnbhd^{t, \tau}(\packet_1) \neq \srcnbhd^{t, \tau}(\packet_2)$.
By \Cref{claim:range} and our assumption that $z^{t, \tau}(\packet_1) \leq z^{t, \tau}(\packet_2)$, it must be the case that $\srcnbhd^{t, \tau}(\packet_1) < \srcnbhd^{t, \tau}(\packet_2)$.
Hence, we have the following by applying \Cref{eq:src-nbhd-gap}.
\begin{align*}
    &z^{t, \tau}(\packet_2) - z_{t, \tau}(\packet_1)\\
    &\quad\geq 2s \cdot (\min\srcnbhd^{t, \tau}(\packet_2) - (\min\srcnbhd^{t, \tau}(\packet_1) + |\srcnbhd^{t, \tau}(\packet_1)| - |\packet_1|))  \\
    &\quad\geq |\packet_1|.
\end{align*}
Otherwise, if $\srcnbhd^{t, \tau}(\packet_1) = \srcnbhd^{t, \tau}(\packet_2)$, then 
\begin{align*}
    z^{t, \tau}(\packet_2) - z_{t, \tau}(\packet_1)
    &= (\lds^{t, \tau}(\packet_2) + \ps^{t, \tau}(\packet_2)) - (\lds^{t, \tau}(\packet_1) + \ps^{t, \tau}(\packet_1))  \\
    &= \sum_{\packet' \in S_2} |\packet'| - \sum_{\packet' \in S_1} |\packet'|,
\end{align*}
for some packet subsets $S_1, S_2 \subset \srcnbhd^{t, \tau}(\packet_1)$.
By further inspecting the respective $\lds^{t,\tau}$ and $\ps^{t, \tau}$ terms, we observe that $S_1 \subset S_2$ and $\packet_1 \in S_2 \setminus S_1$.
The claim immediately follows.

The argument for the case $\tau = t$ is nearly identical.
\end{proof}

\begin{claim}[Local memory bound]\label{claim:local}
For any $b \in \N$, 
\begin{align*}
    \sum_{\packet: b^{t, \tau}(\packet) = b} |\packet| \leq \begin{cases}
    2s & \tau \in \set{t, r} \\
    3s & \tau \in \set{t + 1, \dots, r}.\\
    \end{cases}
\end{align*}
\end{claim}
\begin{proof}
The case $\tau = r$ is an immediate consequence of the inductive assumption that $\outbox^t$ satisfies the desired intermediate properties.

For all other cases, let $\set{\packet_1, \dots, \packet_n}$ denote all packets with $b^{t, \tau}(\packet_i) = b$ and let $z^{t, \tau}(\packet_1) \leq \dots \leq z^{t, \tau}(\packet_n)$ without loss of generality.
We use \Cref{claim:gaps}, the assumption that all $|\packet_i| \leq s$, and the boundedness of $z^{t, \tau}(\packet_i)$ from \Cref{claim:range} to conclude the proof.
\begin{align*}
    \sum_{i=1}^n |\packet_i|
    &\leq \sum_{i=1}^{n-1} (z^{t, \tau}(\packet_{i+1}) - z^{t, \tau}(\packet_i)) + |\packet_n| \\
    &\leq z^{t, \tau}(\packet_{n}) - z^{t, \tau}(\packet_1) + s \\
    &\leq \begin{cases}
    2s & \tau = t, \\
    3s & \tau > t.
    \end{cases}\qedhere
\end{align*}
\end{proof}

\begin{claim}[Intra-class distance preservation]\label{claim:preserve}
If $\packet_1$ and $\packet_2$ satisfy $\srcnbhd^{t, \tau+1}(\packet_1) = \srcnbhd^{t, \tau+1}(\packet_2)$ and $\dstnbhd^t(\packet_1) = \dstnbhd^t(\packet_2)$, then
\[z^{t, \tau}(\packet_1) - z^{t, \tau}(\packet_2) = z^{t, \tau+1}(\packet_1) - z^{t, \tau+1}(\packet_2).\]
\end{claim}
\begin{proof}
Since $\srcnbhd^{t, \tau+1}(\packet_1) = \srcnbhd^{t, \tau+1}(\packet_2)$, it follows that $\srcnbhd^{t, \tau}(\packet_1) = \srcnbhd^{t, \tau}(\packet_2)$ and therefore,
\begin{align*}
    z^{t, \tau}(\packet_1) - z^{t, \tau}(\packet_2) &= \ps^{t, \tau}(\packet_1) - \ps^{t, \tau}(\packet_2) \\
    &= \sum_{\substack{\packet' \in \srcnbhd^{t, \tau}(\packet_1) \\ \dstnbhd^t(\packet') = \dstnbhd^t(\packet_1) \\ \packet_1 \leq \packet' < \packet_2}} |\packet'| \numberthis \label{eq:tau} \\
    z^{t, \tau+1}(\packet_1) - z^{t, \tau+1}(\packet_2) &= \ps^{t, \tau+1}(\packet_1) - \ps^{t, \tau+1}(\packet_2) \\
    &= \sum_{\substack{\packet' \in \srcnbhd^{t, \tau+1}(\packet_1) \\ \dstnbhd^t(\packet') = \dstnbhd^t(\packet_1) \\ \packet_1 \leq \packet' < \packet_2}} |\packet'| \numberthis \label{eq:tau1}.
\end{align*}
Because $\packet_1, \packet_2 \in \srcnbhd^{t, \tau+1}(\packet_1)$, the defined packet ordering implies that any $\packet' \in [\packet_1, \packet_2)$ must satisfy $\packet' \in \srcnbhd^{t, \tau+1}(\packet_1)$.
Therefore, \Cref{eq:tau,eq:tau1} are equal and the claim holds.
\end{proof}

\begin{claim}[Distinct recipients bound]\label{claim:recipients}
For any $b$,
\[|\set{b^{t, \tau}(\packet): b^{t, \tau+1}(\packet) =  b}| \leq 3\ell.\]
\end{claim}
\begin{proof}
Within each root neighborhood $\nbhd^\tau_j$, there exist at most $\ell$ destination neighborhoods in $\dstnbhd^{t}(\packet)$ for $\packet \in \nbhd^\tau_j$.

Fix some such $\dstnbhd$ and let $\packet_1, \dots, \packet_n$ denote all packets with $b^{t, \tau+1}(\packet) = b$ and $\dstnbhd^{t}(\packet_i) = \dstnbhd$.
Without loss of generality, assume that $z^{t, \tau}(\packet_1) \leq \dots \leq z^{t, \tau}(\packet_n)$.
Because all such packets belong to the same machine in step $r-\tau-1$ (i.e. $b^{t, \tau+1}(\packet_i) = b$ for all $i \in [n]$), they belong share the same source neighborhood of size $\ell^{r - \tau-1}$ (i.e. $\srcnbhd^{t, \tau+1}(\packet_i) = \srcnbhd^{t, \tau+1}(\packet_1)$).
By \Cref{claim:preserve} and the definition of $b^{t, \tau}$,
\begin{align*}
    b^{t, \tau}(\packet_i) - b^{t, \tau}(\packet_1)
    &\leq 1 + \frac1{2s}(z^{t, \tau}(\packet_i) - z^{t, \tau}(\packet_1)) \\
    &= 1 +\frac1{2s}( z^{t, \tau+1}(\packet_i) - z^{t, \tau+1}(\packet_1)) \\
    &\leq 2 + b^{t, \tau+1}(\packet_i)  - b^{t, \tau+1}(\packet_1) = 2.
\end{align*}
Therefore, there are at most three possible values of $b^{t, \tau}(\packet_i)$.

The claim follows by considering each of the $\ell$ destination neighborhoods separately:
\begin{align*}
\abs{\set{b^{t, \tau}(\packet): b^{t, \tau+1}(\packet) =  b}}
&= \sum_{\dstnbhd} \abs{\set{b^{t, \tau}(\packet): b^{t, \tau+1}(\packet) =  b, \ \dstnbhd^t(\packet) = \dstnbhd}} \\
&\leq 3\ell.\qedhere
\end{align*}
\end{proof}

\begin{claim}[Distinct senders bound]\label{claim:senders}
For any $b$,
\[|\set{b^{t, \tau+1}(\packet): b^{t, \tau}(\packet) =  b}| \leq 3\ell^2.\]
\end{claim}
\begin{proof}
Within each $\nbhd^\tau_j$, there exist at most $\ell^2$ distinct pairs of destination neighborhoods $\dstnbhd^t(\packet)$ and source neighborhoods $\nbhd^{t, \tau}(\packet)$ for $\packet \in \nbhd^\tau_j$.

As before, we fix some $\dstnbhd$ and $\srcnbhd$ and let $\packet_1, \dots, \packet_n$ all satisfy $b^{t, \tau}(\packet_i) = b$, $\dstnbhd^{t}(\packet_i) = \dstnbhd$, and $\srcnbhd^{t, \tau+1} = \srcnbhd$.
Using the same argument, we show that \[\abs{\set{b^{t, \tau+1}(P_i): i \in [n]}} \leq 3.\]
We conclude by considering all such pairs.
\end{proof}

As a result of Claims~\ref{claim:local}, \ref{claim:recipients}, and \ref{claim:senders}, we conclude that each packet $\packet = (\msg, \src, \dst, \src^t)$ can be equipped with some itinerary \[\src^t = b^{t,r}(\packet), \ b^{t, r-1}(\packet), \dots, b^{t, t+1}(\packet), \ b^{t, t}(\packet) \in \dstnbhd^t(\packet)\]
that properly translates an instances of $\outbox^t$ to $\outbox^{t+1}$ and does so without ever requiring local memory more than $3s = O(N^\delta)$ on any intermediate step or any machine to send or receive messages from more than $3\ell^2 = O(N^\rho)$ other machines.
This itinerary can be executed using an $(r - t)$-round $(\gamma, \delta, \rho)$-MPC protocol.

\paragraph*{Pre-computation phase.}
It remains to show that each $b^{t, \tau}$ can be computed for each packet.
To do so, we prove that there exists an $O(r-t)$-round $(\gamma, \delta, \rho)$-MPC protocol that ends with each machine $i$ knowing $\ps^{t,\tau}(\packet)$ and $\lds^{t, \tau}(\packet)$ for each $\packet \in \outbox^t_i$ and $\tau \in \{t, \dots, r\}$.
We design an MPC protocol that uses the tree structure to propagate information about individual child neighborhoods to their parents and vice versa.
We describe recursive relationships that elucidate how to compute the two salient quantities.

First, we introduce a useful intermediate term.
Let \[\ps^{t, \tau}(i, \dstnbhd) = \sum_{\substack{i' \in \nbhd^\tau(i) \\ i' < i  }} \sum_{\substack{\packet' \in \outbox^t_{i'} \\ \dstnbhd^t(\packet') = \dstnbhd}} |\packet'|,\]
denote the sum of all packet that are contained by ``lesser'' machines that share source and destination neighborhoods.
Note that $\ps^{t, \tau}(\packet)$ can be computed for any $\packet \in \outbox^t_i$ locally by machine $i$ given prior knowledge of $\ps^{t, \tau}(i, \dstnbhd^t(\packet))$. 
Thus, the pre-computation phase need only compute the latter term.

We also introduce a term that represents sum of the sizes of all packets that share source and destination neighborhoods:
\[\ns^t(\srcnbhd, \dstnbhd) = \sum_{\substack{\packet' \in \srcnbhd \\ \dstnbhd^t(\packet') = \dstnbhd}} |\packet'|.\]
Now, we provide the recurrences for any $\tau < r$ (or for any $\srcnbhd$ satisfying $|\srcnbhd| > 1$:
\begin{align*}
    &\ps^{t, \tau}(i, \dstnbhd) 
    \\&\quad= 
    \ps^{t, \tau + 1}(i, \dstnbhd) + \ps^{t, \tau}(\min\nbhd^{\tau+1}(i), \dstnbhd) \\ 
    \\&\quad= 
    \ps^{t, \tau + 1}(i, \dstnbhd) + \sum_{\substack{\srcnbhd \in \children(\nbhd^\tau(i)) \\ \srcnbhd < \nbhd^{\tau+1}(i)}} \ns^t(\srcnbhd, \dstnbhd), \\
    &\ns^t(\srcnbhd, \dstnbhd) =
    \sum_{\srcnbhd' \in \children(\srcnbhd)} \ns^{t}(\srcnbhd, \dstnbhd), \numberthis \label{eq:ns-rec} \\
    &\lds^{t}(\srcnbhd, \dstnbhd) =
    \sum_{\srcnbhd' \in \children(\srcnbhd)} \lds^{t}(\srcnbhd, \dstnbhd) \numberthis \label{eq:lds-rec}.
\end{align*}
When $\tau = r$, the terms $\ps^{t, \tau}(i, \dstnbhd)$, $\ns^t(\nbhd^r(i), \dstnbhd)$, and $\lds^t(\nbhd^r(i), \dstnbhd)$ can be computed locally within machine $i$.

We follow a tree-like communication pattern to compute all relevant sums.
Each machine $i \in [q]$ computes \[\set{\ps^{t, \tau}(i, \dstnbhd): \tau \geq t, \ \dstnbhd \in \children(\nbhd^t(i))}\]
and
\[\set{\lds^{t}(\nbhd^\tau(i), \dstnbhd): \tau \geq t, \ \dstnbhd \in \children(\nbhd^t(i))}\]
by completing $r-t$ \textit{propagate-up} rounds, $r-t$ \textit{aggregation} rounds, and $r-t$ \textit{propagate-down} rounds.
\begin{itemize}
    \item The propagate-up rounds compute the neighborhood-wide message-size summations $\ns^t(\nbhd^\tau(i), \dstnbhd)$ and $\lds^t(\nbhd^\tau(i), \dstnbhd)$ in each machine $i$ satisfying
    $i = \min\nbhd^\tau(i)$
    for each $\tau$ and $\dstnbhd$.
    \item The aggregation rounds accumulate $\ns$ terms of the same level into specific $\ps$ terms.
    \item The propagate-down rounds iteratively compute and share the $\ps$ and $\lds$ terms with all relevant machines.
\end{itemize}

\textit{Propagate-up rounds:} Fix some neighborhood $\nbhd_j^t$.
After $r-\tau$ propagate-up rounds, the goal is to compute the terms $\ns^t(\srcnbhd, \dstnbhd)$ and $\lds^t(\srcnbhd, \dstnbhd)$ for each relevant destination neighborhood $\dstnbhd \in \children(\nbhd_j^t)$ and source neighborhood $\srcnbhd \in \desc^{\tau}(\nbhd_j^t)$ within a single machine in each source neighborhood, 
$\min \srcnbhd$.
We argue that this is possible inductively.

Before performing any computation (that is, after ``round 0''), each machine $i$ individually computes $\ns^t(\nbhd^r(i), \dstnbhd)$ and $\lds^t(\nbhd^r(i), \dstnbhd)$ by aggregating the messages encoded in its own representation of $\outbox_i^t$.

We assume that the procedure works as specified for $r - \tau $ rounds of communication. 
Fix some $\srcnbhd^{\tau-1} \in \desc^{\tau-1}(\nbhd^t_j)$.
Then, for every $\srcnbhd^\tau \in \children{\srcnbhd^{\tau-1}}$, the quantities
\begin{align*}
    &\{\ns^{t}(\srcnbhd^\tau, \dstnbhd): \dstnbhd \in \children(\nbhd^t_j)\} \\
    &\quad \cup \{ \lds^{t}(\srcnbhd^\tau, \dstnbhd): \dstnbhd \in \children(\nbhd^t_j)\}
\end{align*}
have already been computed and stored in $\min\srcnbhd^\tau$.
By the recurrence relations of \Cref{eq:ns-rec,eq:lds-rec}, $\ns^t(\srcnbhd^{\tau-1}, \dstnbhd)$ and $\lds^t(\srcnbhd^{\tau-1}, \dstnbhd)$ are functions of those quantities.
Thus, it suffices to have each machine $\min \srcnbhd^\tau$ machine transmit its relevant terms to $\min \srcnbhd^{\tau-1}$.
A round of MPC communication that transmits such messages involves each machine sending most one message of size $\ell$ and receiving at most $\ell$ messages, each of size $O(\ell)$.

Inductively, we ensure that all neighborhood sums are computed after $r-t$ propagate-up rounds. 
Because each machine handles at most $\ell$ distinct messages having total size $O(\ell^2)$ per MPC round, this protocol does not violate the bounded message size and bounded distinct message constraints (so long as $\ell^2 \ll s$), which can be guaranteed for sufficiently large $n$, so long as $\ell = O(n^{\rho/2})$.

\textit{Aggregation rounds:} 
After the completion of the aggregation rounds, each machine $i$ computes terms of the form \[\set{\ps^{t, \tau}(\min\nbhd^{\tau+1}(i), \dstnbhd): \dstnbhd \in \children(\nbhd^t_j)}\]
from relevant $\ns^t$ terms if $i = \min\nbhd^{\tau+1}(i)$.
By the recurrence, it is sufficient for machine $i$ to following $\ell^2$ distinct terms 
\[\set{\ns^t(\srcnbhd, \dstnbhd) : \srcnbhd \in \children(\nbhd^{\tau}(i)),  \dstnbhd \in \children(\nbhd^t_j)}.\]
Since all machines $i$ already knows such terms for $\srcnbhd = \nbhd^{\tau+1}(i)$, it can obtain the remaining $\ns^t$ terms by simultaneously sharing information with its ``cousin machines:'' $\min \srcnbhd$, for all $\srcnbhd \in \children(\nbhd^{\tau}(i))$.
This can be handled by a single round of communication where each ``$(\tau+1)$th-level neighborhood representative'' machine forwards its sums sums to up to $\ell$ other first representatives, for a total messaging cost of $O(\ell^2)$. 

We use $r - t$ separate rounds to repeat this process for each $\tau \geq t$.

\textit{Propagate-down rounds:}
It remains to compute each $\ps^{t, \tau}(i, \cdot)$ and $\lds^t(\nbhd^\tau(i), \cdot)$ term at each machine $i$.
The relevant $\lds^t$ terms have already been computed by each respective $\min\nbhd^\tau(i)$ machine and can be propagated to machine $i$ by using $r - t$ rounds of tree-propagation through intermediate $\min\nbhd^{\tau'}(i)$ machines.

The same is possible for $\ps^{t, \tau}$ terms, although the individual terms need to be added in order to follow the recurrence. 
We propagate the terms in the same way as the $\lds^t$ terms, but we take special care to carry out the extra additions.
This can be accomplished simultaneously to the other propagate-down rounds.
This protocol involves each first-child node sharing at most $\ell$ distinct messages, each of size at most $O((r-t) \ell)$. 
As before, every node sends and receives at most $O((r-t)\ell^2) \ll s$ words.

After these $3(r - t)$ rounds have elapsed, each machine $i$ computes $b^{t, \tau}(\packet)$ for each $\packet \in \outbox^t_i$.
Using this itinerary, the machines routes tuples of the form \[(\packet, b^{t, r}(\packet), \dots, b^{t,t}(\packet))\]
to the respective machine $b^{t, \tau}(\packet)$ in round $r - \tau$.
Due to the arguments presented at the start of the section, this procedure terminates with each $(\msg, \src, \dst)$ tuple being held by some machine $i$ such that the resulting $\outbox^{t+1}_i$ is valid, and the procedure involves at most $O(r - t) = O(\frac{1 + \gamma}\rho)$ rounds of $(\gamma, \delta, \rho)$-MPC computation.
\end{proof}

\subsubsection{Proof outline of \Cref{cor:mpc-sht24}}\label{assec:mpc-sht24}

\cormpcsht*

The proof of \Cref{cor:mpc-sht24} involves an adaptation to the proof of Theorem~3.1 of \cite{sht24}.
To avoid restating the proof in its entirety, we provide a brief outline of the proof of Theorem~3.1 and explain which modification is necessary.

Theorem~3.1 is a consequence of Lemmas~B.4, B.5, and B.6 of \cite{sht24}, which establish that there exist single-layer transformers that simulate the initialization, a round of computation and communication, and the output formatting of any fixed $(\gamma, \delta)$-MPC protocol. 
The input and output steps of $(\gamma, \delta)$, and $(\gamma, \delta, \rho)$-MPC protocols are identical, only Lemma~B.5 needs to be examined.

To simulate a single round of an MPC protocol with a transformer, all local computations are simulated in the element-wise multi-layer perceptron (MLP) units, and all communication is handled in a single multi-headed self-attention layer (Lemma~B.7)
Since $(\gamma, \delta, \rho)$-MPC protocols add no restrictions related to local computation, the former can be simulated in exactly the same manner with identical MLPs, and it remains to analyze the construction of Lemma~B.7.
We restate Lemma~B.7 and provide an replacement lemma that suffices to prove \Cref{cor:mpc-sht24}.

\begin{lemma}[Lemma~B.7 of \cite{sht24}; multi-headed attention simulates MPC communication]\label{lemma:route-sht24}
For any $R$-round MPC protocol with local memory $s$ and $q$ machines and any round $r \in [R-1]$, there exists a single-layer transformer $f \in \trans{m, H, 1}{q, 0}$ with $H = O(\log\log q)$ and $m = O(s^4 \log q)$, which, given as input a length-$q$ encoding of each machine's outgoing messages in round $r$, returns an encoding of each machine's incoming messages in round $r + 1$.
\end{lemma}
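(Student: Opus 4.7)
The plan is to reduce the routing problem to $s$ parallel exact-matching queries and invoke the sparse-averaging mechanism from \cite{sht23} (which is the tool behind the Sanford--Hsu--Telgarsky transformer constructions) to implement all of them inside a single multi-headed attention layer. Concretely, I view token $i$ as the state of machine $i$, carrying its outgoing memory $\outbox_i$, which I parse as a list of $s$ slots of the form $(\msg^i_\ell, \src^i_\ell, \dst^i_\ell)$ for $\ell \in [s]$, padded with $\varnothing$ when empty. Because each machine both sends and receives at most $s$ words, I can also assign a canonical destination-slot index to every outgoing packet via the sender's MLP (e.g.\ use an arbitrary but fixed tie-breaking rule on $(\src,\dst)$); after this preprocessing, there is a well-defined injective map from outgoing slot $(i,\ell)$ to an incoming slot $(j,k)$ in machine $j = \dst^i_\ell$. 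The MLP $\phi^0$ computes these canonical target indices and lays them out in the embedding.

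Next, the self-attention layer must implement the routing $(i,\ell) \mapsto (j,k)$. For each fixed pair $(k,\ell)\in[s]^2$, I set up one ``channel'' in the embedding: the query at token $j$ for this channel is the pair $(j,k)$, and the key at token $i$ for this channel is $(\dst^i_\ell, \text{canonical-slot}(i,\ell))$, with the value being $\msg^i_\ell$. Exact matching in attention on a universe of size at most $q\cdot s = \poly(q)$ is precisely the task solved by the sparse-averaging construction of~\cite{sht23}, which shows that with $H = O(\log\log q)$ heads and $O(\log q)$-bit encodings per channel one can realize a ``hard lookup'' softmax: the output at token $j$ equals the unique value whose key matches the query, or $\vec{0}$ if none exists. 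Running all $s^2$ channels in parallel inside a single layer blows the embedding up by a factor of $s^2$ on top of the $O(s\log q)$ bits already needed to represent $\outbox_i$, giving $m = O(s^4\log q)$ as claimed; the $H = O(\log\log q)$ heads are shared across all channels, not multiplied.

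After attention, token $j$ has, for each pair $(k,\ell)$, exactly the $\ell$-th outgoing message of its unique sender to slot $k$ (or $\vec{0}$). The post-attention MLP $\phi^1$ then collapses each incoming slot $k$ by summing over $\ell$ (only one term is nonzero by injectivity of the canonical map) and writes the resulting list $\inbox_j$ into the output embedding. Because all MLP operations are coordinate-wise and local, they add no communication cost and only polynomial arithmetic in $s$ and $\log q$, which fits within the permitted $O(\log q)$-bit numeric precision.

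The main obstacle is the exact-lookup step: softmax attention natively computes soft averages, and turning it into a hard, collision-free selector over a size-$\poly(q)$ universe is what forces $H = \Theta(\log\log q)$ and drives the proof. I would discharge this by citing the sparse-averaging lemma of~\cite{sht23} verbatim, after checking that the key universe I construct has size $\poly(q)$ (so $O(\log q)$ bits per key suffice) and that the per-channel queries are distinct within each receiver token (guaranteed by indexing by $(j,k)$). Everything else---parsing $\outbox_i$, computing canonical slots, and collapsing duplicates---is purely local MLP work and presents no representational difficulty in the bounded-communication model of \Cref{assec:prelim-trans}.
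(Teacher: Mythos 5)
The paper does not prove this lemma; it restates it verbatim as Lemma~B.7 of \cite{sht24} and then deliberately \emph{replaces} it with \Cref{lemma:route-new} (single-headed, bounded-fan-out MPC), which is the lemma the paper actually uses. So there is no in-paper proof to compare against, and what you are proposing is a reconstruction of the argument in \cite{sht24}, which this paper sidesteps.

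On its own terms, your sketch has a genuine gap at the step you dismiss as ``purely local MLP work.'' You posit that the sender's MLP can assign an injective map from outgoing slot $(i,\ell)$ to an incoming slot $(j,k)$ with $j = \dst^i_\ell$ and $k\in[s]$. That injectivity is a \emph{global} property: whether two packets destined to $j$ collide at the same $k$ depends on which other machines are also sending to $j$, information that no sender's MLP has access to in the bounded-communication model. Any tie-breaking rule computable from $(i,\ell,\dst^i_\ell)$ alone can collide, and softmax attention will then average the colliding values and destroy the routing. Resolving this coordination problem is precisely what the machinery in \cite{sht24} (and, in a different form, the itinerary/packet-scoring construction in this paper's proof of \Cref{prop:mpc-sim}) is designed to do; it cannot be waved through. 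Separately, your dimension bookkeeping as written ($s^2$ channels times $O(\log q)$ bits on top of $O(s\log q)$ to store $\outbox_i$) yields $m = O(s^2\log q)$, not the claimed $O(s^4\log q)$; the extra $s^2$ factor in the target bound is presumably absorbed by exactly the slot-assignment/error-correction overhead you have not accounted for. Finally, ``sparse averaging'' in \cite{sht23} gives an approximate average over a small support, not a hard collision-free lookup; promoting it to an exact selector requires spelling out the multi-head $\ell_\infty$-rounding argument, and asserting it ``as is'' overclaims what that lemma provides.
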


\begin{lemma}[Single-headed attention simulates bounded-message MPC communication]\label{lemma:route-new}
For any $R$-round MPC protocol with local memory $s$, $q$ machines, and a $k$-machine communication limit and any round $r \in [R-1]$, there exists a single-layer single-headed transformer $f \in \trans{m, H, 1}{q, 0}$ with $H = 1$ and $m = O(k^4 s \log q)$, which, given as input a length-$q$ encoding of each machine's outgoing messages in round $r$, returns an encoding of each machine's incoming messages in round $r + 1$.
\end{lemma}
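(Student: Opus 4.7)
The plan is to adapt the single-layer routing construction underlying Lemma~B.7 of \cite{sht24}, trading the $H = O(\log\log q)$ heads of that construction for a single, wider head of width $m = O(k^4 s \log q)$. The key observation is that in a $(\gamma,\delta,\rho)$-MPC protocol every machine has at most $k$ outgoing and at most $k$ incoming partners in each round, so the bipartite routing graph implicit in a round's outbox has degree bounded by $k$ on both sides. This degree bound is precisely what allows a single attention distribution to concentrate near-uniformly on the correct senders for each receiver, whereas \cite{sht24} needed multiple heads to handle the unbounded degree of general MPC communication.

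First I would set up the query/key/value encoding. Fix near-orthogonal codes $\phi,\psi : [q] \to \mathbb{R}^{O(k^2 \log q)}$ with $|\langle \phi(i),\phi(j)\rangle|,|\langle\psi(i),\psi(j)\rangle| \leq 1/(2k)$ for $i \neq j$, obtainable either by a random Johnson--Lindenstrauss construction with a union bound over the $q^2$ pairs or by standard error-correcting codes. Given the outbox $\{(\mathsf{msg}_{j,\ell},\mathsf{dst}_{j,\ell})\}_\ell$ at token $j$, the initial MLP $\phi^0$ would produce a query $Q_j = \phi(j)$, a key $K_j = \tau \sum_\ell \phi(\mathsf{dst}_{j,\ell})$, and a value $V_j = \sum_\ell \phi(\mathsf{dst}_{j,\ell}) \otimes \psi(j) \otimes \mathsf{msg}_{j,\ell}$, all packed into a single embedding. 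With the scaling $\tau = \Theta(\log q)$, matched pairs contribute $Q_i^\top K_j \geq \tau$ while unmatched pairs contribute at most $\tau/2$, so the softmax concentrates up to a $1 \pm o(1)$ factor on the (at most $k$) sources $j$ with $i \in \mathsf{dst}(j)$. The attention output at token $i$ is therefore approximately $\frac{1}{k}\sum_{j \in S_i} \phi(i) \otimes \psi(j) \otimes \mathsf{msg}(j \to i)$ plus bounded cross terms, where $S_i$ is the set of sources sending to $i$.

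The main obstacle will be the post-attention decoding: the output MLP $\phi^1$ must cleanly reconstruct each token's list of incoming packets from this aggregated value, despite the $1/k$ averaging and the $O(1/k)$ cross-talk introduced by approximate orthogonality. Because $\phi^1$ is arbitrary, this reduces to showing that the per-slot signal-to-noise ratio is $\Omega(1)$, which is where the $k^4$ factor enters the dimension: both codes $\phi$ and $\psi$ must suppress noise against $k$-wise interference, costing $O(k^2 \log q)$ dimensions each; the additional $s$ factor carries the per-packet message payload; and multiplying these gives $m = O(k^4 s \log q)$. Once the decoding is verified, the lemma follows from the same input/output formatting as in Lemmas~B.4--B.6 of \cite{sht24}, since the interface to the surrounding local-computation MLPs is unchanged, so this wider single-headed construction slots directly into the multi-round simulation and yields \Cref{cor:mpc-sht24}.
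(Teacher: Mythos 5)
Your proposal reconstructs the routing construction from scratch, but the paper's proof is essentially a one-line citation: it observes that Lemma~3.2 of \cite{sht24} (their main sparse-attention routing lemma) \emph{already} yields single-headed simulation of a communication round once the number of communication partners is bounded by $k$, so that replacing Lemma~B.7 by this specialization gives the result. In other words, what you are re-deriving is precisely the content of the cited lemma; the paper does not need a new construction because the old one was already parametrized by the degree of the communication graph.

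That said, your reconstruction captures the right mechanism (near-orthogonal identifier codes, softmax concentration on the $\leq k$ relevant senders, arbitrary MLP decoding), but two technical steps are off. First, with $|\langle\phi(i),\phi(j)\rangle|\le 1/(2k)$ for $i\ne j$, a matched pair gives $Q_i^\top K_j \ge \tau\bigl(1-\tfrac{k-1}{2k}\bigr) = \tau\tfrac{k+1}{2k}$, while an unmatched pair gives $|Q_i^\top K_j|\le \tau\cdot\tfrac{k}{2k}=\tau/2$; the gap is only $\Theta(\tau/k)$, not $\Theta(\tau)$ as you assert, so the softmax temperature $\tau$ must be taken $\Theta(k\log q)$ (or the orthogonality constant tightened to $O(1/k)$ with a larger leading factor, which does not change the $O(k^2\log q)$ dimension count). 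Second, the dimension arithmetic in your last paragraph is inconsistent: tensoring two codes of dimension $O(k^2\log q)$ with a payload of $s$ words gives $O(k^4 s\log^2 q)$, not $O(k^4 s\log q)$; you either need one of the two roles (say the sender tag $\psi$) to use a shorter code of dimension $O(k^2)$, or you should simply import the bookkeeping of Lemma~3.2 of \cite{sht24} rather than re-deriving it, since that lemma already delivers the stated $m = O(k^4 s\log q)$.
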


\Cref{lemma:route-new} is an immediate consequence of Lemma~3.2 of \cite{sht24}, their main technical result, which already applies to the regime with limits on the number of machines in communication.

By replacing \Cref{lemma:route-sht24} with \Cref{lemma:route-new}, applying the remainder of the proof of Theorem~3.1 of \cite{sht24}, and letting $k = O(n^\rho)$, the proof of \Cref{cor:mpc-sht24} is complete.

\section{Single-layer transformers and graph reasoning tasks}\label{asec:1L}

\newcommand{\isvertex}{\mathtt{isVertex}}
\newcommand{\isedge}{\mathtt{isEdge}}
\newcommand{\isquery}{\mathtt{isTask}}

This appendix presents the results of \Cref{ssec:easy-tasks}, which separates the collection of graph reasoning tasks into those \textit{retrieval tasks} that can be efficiently solved by single-layer parameter-efficient transformers---including node count, edge count, edge existence, and node degree---and those \textit{parallelizable} or \textit{search tasks} that require deeper constructions---including connectivity, shortest path, cycle check, and triangle count.
Taken together, these results establish that the single-layer transformers of the \depthone{} regime are capable of solving simple aggregation-based tasks, but that their known limitations in capacity as communication protocols of \cite{sht23} apply to non-trivial graph reasoning tasks.

We specific a particular \emph{node/edge encoding} of an input graph $G = (V, E)$ and a graph reasoning task using a consistent encoding scheme that closely resembles the encoding used in our graph reasoning experiments and those of \cite{knmcllh22}.
This encoding is distinguished by the fact that each node and vertex of the graph $G$ is represented by exactly one token, rather than the pair of tokens utilized in our experiments.
This choice ensures that any trivial pre-processing of graph inputs (e.g. using a positional embedding to associate each edge token pair) need not count towards the single-layer transformer model.

\begin{definition}
The \emph{node/edge encoding} of a graph $G = (V, E)$ with $V \subseteq [n]$ and $|V| + |E| \leq N-1$ and a graph reasoning task $P$ is a sequence \[X = X(G, P) = (x_1, \dots, x_N) \in \R^{N \times d}\]
where $d = 5$ and each \[x_i = (x_i^1, x_i^2, \isvertex_i, \isedge_i, \isquery_i) \in \{0, \dots, n\}^2 \times \bit^3 \]
satisfies the following conditions:
\begin{itemize}
    \item For each $v \in V$, there exists exactly one $i \in [N-1]$ with $x_i = (v, 0, 1, 0, 0)$.
    \item For each $(u, v) \in E$, there exists exactly $i \in [N-1]$ with $x_i = (u,v,0, 1, 0)$.
    \item The token $x_N$ encodes a particular instance of the task $P$, by encoding $\isquery_N = 1$ with an optional edge or node encoding. 
    That is, for tasks without arguments (such as triangle count), $x_N = (0,0,0, 0, 1)$. For tasks with a single node argument $v \in [n]$ (such as node degree), $x_N = (v, 0, 1, 0, 1)$. For tasks with a pair of node arguments $u, v \in [n]$ (such as shortest path and connectivity), $x_N = (u, v,0, 1, 1)$.
    \item All other tokens satisfy $x_i = (0, 0, 0, 0, 0)$.
\end{itemize}
We say that a single-layer transformer $f \in \trans{m, H, 1}N$ \emph{solves} task $P$ on graph $G$ if the output corresponding to the task description $f(X(G, P))_N$ encodes the output of the task. Since $f$ is a single-layer transformer, we can write this output as 
\begin{equation}\label{eq:single-output}
    f(X)_N = \psi\paren{\sum_{h=1}^H\sm\paren{\phi(x_N)^\T Q_h K_h^\T \phi(X)^\T} \phi(X) V_h}
\end{equation}
for element-wise multi-layer perceptrons \[\phi: \R^d \to \R^m, \psi: \R^m \to \R\]
broadcasted across each input (i.e. $\phi(X) = (\phi(x_1), \dots, \phi(x_N)) \in \R^{N \times m}$) and weight matrices \[Q_1, \dots, Q_H, K_1, \dots, K_H, V_1, \dots, V_H \in \R^{m \times m}.\] 
\end{definition}



Throughout, we assume that all parameters in the transformer model and intermediate numerical quantities can be written using $O(\log N)$-bit floating point numbers.
This assumption can be satisfied for the positive results of \Cref{asec:pos-1L} and is necessary to obtain the negative results of \Cref{asec:neg-1L}.

We permit the element-wise MLPs $\phi$ and $\psi$ to be arbitrary functions for the negative results, while restricting them to be MLPs that can be approximated using bounded-size multi-layer ReLU networks for the positive results. 
While we do not make these ReLU networks explicit, we restrict ourselves to simple operations that can be computed using linear transformations and the application of smooth univariate functions.

Finally, we acknowledge that the negative results in are largely superseded by those of \cite{ms23}, which establishes that $\L$-complete languages (including graph connectivity and cycle check) cannot be efficiently solved by constant-depth transformers, let alone single-layer transformers.
We include these bounds anyway to mark a contrast with our positive results; draw further connections to the communication complexity lens on transformers; and exhibit simple task instances $(G, P)$ that require greater depth, including constant diameter and constant degree graphs.

\subsection{Positive results for single-layer transformers}\label{asec:pos-1L}

\begin{restatable}[Formal version of \Cref{thm:pos-1L-informal}; \depthone{} computes retrieval tasks]{theorem}{thmposoneL}\label{thm:pos-1L}
Fix any graph reasoning task among node count, edge count, edge existence, and node degree and any graph size $N$.
Then, there exists a single-layer single-headed transformer $f \in \trans{m, 1, 1}{N}$ with embedding dimension $m = O(\log N)$ that solves the task on all graphs $G = (V, E)$ of size $|V| + |E| \leq N-1$ formatted as node/edge embedding sequences.
\end{restatable}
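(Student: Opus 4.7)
My plan is to exhibit, for each of the four retrieval tasks, an explicit single-layer single-headed transformer with embedding dimension $m = O(\log N)$ that produces the correct answer at the query position. All four constructions follow a common template: an input MLP $\phi$ that precomputes a constant number of polynomial and indicator features (each representable with $O(\log N)$ bits), a bilinear $QK^\T$ that assigns carefully-chosen attention scores, a query-agnostic value matrix $V$ that reads off a token-type indicator, and an output MLP $\psi$ that rescales and rounds.

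For \textbf{node count} and \textbf{edge count}, I would use essentially uniform attention: take $Q = 0$ so that the softmax is uniform with weight $1/N$ on each token, and set $V$ to extract $\isvertex_i$ (resp.\ $\isedge_i$) from $\phi(x_i)$, so the aggregated output equals $|V|/N$ (resp.\ $|E|/N$). Then $\psi$ multiplies by $N$.

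For \textbf{edge existence} and \textbf{node degree} the value cannot depend on the query, so all query-dependent selection must be done through the bilinear attention score. For edge existence with query $(u,v)$ I would have $\phi$ output the symmetric features $h_i = u_i + v_i$, $g_i = u_i v_i$ for each edge token and $h = u+v$, $g = uv$ for the query; expanding $(h_i - h)^2 + (g_i - g)^2$ in terms of the constant-size monomial bases $\{1, h, h^2, g, g^2\}$ and $\{1, h_i, h_i^2, g_i, g_i^2\}$ realizes this quantity as an explicit bilinear form that equals $0$ iff $\{u_i, v_i\} = \{u, v\}$ and is at least $1$ otherwise. The attention score is then $s_i = -c\bigl((h_i - h)^2 + (g_i - g)^2\bigr)$ for a suitably large constant $c$. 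For node degree the analogous quantity is $\bigl((u_i - v)(v_i - v)\bigr)^2$, again writable as a bilinear form in $O(1)$ polynomial features of $(u_i, v_i)$ and $v$. Non-edge tokens are masked out by adding a term proportional to $-M(1 - \isedge_i - \isquery_i)$ to the score, using features already present in $\phi$.

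The final piece is normalization: because softmax averages rather than sums, I use $x_N$ itself as an ``attention sink,'' boosting its score to $M = \Theta(\log N)$ via its $\isquery_N = 1$ feature. With value $1$ on edge tokens and $0$ on $x_N$ and vertex/blank tokens, the softmax-weighted value evaluates to $k \cdot e^{-M} (1 + O(N e^{-M}))$, where $k$ is the number of matching edges (either $0$ or $1$ for edge existence, and $\deg(v)$ for node degree). Taking $c, M$ to be sufficiently large constants times $\log N$ makes the additive error subconstant, so $\psi$ recovers $k$ by multiplying by $e^M$ and rounding. The main obstacle is reconciling the query-agnostic $V$ with query-dependent selective aggregation at the precision afforded by $O(\log N)$-bit arithmetic; the polynomial-hashing construction keeps the embedding dimension at $O(\log N)$ while guaranteeing a constant-size gap between matching and non-matching scores, and the sink-token construction ensures that the exponentially small but nonzero signals remain representable and distinguishable.
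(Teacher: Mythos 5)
Your construction is correct in spirit but takes a genuinely different route from the paper. The paper's proof builds a single shared input MLP $\phi$ that maps each vertex $v$ to a length-$O(\log N)$ \emph{sinusoidal} feature vector $\eta(v)=(\sin 2\pi a_j v,\cos 2\pi a_j v)_j$ with random coefficients $a_j$, proves near-orthogonality $|\eta(u)^\T\eta(v)|\le 2\sqrt{m'\log n}$ for $u\ne v$ via Hoeffding's inequality and the probabilistic method, then forms an edge embedding $\xi(u,v)$ as a normalized sum of $\eta(u)+\eta(v)$. Task-specific $Q,K,V$ then realize matching as a constant-factor gap in the key--query inner product ($\Theta(m')$ for a match, $\le m'/4$ for a non-match), so the softmax output of interest stays at a ``macroscopic'' scale (roughly $1/(1+|V|)$, $1/(\deg+2)$, or a constant-width interval), which $\psi$ inverts. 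Your proposal instead hashes each unordered pair through the exact symmetric polynomials $(u+v,\ uv)$ (Vieta), so that $(h_i-h)^2+(g_i-g)^2$ is \emph{exactly} zero on a match and at least one otherwise; this removes the need for any probabilistic orthogonality argument and yields embedding dimension $m=O(1)$ rather than $O(\log N)$ (a strictly stronger conclusion). The price is that you drive the informative softmax mass down to $\Theta(e^{-M})=\Theta(N^{-\Theta(1)})$ via an attention sink, so the correctness analysis must be re-done in the $O(\log N)$-bit floating-point model (magnitudes like $c\cdot g_i^2 = O(n^4\log N)$ in the logits, and mantissa resolution sufficient to separate $k/e^M$ from $(k+1)/e^M$ for $k\le n$) --- this all checks out, but it is a real extra verification burden that the paper sidesteps. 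Two small wrinkles to flag: (i) for the \emph{node degree} query token ($\isedge_N=0,\ \isquery_N=1$), your masking term $-M(1-\isedge_i-\isquery_i)$ evaluates to $0$, not $+M$, so the intended sink boost does not materialize there --- you either need a separate $\isquery$-keyed term or should notice that the query token already contributes weight $e^0=1$ to the denominator, giving an output $\approx \deg(v)/(\deg(v)+1)$ that $\psi$ can invert directly (which is in fact closer to the paper's route); (ii) the value vector for edge existence must be $\isedge_i(1-\isquery_i)$ rather than $\isedge_i$ alone, since the query token itself has $\isedge_N=1$ under the node/edge encoding. Both are easily patched and do not affect the overall viability of the argument.
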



\begin{proof}

We prove that a single head of self-attention with input and output MLPs can solve these retrieval and aggregation tasks in a parameter-efficient manner by first carefully designing a universal input MLP $\phi: \R^d \to \R^{m}$ for some $m = O(\log N)$ to produce embeddings that encode useful graph properties. 
Then, we define task-specific query, key, and value matrices $Q, K, V: \in \R^{m \times m}$ and output MLPs $\psi: \R^m \to \R$ that produce the correct answers.
While we do not explicitly account for finite-precision computations, all of the operations utilized can be carefully implemented to respect $O(\log N)$-bit floating-point numerical representations using the technical approaches of \cite{sht23, sht24}.

\paragraph*{Shared sinusoidal embedding MLP.}
We denote the embedding output of the input MLP as
\[\phi(x_i) = (\isquery_i, \isvertex_i, \isedge_i, \phi'(x_i)) \in \R^m\]
for some $\phi'(x_i): \R^d \to \R^{2m'}$ for some $m' = O(\log N)$ and $m = 2m' + 3$.
For some fixed $a_1, \dots, a_{m'} \in [0, 1]$ to be determined, let
\[\phi'(x_i) = \begin{cases}
\eta(x_i^1) & \text{if $\isvertex_i = 1$,}\\
\xi(x_i^1, x_i^2) & \text{if $\isedge_i = 1$,}\\
\vec{0} & \text{otherwise,}
\end{cases}\]
where $\eta$ is a sinusoidal embedding MLP for nodes with 
\begin{equation}\label{eq:eta}
    \eta(v) =(\sin(2 \pi a_1 v), \cos(2 \pi a_1 v ), \dots, \sin(2 \pi a_{m'} v ), \cos(2 \pi a_{m'} v )) \in \R^{2m'},
\end{equation}
and $\xi$ is an edge embedding satisfying
\[\xi(u, v) = \frac{m'}{m' + \eta(u)^\T \eta(v)}(\eta(u) + \eta(v)).\]

We first show that the node embeddings are approximately orthogonal. 
By employing standard trigonometric identities, we obtain \[\eta(u)^\T \eta(v) = \sum_{j=1}^{m'} \cos(2\pi a_j(u - v)),\]
and note that $\norm[2]{\eta(v)}^2 = m'$. 
We use a standard concentration argument to prove that $|\eta(u)^\T \eta(v)| \ll m'$ if $u \neq v$ with high probability.

\begin{claim}[Near-orthogonality of embeddings $\eta$]\label{claim:eta-prop}
There exist coefficients $a_1, \dots, a_{m'} \in [0, 1]$ that comprise the embedding $\eta: [n] \to \R^m$ of \Cref{eq:eta} such that
\[\abs{\eta(u)^\T \eta(v)} \leq 2 \sqrt{m' \log n}, \ \text{if $u \neq v$.}\]
\end{claim}
\begin{proof}
We employ the probabilistic method.
Consider $m'$ iid random variables $\ba_1, \dots, \ba_{m'} \sim \unif([0, 1])$ and let $\bfeta$ represent the respective node embedding.
Fix some arbitrary $u, v \in [n]$ with $u \neq v$ and note that 
\[\bfeta(u)^\T \bfeta(v)= \sum_{j=1}^{m'} \cos(2\pi \ba_j(u - v)).\]
For any $j \in [m']$, the integrality of $u-v$ implies that
\[\EE[\ba_j]{\cos(2\pi \ba_j(u - v))} = 0.\]
Hoeffding's inequality provides the following bound:
\begin{align*}
    \pr{\abs{\bfeta(u)^\T\bfeta(v)} \geq 2 \sqrt{m' \log n}}
    &\leq \exp\paren{-\frac{4 m' \log n}{2m'}} \leq \frac{1}{n^2}.
\end{align*}
By applying a union bound to all ${n \choose 2}$ choices of $u$ and $v$, we have the following:
\[\pr{\forall u, v \in [n],\  u \neq v: \ \abs{\bfeta(u)^\T\bfeta(v)} \geq 2 \sqrt{m' \log n}} \leq \frac{n(n-1)}{2n^2} < 1.\]
Hence, there exists a satisfactory set of coefficients $a_1, \dots, a_{m'}$.
\end{proof}

For some $\eta$ satisfying \Cref{claim:eta-prop}, let $\chi = \max_{u, v \in [n], u \neq v} |\eta(u)^\T \eta(v)| \leq 2 \sqrt{m' \log n}$.
By taking $m' = O(\log n)$ be sufficiently large, we guarantee that $\chi \leq \frac{m'}c$ for any constant $c \geq 2$.

We then bound all relevant inner products between vertex and edge embeddings for sufficently large $m'$ (and sufficiently small $\chi$).
We assume throughout that $u, u', v, v'$ are distinct and note that $\xi(u, v) = \xi(v, u)$ (and omit symmetric inequalities).

\begin{align*}
    \norm[2]{\eta(v)}^2 &= m'. \numberthis \label{eq:eta-same}\\
    \abs{\eta(u)^\T \eta(v)} &\leq \chi \leq \frac{m'}{4} . \numberthis \label{eq:eta-diff}\\
    \abs{\eta(u)^\T \xi(u, v) - m'}
    &=\abs{\frac{m'(m' + \eta(u)^\T \eta(v))}{m' + \eta(u)^\T \eta(v)} - m'} = 0. \numberthis \label{eq:eta-xi-same}\\
    \abs{\eta(u)^\T \xi(u', v')} 
    &= \abs{\frac{m' (\eta(u)^\T \eta(u') + \eta(u)^\T \eta(v'))}{m' + \eta(u')^\T \eta(v')}}
    \leq \frac{2m'\chi}{m' - \chi} \leq 4\chi \leq \frac{m'}{4}.
    \numberthis \label{eq:eta-xi-diff} \\
    \abs{\norm[2]{\xi(u, v)}^2 - 2m'}
    &=\abs{\frac{m^{\prime 2}\cdot (2m' + 2 \eta(u)^\T \eta(v)) - 2m'(m' + \eta(u)^\T \eta(v))^2}{(m' + \eta(u)^\T \eta(v))^2} } \\
    &\leq \abs{\frac{2m' \eta(u)^\T \eta(v)}{m' + \eta(u)^\T \eta(v)}} \leq \frac{2 m' \chi}{m' - \chi} \leq 4 \chi \leq \frac{m'}4. \numberthis \label{eq:xi-same}\\
    \abs{\xi(u, v)^\T \xi(u, v') - m'}
    &= \abs{\frac{m^{\prime2}(m' + \eta(u)^\T \eta(v') + \eta(u)^\T \eta(v) + \eta(v)^\T \eta(v'))}{(m' + \eta(u)^\T \eta(v))(m' + \eta(u)^\T \eta(v'))} - m'} \\
    &= \abs{\frac{m^{\prime 2}\eta(v)^\T \eta(v') - m' \eta(u)^\T \eta(v) \eta(u)^\T \eta(v')}{(m' + \eta(u)^\T \eta(v))(m' + \eta(u)^\T \eta(v'))}} \\
    &\leq \frac{m^{\prime 2} \chi + m' \chi^2}{(m' - \chi)^2}
    \leq 4 \chi + 4 \frac{\chi^2}{m'} \leq \frac{m'}{4}. \numberthis \label{eq:xi-diff1}\\
    |\xi(u, v)^\T \xi(u', v')| 
    &=\abs{\frac{m^{\prime 2}(\eta(u)^\T \eta(u') + \eta(u)^\T \eta(v') + \eta(v)^\T \eta(u') + \eta(v)^\T \eta(v')}{(m' + \eta(u)^\T \eta(v))(m' + \eta(u')^\T \eta(v'))}} \\
    &\leq \frac{4m^{\prime 2} \chi}{(m' - \chi)^2} \leq 16 \chi \leq \frac{m'}{4}. \numberthis \label{eq:xi-diff2}
\end{align*}

Therefore, we conclude the following bounds:
\begin{align*}
      \eta(u)^\T \eta(v), \ \eta(u)^\T \xi(u', v'), \ \xi(u, v)^\T \xi(u', v') &\in \left[-\frac{m'}{4}, \frac{m'}{4}\right], \\
      \norm[2]{\eta(v)}^2, \ \eta(u)^\T \xi(u, v), \ \xi(u, v)^\T \xi(u, v') &\in \left[\frac{3m'}{4}, \frac{5m'}{4}\right], \\
      \norm[2]{\xi(u, v)}^2 & \in \left[\frac{7m'}{4}, \frac{9m'}4 \right].
\end{align*}

We now apply the above bounds to prove that each task can be solved by a single-headed transformer with weights $Q, K, V \in \R^{m \times m}$, output MLP $\psi: \R^m \to \R$, and shared input MLP $\phi$ discussed previously.

\paragraph*{Node count task.}
If $P$ encodes the node count task, we specify weights matrices $Q, K, V$ in order to ensure that $Q^\T \phi(x_N) = m' e_1$\footnote{Note that $Q^\T \phi(x_N)$ is the only relevant query embedding to the output $f(X)_N$; see \Cref{eq:single-output}.}; $K^\T\phi(x_i) = (\isvertex_i + \isquery_i) \cdot e_1$; and $V^\T\phi(x_i) = \isquery_i \cdot e_1$\footnote{While we do not specify the entries of $Q, K, V$, they are clearly linear transformations.}.
Then, the following is true of exponentiated key/query inner products and scalings products of value vectors:
\begin{align*}
    \exp(\phi(x_N)^\T Q K^\T \phi(x_i)) 
    &= \begin{cases}e^{m'} & \text{if $\isvertex_i=1$ or $i=N$,} \\ 1 & \text{otherwise.}\end{cases} \\
    \exp(\phi(x_N)^\T Q K^\T \phi(x_i)) V \phi(x_i) &= 
    \begin{cases}
        e^{m'} \cdot e_1 & \text{if $i = N$,} \\
        \vec{0} & \text{otherwise.}
    \end{cases}
\end{align*}

The output of the softmax is then
\begin{align*}
    \sm\paren{\phi(x_N)^\T Q K^\T \phi(X)^\T} \phi(X) V 
    &= \frac{\sum_{i=1}^N \exp(\phi(x_N)^\T Q K^\T \phi(x_i)) V \phi(x_i) }{\sum_{i=1}^N  \exp(\phi(x_N)^\T Q K^\T \phi(x_i)) } \\
    &= \frac{e^{m'}}{e^{m'}\cdot (1 + |V|) + 1 \cdot (N - |V| - 1)} \cdot e_1.
\end{align*}
Let \[y_N := (\sm\paren{\phi(x_N)^\T Q K^\T \phi(X)} \phi(X) V)_1 \in \R\]
denote the first coordinate of the softmax output.
By taking $m' \geq \log(4N)$, we guarantee that
\begin{align*}
    y_N
    &\in \left[ \frac{1}{1 + |V| + N / e^{m'}}, \frac{1}{1 + |V|}\right]
    \subseteq  \left[ \frac{1}{\frac54 + |V|}, \frac{1}{1 + |V|}\right].
\end{align*}
By letting the output MLP $\psi$ approximate the function $\psi(z) = \floor{\frac{1}{z_1} - 1}$ for $z_1 \in [1, N+2]$, we can ensure that $f(X)_N = |V|$.

\paragraph*{Edge count task.}
We similarly let $Q^\T \phi(x_N) = m' e_1$; $K^\T\phi(x_i) = (\isedge_i + \isquery_i) \cdot e_1$; and $V^\T\phi(x_i) = \isquery_i \cdot e_1$.
By an identical analysis of the attention matrix and $\psi$ construction, we ensure that $f(X)_N = |E|$.

\paragraph*{Edge existence task.}
Under the node/edge encoding, we assume that the edge existence task is encoded as $x_N = (x_N^1, x_N^2, 0, 1, 1)$ for some $x_N^1, x_N^2\in V$ and should return $f(X)_N = \indicator{(x_N^1, x_N^2) \in E}$.
We choose our weight matrices to ensure that $Q^\T\phi(x_N) = \phi'(x_N) = \xi(x_N^1, x_N^2)$; $K^\T\phi(x_i) = \phi'(x_i)$, and $V^\T\phi(x_i) = 2(1 - \isquery_i)e_1$. 
By applying \Cref{claim:eta-prop} and letting $m' = O(\log N)$ to be sufficiently large, the following is true the query/key inner products:
\begin{align*}
    \exp(\phi(x_N)^\T Q K^\T \phi(x_i)) 
    &= \exp(\norm[2]{\xi(x_N^1, x_N^1)}^2)
    \geq e^{7m'/4} & \text{if $\set{x_i^1, x_i^2} = \set{x_N^1, x_N^2}$,}\\
    \exp(\phi(x_N)^\T Q K^\T \phi(x_i)) 
    &\leq e^{5m'/4} \leq \frac{1}{8N} e^{7m'/4} & \text{otherwise.}
\end{align*}
We can therefore analyze the softmax output $y_N$ to obtain the following bound:
\begin{align*}
    y_N
    &\leq \frac{2\exp(\norm[2]{\xi(x_N^1, x_N^1)}^2)\indicator{(x_N^1, x_N^2) \in E} + 2N \cdot \frac1{8N}e^{7m'/4}}{\exp(\norm[2]{\xi(x_N^1, x_N^1)}^2)(1 + \indicator{(x_N^1, x_N^2) \in E})} \\
    &\leq \frac{2 \indicator{(x_N^1, x_N^2) \in E} + \frac14}{1 + \indicator{(x_N^1, x_N^2) \in E}}
     \leq \indicator{(x_N^1, x_N^2) \in E} + \frac14, \\
    y_N
    &\geq \frac{2\exp(\norm[2]{\xi(x_N^1, x_N^1)}^2)\indicator{(x_N^1, x_N^2) \in E}}{\exp(\norm[2]{\xi(x_N^1, x_N^1)}^2)(1 + \indicator{(x_N^1, x_N^2) \in E}) + N \cdot \frac1{8N} e^{7m' / 4}} \\
    &\geq \frac{2 \indicator{(x_N^1, x_N^2) \in E}}{1 + \indicator{(x_N^1, x_N^2) \in E} + \frac18}
     \geq \indicator{(x_N^1, x_N^2) \in E} - \frac14.
\end{align*}
Hence, 
\[y_N \in \begin{cases}
\left[-\frac14, \frac14\right] & \text{if $(x^1_N, x^2_N) \not\in E$,} \\
\left[\frac34, \frac54\right] & \text{if $(x^1_N, x^2_N) \in E$.} 
\end{cases}\]
Therefore, it suffices to design a threshold output MLP $\psi$ that satisfies 
\[\psi(z) = \begin{cases}
1 & z_1 \geq \frac34, \\
0 & z_1 \leq \frac14,
\end{cases}\]
in order to distinguish between the two possible output ranges.
This can be easily constructed by taking a linear combination of two ReLU neurons.

\paragraph*{Node degree task.}
We assume that the task is encoded as $x_N = (x_N^1, 0, 1, 0, 1)$ and should return $f(X)_N = \deg(x_N^1) := \abs{\set{(x_N^1, v) \in E}}$.
We use weight matrices with $Q^\T\phi(x_N) = \phi'(x_N) = \eta(x_N)$; $K^\T\phi(x_i) = \phi'(x_i)$; and $V^\T\phi(x_i) = \isquery_i$.
This ensure that the following is true about the query/key inner products for sufficiently large $m'$:
\begin{align*}
    \exp(\phi(x_N)^\T Q K^\T \phi(x_i)) &= e^{m'} & \text{if $i = N$ or $x_N^1 \in \set{x_i^1, x_i^2}$,}\\
    \exp(\phi(x_N)^\T Q K^\T \phi(x_i)) &\leq e^{ m'/4} \leq \frac{1}{4N} e^{m'} & \text{otherwise.}
\end{align*}
We similarly ensure that the following holds about the softmax output $y_N$.
\begin{align*}
    y_N
     &\leq \frac{e^{m'}}{e^{m'} \cdot (\deg(x_N^1) + 2)}
     = \frac{1}{\deg(x_N^1) + 2}, \\
    y_N
     &\geq \frac{e^{m'}}{(\deg(x_N^1) + 2)e^{m'} + N \cdot \frac1{4N} e^{m'}} \\
     &\geq \frac{1}{\deg(x_N^1) + \frac94}.
\end{align*}
We then use the similar approach employed in the node count task to choose an output MLP that approximately computes $\phi(z) = \floor{\frac{1}{z_1} - 2}$.
    
    
    

We thus conclude that there exist single-layer transformers that solve all of the described tasks.
\end{proof}

\subsection{Negative results for single-layer transformers}\label{asec:neg-1L}

\begin{restatable}
[Formal version of \Cref{thm:neg-1L-informal}; \depthone{} cannot compute search or retrieval tasks]
{theorem}{thmnegoneL}\label{thm:neg-1L}
Fix any graph reasoning task among graph connectivity, shortest path, and cycle detection.
Any single-layer transformer $f \in \trans{m, H, 1}{N}$ with $O(\log N)$-bit precision that solves the task on all graphs $G = (V, E)$ of size $|V| + |E| \leq N-1$ formatted as node/edge embedding sequences has width satisfying $mH = \Omega(N / \log N)$.
\end{restatable}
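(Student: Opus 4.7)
The plan is to prove the lower bound via a standard two-party communication complexity reduction from set disjointness, which has randomized communication complexity $\Omega(n)$ for $n$-bit inputs. Combined with a ``bandwidth'' bound on a single layer of self-attention, this will yield $mH \log N = \Omega(N)$.

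First I would construct reductions from set disjointness to each of the three tasks. Fix $n$ and let Alice hold $S \subseteq [n]$ and Bob hold $T \subseteq [n]$. For \textbf{graph connectivity}, take vertex set $\{s, t, v_1, \ldots, v_n\}$, let Alice contribute the edges $\{(s, v_i) : i \in S\}$ and let Bob contribute the edges $\{(v_i, t) : i \in T\}$; the query token asks whether $s$ and $t$ are connected, which holds iff $S \cap T \neq \emptyset$. For \textbf{cycle detection}, augment this construction with the fixed edge $(s, t)$: then the graph is acyclic iff $S \cap T = \emptyset$, since any cycle must be of the form $s \to v_i \to t \to s$. For \textbf{shortest path}, query the distance from $s$ to $t$ in the connectivity construction; the answer is $2$ if the sets intersect and $-1$ otherwise. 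In every case the resulting instance has $|V| + |E| = O(n)$, so the node/edge encoding has length $N = \Theta(n)$, and the input sequence splits cleanly into an ``Alice half'' (her edge tokens plus padding) and a ``Bob half'' (his edge tokens plus the task token $x_N$).

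Next I would convert the single-layer transformer into a two-party protocol. The output token $f(X)_N$ is produced from a sum over $H$ heads of a softmax-weighted aggregate of value vectors against the single query $Q_h^\T \phi(x_N)$. For each head $h$, Alice can locally compute the two partial statistics
\[
A_h := \sum_{i \text{ on Alice's side}} \exp\!\bigl(\phi(x_N)^\T Q_h K_h^\T \phi(x_i)\bigr), \qquad B_h := \sum_{i \text{ on Alice's side}} \exp\!\bigl(\phi(x_N)^\T Q_h K_h^\T \phi(x_i)\bigr)\, V_h^\T \phi(x_i) \in \R^m,
\]
using the fact that Bob's query embedding $Q_h^\T \phi(x_N)$ and the weight matrices $Q_h, K_h, V_h$ are publicly known (fixed by the transformer). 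She then sends $(A_h, B_h)$ to Bob; under the $O(\log N)$-bit precision assumption, this costs $O(m \log N)$ bits per head, totaling $O(mH \log N)$ bits. Bob combines Alice's statistics with his own analogous partial sums to reconstruct the softmax-normalized aggregate, applies $\psi$, and outputs $f(X)_N$. This is the single-layer specialization of the transformer-to-protocol simulation of \cite{sht23,sht24}.

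Finally I would combine the two ingredients. If the transformer solves any of the three tasks on all graphs of the stated size, then the protocol above solves set disjointness on inputs of length $n = \Theta(N)$ using $O(mH \log N)$ bits of communication. The $\Omega(n)$ communication lower bound for disjointness then forces $mH \log N = \Omega(N)$, i.e., $mH = \Omega(N/\log N)$. The main technical obstacle is verifying that the softmax aggregation really decomposes into Alice's and Bob's partial exponential sums with only $O(m \log N)$-bit transmitted precision without catastrophic error; this reduces to checking that the log-sum-exp arithmetic can be carried out in $O(\log N)$-bit floating point, which follows from the precision conventions already in force throughout the theoretical model of \Cref{assec:prelim-trans}.
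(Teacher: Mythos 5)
Your proposal is correct and takes essentially the same approach as the paper: a reduction from set disjointness combined with the observation that a single-layer transformer's output at the query token decomposes into Alice-side and Bob-side partial softmax aggregates, each transmissible in $O(mH\log N)$ bits under the precision model (this is exactly Lemma~\ref{thm:disj-1L} in the paper). The only differences are cosmetic: the paper's graph gadgets for connectivity are slightly more elaborate (a three-layer construction for connectivity, plus separate constant-degree and constant-diameter variants) in order to show hardness persists on graph families one might hope were easy, and the paper writes the partial statistics as the log-normalized pair $(Z_{h,S}, L_{h,S})$ precisely to address the log-sum-exp precision concern you flag at the end.
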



Our negative results generalizes and applies the approach of \cite{sht23} to prove negative results for single-layer transformers by communication complexity reductions.
The bounds hinge on the following fundamental fact about the hardness of a two-player game where two agents jointly attempt to compute a set disjointness quantity that depends on both of their inputs with bounded communication.
\begin{fact}[Disjointness communication complexity lower bound \cite{yao79}]\label{fact:disj}
Suppose Alice and Bob are given inputs $a, b \in \bit^r$ and wish to jointly compute $\disj(a, b) = \max_i a_i b_i$ by alternately sending single-bit messages to one another.
Any deterministic protocol that computes $\disj(a, b)$ requires at least $r$ rounds of communication, or $r$ bits of information.
\end{fact}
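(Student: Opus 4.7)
The plan is to execute a standard communication-complexity reduction from disjointness (Fact~\ref{fact:disj}), with a separate graph gadget for each of the three named tasks. I will fix $r = \lfloor (N-3)/2 \rfloor = \Theta(N)$ and, for each task, exhibit an encoding $\rho_A: \bit^r \to (\R^d)^{S_A}$ and $\rho_B: \bit^r \to (\R^d)^{S_B}$ into disjoint token slots $S_A, S_B \subset [N]$, together with a fixed query token, such that the task output on $X(G_{a,b}, P)$ equals $\disj(a,b)$. Given such a reduction, any single-layer transformer $f \in \trans{m,H,1}{N}$ solving the task gives a deterministic two-party protocol for $\disj$; Fact~\ref{fact:disj} then forces $mH \cdot \polylog N = \Omega(r) = \Omega(N)$, i.e.\ $mH = \Omega(N / \log N)$.

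For \textbf{connectivity}, the gadget is the classical ``middle-layer'' construction: vertices $s, t, v_1, \dots, v_r$, Alice's bit $a_i$ controls whether the edge $(s, v_i)$ is present, Bob's bit $b_i$ controls $(v_i, t)$, the query is $x_N = (s, t, 0, 1, 1)$, and $s$--$t$ connectivity on this bipartite graph holds iff $\exists i: a_i = b_i = 1$. For \textbf{shortest path} I use the same graph with the distinction ``$d(s,t) \le 2$'' vs.\ ``$d(s,t) = -1$,'' which is again equivalent to $\disj(a,b)$. For \textbf{cycle check}, I modify the gadget by adding a permanent edge $(s,t)$ and two permanent edges $(s, v_i), (v_i, t)$ for one fresh dummy $v_0$; then Alice's and Bob's edges create a cycle through $\{s, v_i, t, s\}$ iff there exists $i$ with $a_i = b_i = 1$, provided the base graph is a tree. (I will verify acyclicity of the base graph and then the biconditional.) All three reductions respect the $|V|+|E| \le N-1$ bound up to constants, so $r = \Theta(N)$.

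The core simulation step is to show that a single-layer transformer of width $mH$ and $O(\log N)$-bit precision induces a deterministic protocol computing $f(X)_N$ using $O(mH \log N)$ bits of communication. For each head $h$, define the unnormalized attention sum $Z_A^{(h)} = \sum_{i \in S_A} \exp(\phi(x_N)^\T Q_h K_h^\T \phi(x_i))$ and the weighted value sum $W_A^{(h)} = \sum_{i \in S_A} \exp(\phi(x_N)^\T Q_h K_h^\T \phi(x_i)) \, V_h \phi(x_i) \in \R^m$, and analogously $Z_B^{(h)}, W_B^{(h)}$. Because both parties know the query token $x_N$ and the model weights, Alice can compute $(Z_A^{(h)}, W_A^{(h)})$ from her tokens alone and transmit these $m+1$ numbers to Bob; Bob then forms the full softmax numerator and denominator as $W_A^{(h)} + W_B^{(h)}$ and $Z_A^{(h)} + Z_B^{(h)}$, sums across $h$, applies $\psi$, and outputs $f(X)_N$. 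Under $O(\log N)$-bit precision and $N$-term sums, each transmitted scalar fits in $O(\log N)$ bits, giving total communication $O(mH \log N)$.

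Combining the two ingredients: if $f$ solves the task, the induced protocol computes $\disj$, so Fact~\ref{fact:disj} yields $mH \log N = \Omega(r) = \Omega(N)$, hence $mH = \Omega(N / \log N)$. The main obstacle I anticipate is not the transformer-to-protocol simulation, which is essentially bookkeeping of softmax partial sums, but rather the cycle-check gadget: I need the base graph (before adding Alice's and Bob's edges) to contain no cycle whatsoever, and I need the biconditional between cycle existence and $\disj(a,b)$ to hold for \emph{every} pair $(a,b)$, not just generic ones. I will handle this by taking the base graph to be a star centered at a fresh vertex $w$ with leaves $s, t, v_1, \dots, v_r$, then letting $a_i$ toggle edge $(s, v_i)$ and $b_i$ toggle edge $(v_i, t)$; any cycle must use at least one Alice-edge and one Bob-edge sharing a $v_i$, giving the required equivalence.
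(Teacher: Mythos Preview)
Your proposal does not prove the stated fact. Fact~\ref{fact:disj} is the classical Yao lower bound on the deterministic communication complexity of set disjointness: that any deterministic two-party protocol computing $\disj(a,b)$ must exchange at least $r$ bits. The paper does not prove this; it is quoted with a citation to~\cite{yao79} and used as a black box.

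What you have written is instead a proof of Theorem~\ref{thm:neg-1L} (the $mH = \tilde\Omega(N)$ lower bound for single-layer transformers on connectivity, shortest path, and cycle check), which \emph{invokes} Fact~\ref{fact:disj} as its final step. Your reductions and the partial-softmax simulation are the content of Lemma~\ref{thm:disj-1L} and the proof of Theorem~\ref{thm:neg-1L} in the paper, not of Fact~\ref{fact:disj} itself. If the assignment is genuinely to prove Fact~\ref{fact:disj}, you need a fooling-set or rectangle-covering argument (e.g., exhibit $2^r$ input pairs $(e_i, \bar e_i)$ that are pairwise inconsistent on any monochromatic rectangle), not a transformer simulation. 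If the assignment was to prove Theorem~\ref{thm:neg-1L}, your approach is essentially the paper's, with minor differences in the specific graph gadgets.
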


We first generalize Theorem~7 of \cite{sht23} to show that no transformer can efficiently solve an embedded set disjointness problem without having a width that scales linearly in $r$. 
We prove the theorem and later apply to it prove negative results about relevant graph reasoning tasks.
To demonstrate that the graph reasoning tasks do not require pathological input graphs $G$ to be hard, we exhibit particular input graph instances with constant graph diameter or constant degree where the task cannot be efficiently solved.


\begin{lemma}[Generic \depthone{} communication complexity negative result]
\label{thm:disj-1L}
For some sequence length, fix two disjoint subsets $A, B \subset [N-1]$, and consider a single-layer transformer $f \in \trans{m, H, 1}{N}$ with $O(\log N)$-bit precision that solves set disjointness, i.e. $f(X)_N = \disj(a, b)$ for any input $X$ where $X_A$ is a function of Alice's input $a \in \bit^r$, $X_B$ is a function of Bob's input $b \in \bit^r$, and $X_{[N] \setminus (A \cup B)}$ is fixed regardless of $a$ and $b$.
Then, $f$ has width satisfying $mH = \Omega(r / \log N)$.
\end{lemma}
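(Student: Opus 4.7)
The plan is to reduce from set disjointness by converting the hypothesized transformer into a low-bandwidth two-party communication protocol. Given $f \in \trans{m,H,1}{N}$ satisfying the hypothesis, I will construct a deterministic protocol in which Alice and Bob exchange $O(mH \log N)$ bits and both conclude with the value $f(X)_N = \disj(a,b)$. By \Cref{fact:disj} this forces $mH \log N = \Omega(r)$, yielding the claimed lower bound.

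The key structural observation is that, expanding \Cref{eq:single-output}, the output depends on the unnormalized attention sums
\[
u_h \;=\; \sum_{i=1}^{N} \exp\!\bigl(\phi(x_N)^\T Q_h K_h^\T \phi(x_i)\bigr)\,V_h^\T \phi(x_i), \qquad
z_h \;=\; \sum_{i=1}^{N} \exp\!\bigl(\phi(x_N)^\T Q_h K_h^\T \phi(x_i)\bigr),
\]
so that $f(X)_N = \psi\!\left(\sum_h u_h / z_h\right)$. Since $x_N$ and the tokens indexed by $[N]\setminus(A\cup B)$ are fixed, the query vector $Q_h^\T \phi(x_N)$ and the contributions $(u_h^F, z_h^F)$ from the fixed tokens are common knowledge. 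Alice locally evaluates the partial sums $(u_h^A, z_h^A)$ restricted to $i\in A$ (which depend only on $a$), Bob symmetrically produces $(u_h^B, z_h^B)$, Alice transmits her $H$ pairs to Bob, and Bob reconstructs $u_h = u_h^A + u_h^B + u_h^F$ and $z_h = z_h^A + z_h^B + z_h^F$ exactly, applies $\psi$, and outputs $\disj(a,b)$.

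To bound the communication, each $u_h^A$ is an $m$-vector and each $z_h^A$ a scalar; by the $O(\log N)$-bit precision hypothesis every intermediate quantity of $f$ (including these partial sums) is representable in $O(\log N)$ bits per coordinate. Summed across $H$ heads, Alice sends $O(mH \log N)$ bits. Combining with the $\Omega(r)$ bit disjointness bound from \Cref{fact:disj} gives $mH = \Omega(r/\log N)$.

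The main obstacle is ensuring that splitting the softmax into partial exponential sums is numerically faithful under the $O(\log N)$-bit precision model: naively the sums $u_h^A, z_h^A$ could in principle require more dynamic range than a single attention output. This is resolved by observing that the hypothesis ``all model parameters, inputs, and intermediate products can be represented with $O(\log N)$-bit numbers'' (inherited from \Cref{assec:prelim-trans}) applies to precisely these partial sums, since they are intermediates of the attention computation itself; hence no extra precision is needed for Bob to recover $f(X)_N$ exactly. A secondary but routine point is confirming that $\phi$ and $\psi$ may be treated as arbitrary functions without inflating the communication cost, which holds because both MLPs are evaluated locally by Bob on the reconstructed aggregate.
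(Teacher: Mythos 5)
Your reduction is the same one the paper uses: expand the single-head/multi-head attention output into per-token contributions, observe that $x_N$ and the fixed tokens contribute known quantities, have Alice send her aggregate contribution per head, and let Bob finish the computation; the communication bound $O(mH\log N)$ then collides with \Cref{fact:disj}. The only substantive divergence is \emph{which} aggregates Alice transmits, and this is exactly where your proposal has a soft spot.

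You have Alice send the raw unnormalized sums $u_h^A = \sum_{i\in A}\exp(\cdot)\,V_h^\T\phi(x_i)$ and $z_h^A=\sum_{i\in A}\exp(\cdot)$, and you defend the $O(\log N)$-bits-per-coordinate claim by asserting that these partial sums are ``intermediates of the attention computation itself,'' so the model's precision hypothesis covers them. That assertion is not quite right: the transformer never computes a sum restricted to $A$ versus $B$ --- that split is an artifact of your protocol, not an intermediate of $f$. More to the point, even if each partial sum were representable at $O(\log N)$-bit \emph{relative} precision, transmitting raw exponential sums is numerically fragile: $z_h^A$ can sit anywhere in a doubly-exponential dynamic range, and $u_h^A$ has mixed-sign terms (the $V_h^\T\phi(x_i)$ can be negative), so Bob's reconstruction $u_h^A + u_h^B + u_h^F$ is exposed to catastrophic cancellation that $O(\log N)$ relative bits on each summand do not control. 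The paper's proof avoids exactly this by reparameterizing what Alice sends: it uses the pair $(Z_{h,A}, L_{h,A})$ where $Z_{h,A}$ is the \emph{normalized} softmax output restricted to $A$ (a convex combination of value vectors, hence no larger in magnitude and needing no more precision than any single $V_h^\T\phi(x_i)$) and $L_{h,A}=\log z_h^A$ is the log-sum-exp (within $O(\log N)$ of the maximum key-query score, so again well-conditioned). Bob then recombines via another convex combination weighted by the $\exp(L_{h,S})$ terms, keeping everything bounded. The high-level argument and the final bound are identical; the difference is that the paper's choice of what to communicate is precision-safe by construction, whereas yours relies on an appeal to the precision hypothesis that doesn't actually apply to the quantities you transmit. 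To repair your version without changing the structure, swap $(u_h^A,z_h^A)$ for $(Z_{h,A},L_{h,A})$ and note the boundedness facts above.
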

\begin{proof}
Consider any transformer $f$ that solves $\disj$ as specified in the theorem statement.
We show that this implies the existence of a $O(mH \log N)$-round communication protocol that solves $\disj(a, b)$ for any inputs $a, b \in \bit^{r}$.
An application of \Cref{fact:disj} immediately proves the theorem statement.

If such a transformer $f$ exists, then the following is true for some $\phi, \psi$, and $Q, K, V$:
\begin{align*}
    \disj(a, b) 
    &= \psi\paren{\sum_{h=1}^H\sm\paren{\phi(x_N)^\T Q_h K_h^\T \phi(X)} \phi(X) V_h} \\
    &=\psi\paren{\sum_{h=1}^H \frac{Z_{h,A}\exp(L_{h,A}) + Z_{h,B}\exp(L_{h,B}) + Z_{h,[N] \setminus (A \cup B)} \exp(L_{h,[N] \setminus (A \cup B)})}{\exp(L_{h,A}) + \exp(L_{h,B}) + \exp(L_{h,[N] \setminus (A \cup B)})}},
\end{align*}
for partial softmax and normalization terms\footnote{
These terms are designed in order to facilitate computation with $\log(N)$-bit floating-point numbers. Note that each $Z_{h,S}$ is a convex combination of value vectors $V_h^\T \phi(x_i)$, which means that $Z_{h,S}$ requires no more bits of precision to accurately approximate than $V_h^\T \phi(x_i)$. 
Similarly, each $L_{h, S}$ is within an $O(\log N)$ additive factor of $\max_{i \in S} \phi(x_N)^\T Q_h K_h^\T \phi(x_i)$.
Finally, the output of the softmax term for head $h$ is a convex combination of $Z_{h, A}$, $Z_{h, B}$, and $Z_{h, [N] \setminus (A \cup B)}$, which makes accurate computation of each output possible without requiring greater precision.
} defined as follows for $S \subset [N]$ and $h \in [H]$:
\begin{align*}
    Z_{h, S} &= \frac{\sum_{i \in S} \exp(\phi(x_N)^\T Q_h K_h^\T \phi(x_i)) V_h \phi(x_i)}{\sum_{i \in S}  \exp(\phi(x_N)^\T Q_h K_h^\T \phi(x_i))} \in \R^m,\\
    L_{h, S} &= \log\paren{\sum_{i \in S}  \exp(\phi(x_N)^\T Q_h K_h^\T \phi(x_i))} \in \R_+.
\end{align*}
The definition of input instances $X$ implies that Alice can compute $Z_{h, A}$ and $L_{h, A}$ as a function of her input $a$; Bob can similarly compute $Z_{h, B}$ and $L_{h, B}$ from $b$; and $Z_{h, [N] \setminus (A \cup B)}$, $L_{h, [N] \setminus (A \cup B)}$, and all implementation details of $f$ are known by all players.
Therefore, Bob can compute $\disj(a, b)$ by an $O(mH \log N)$-round protocol where Alice sends him $(Z_{h, A}, L_{h,A})$ bit-by-bit.
\end{proof}

It remains to apply \Cref{thm:disj-1L} to each graph reasoning task by defining graph instance encodings $X$ that encode $\disj(a, b)$ in the solution to the task for some $r = \Theta(N)$.

\begin{proof}[Proof of \Cref{thm:neg-1L}]






For each task, we provide a pair of ``hard instances:'' one with constant degree and one with constant diameter, in order to obey different notions of graph simplicity.

\paragraph*{Graph connectivity task.}

For both instances, on any disjointness input $a, b \in \bit^r$, we define a graph $G = (V, E)$ with $|V| + |E| = O(r)$ whose edges encode the input, i.e. $E = E(a, b)$.
We define three fixed disjoint sets of potential edges $\bar{E}_A, \bar{E}_B, \bar{E}_*$ with $|\bar{E}_A| +  |\bar{E}_B| + |\bar{E}_*| = O(r)$ such that $E = E_A(a) \cup E_B(b) \cup \bar{E}_*$, where $E_A(a) \subset \bar{E}_A$ is a function of only Alice's input $a$ and $E_B(b) \subset \bar{E}_B$ is a function of Bob's $b$.

We define an enumeration of all vertices in $V$ and potential edges in $\bar{E}_A, \bar{E}_B, \bar{E}_*$. 
We first fix their sizes $|V|, |\bar{E}_A|, |\bar{E}_B|, |\bar{E}_*|$ and then index the vertices and edges in the order $V, \bar{E}_A, \bar{E}_B, \bar{E}_*$.
That is,
\begin{align*}
    V &= \set{1, \dots, |V|}, \\
    \bar{E}_* &= \set{(u_i, v_i): i \in S_*}, \ \text{for} \ S_* = \set{|V| + 1, \dots, |V| + |\bar{E}_*|},\\
    \bar{E}_A &= \set{(u_i, v_i): i \in A}, \ \text{for} \ A = \set{|V|+|\bar{E}_*| + 1, \dots, |V| + |\bar{E}_*| + |\bar{E}_A|}, \\
    \bar{E}_B &= \set{(u_i, v_i): i \in B}, \ \text{for} \ B = \set{|V|+|\bar{E}_*| + |\bar{E}_A| + 1, \dots, |V| + |\bar{E}_*| + |\bar{E}_A| + |\bar{E}_B|}.
\end{align*}
This implies that the following transformer input $X \in R^{N \times d}$ for $N = |V| + |S_*| + |A| + |B| + 1$ is a valid node/edge encoding such that inputs $X_A$ is a function of $a$, $X_B$ is a function of $b$, and $X_{[N] \setminus (A \cup B)}$ is constant:
\begin{itemize}
    \item For $i \in V$, $x_i = (i, 0, 1, 0, 0)$ encodes a fixed vertex.
    \item For $i \in S_*$, $x_i = (u_i, v_i, 0, 1, 0)$ encodes a fixed edge.
    \item For $i \in A$, $x_i$ represents edge $(u_i, v_i)$ if it exists:
    \[x_i = \begin{cases}(u_i, v_i, 0, 1, 0) & \text{if $(u_i, v_i) \in E_A(a)$,} \\ (0, 0, 0,0,0) & \text{otherwise.}\end{cases}\]
    \item For $i \in B$, $x_i$ represents edge $(u_i, v_i)$ if it exists:
    \[x_i = \begin{cases}(u_i, v_i, 0, 1, 0) & \text{if $(u_i, v_i) \in E_B(b)$,} \\ (0, 0, 0,0,0) & \text{otherwise.}\end{cases}\]
    \item $x_N = (u, v,0, 1, 1)$ encodes the task token for some fixed $(u, v) \in V^2$.
\end{itemize}

\textit{Constant-diameter instance:}
We define a graph $G$ of diameter at most 8 that encodes an disjointness instance $a, b \in \bit^r$ in an instance of connectivity.
\begin{itemize}
    \item Let $|V| = 3r + 2$, and let nodes 1 and $3r+2$ denote the ``source'' and ``sink'' node respectively.
    That is, $x_N = (1, 3r+2, 0, 1, 1)$.
    \item Edges between the source node 1 and nodes $\set{2, \dots, r+1}$ and between the end node $3r + 2$ and $\set{2r + 2, \dots, 3r+1}$ always are included. That is, \[E_* = \set{(1, i): i \in \set{2, \dots, r+1}} \cup \set{(3r+2, i): i \in \set{2r+2, \dots, 3r+1}}.\]
    \item Alice's input is encoded as edges $(i + 1, i + r + 1)$ for $i \in [r]$. That is,
    \[E_A(a) = \set{(i + 1, i + r + 1) : a_i = 1} \subset \bar{E}_A = \set{(i + 1, i + r + 1): i \in [r]} .\]
    \item Bob's inputs are similarly encoded as $(i + r+ 1, i + 2r)$:
    \[E_B(b) = \set{(i + r + 1, i + 2r + 1) : b_i = 1} \subset \bar{E}_B = \set{(i + r+ 1, i + 2r + 1): i \in [r]} .\]
\end{itemize}
We visualize this construction of $G$ in \Cref{fig:constant-diameter}.

There exists a path between node 1 and node $3r + 2$ if and only if there exists some $i\in[r]$ such that $(i + 1, i + r+1), (i + r + 1, i + 2r + 1) \in E$, which corresponds to $a_i = b_i = 1$.
Thus, the graph $G$ is connected if and only if $\disj(a, b) = 1$.
Any transformer $f$ that computes \[f(X)_N = \indicator{1 \text{ and } 3r+2 \text{ are connected in} G}\]
also solves disjointness. 
Since $N = |V| + |S_*| + |A| + |B| + 1 = \Theta(r)$, we conclude the proof of hardness of solving graph connectivity on this instance by applying \Cref{thm:disj-1L}.

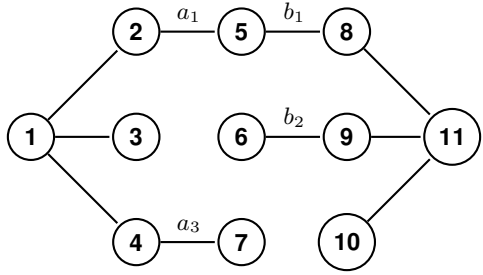
\begin{figure}[h!]
    \centering
    \begin{tikzpicture}[scale=0.7,-,>=stealth',shorten >=1pt,auto,node distance=3cm,
                    thick,main node/.style={circle,draw,font=\sffamily\small\bfseries}]

  \node[main node] (1) at (0,2) {1};
  \node[main node] (2) at (2,4) {2};
  \node[main node] (3) at (2,2) {3};
  \node[main node] (4) at (2,0){4};
  \node[main node] (5) at (4,4){5};
  \node[main node] (6) at (4,2) {6};
  \node[main node] (7) at (4,0){7};
  \node[main node] (8) at (6,4){8};
  \node[main node] (9) at (6,2){9};
  \node[main node] (10) at (6,0){10};
  \node[main node] (11) at (8,2){11};
  \path[every node/.style={font=\sffamily\small}]
  (1) edge node[above] {} (2)
  (1) edge node [left] {} (3)
  (1) edge node [left] {} (4)
  (2) edge node [above] {$a_1$ } (5)
  (4) edge node [above] {$a_3$ } (7)
  (5) edge node [above] {$b_1$ } (8)
  (6) edge node [above] {$b_2$ } (9)
  (8) edge node [left] { } (11)
  (9) edge node [left] { } (11)
  (10) edge node [left] { } (11);
\end{tikzpicture}
    \caption{The constant diameter graph construction for $r=3$, $A=(1,0,1)$ and $B=(1,1,0)$. The source $1$ and sink $11$ are connected which is equivalent to $\disj(A,B)=1$ by construction. }
    \label{fig:constant-diameter}
\end{figure}

\textit{Constant-degree instance:}
We can define a degree-3 graph $G$ that similarly encodes a disjointness instance $a, b$.
To do so, we modify the previous construction by replacing the fixed edges $E_*$ with two binary trees, each of depth $O(\log r)$, between the source and end nodes as roots and Alice and Bob's nodes incident to $\bar{E}_A$ and $\bar{E}_B$ respectively as leaves.
The remainder of the construction---including the encoding of $E_A(a)$ and $E_B(b)$ and the connectivity analysis---is identical.
Since a binary tree with $r$ leaves has $O(r)$ nodes and edges, the graph $G$ can be similarly encoded as a length-$N$ transformer input for $N = O(r)$.
See \Cref{fig:constant-degree} for a visualization.

\begin{figure}[h!]
    \centering
\begin{tikzpicture}[scale=0.5,-,>=stealth',shorten >=1pt,auto,node distance=3cm,
                    thick,main node/.style={circle,draw,font=\sffamily\small\bfseries}]               
\node[main node] (1) at (0,4) {$S$};
\node[main node] (2) at (2,2) {};
\node[main node] (3) at (2,6) {};
\node[main node] (4) at (4,1) {};
\node[main node] (5) at (4,3) {};
\node[main node] (6) at (4,5) {};
\node[main node] (7) at (4,7) {};
\node[main node] (8) at (12,4) {$X$};
\node[main node] (9) at (10,2) {};
\node[main node] (10) at (10,6) {};
\node[main node] (11) at (8,1) {};
\node[main node] (12) at (8,3) {};
\node[main node] (13) at (8,5) {};
\node[main node] (14) at (8,7) {};
\node[main node] (15) at (6,1) {};
\node[main node] (16) at (6,3) {};
\node[main node] (17) at (6,5) {};
\node[main node] (18) at (6,7) {};
  \path[every node/.style={font=\sffamily\small}]
  (1) edge node[above]{}(2)
  (1) edge node[above]{}(3)
  (2) edge node[above]{}(4)
  (2) edge node[above]{}(5)
  (3) edge node[above]{}(6)
  (3) edge node[above]{}(7)
  (4) edge node[above]{$a_4$}(15)
  (6) edge node[above]{$a_2$}(17)
  (18) edge node[above]{$b_1$}(14)
  (17) edge node[above]{$b_2$}(13)
  (8) edge node[above]{}(9)
  (8) edge node[above]{}(10)
  (9) edge node[above]{}(11)
  (9) edge node[above]{}(12)
  (10) edge node[above]{}(13)
  (10) edge node[above]{}(14) ;
\end{tikzpicture}    
    \caption{The degree 3 graph construction for $r=4$, $A=(0,1,0,1)$, $B=(1,1,0,0)$. The source node $S$ and sink node $X$ are connected, which is equivalent to $\disj(A,B)=1$ by construction.}
    \label{fig:constant-degree}
\end{figure}
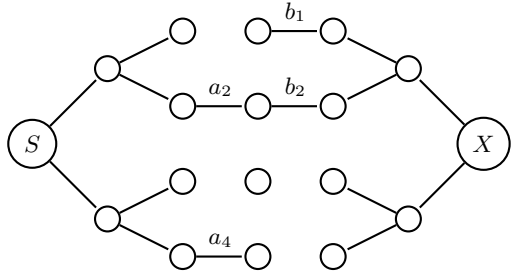

\paragraph*{Shortest path task.}

We can modify the graph connectivity constructions $G$ to create a decision variant of shortest path, rather than graph connectivity.
Let $D(G)$ be the length of a path between the source and sink node in each connectivity construction, and note that $D(G) = 4$ for the constant-diameter instance and $D(G) = O(\log r)$ for constant-degree.
We design a new graph $G'$ of size $O(r)$ then appends a path of $D(G) + 1$ vertices to $G$.
Then, the shortest path between the source and sink node is $D(G)$ if they are connected and $D(G) + 1$ if not.

If there exists a transformer $f'$ solves shortest path on graphs of the form $G'$, then there exists a minor modification $f$ that solves graph connectivity on graph $G$.


\paragraph*{Cycle check task.}

We again modify the graph connectivity constructions.
We produce $G'$ by adding a single edge to $G$ between the source node 1 and the sink node $3r+2$, which ensures that $G'$ has a cycle if and only if $G$ is connected between the source and sink nodes.
Therefore, a transformer $f'$ can only solve check check if there exists a transformer $f$ that solves connectivity.
\end{proof}
\section{Representational gaps between transformers and GNNs}\label{asec:gnn_limitations}

Our experiments in \Cref{sec:transformers-gnns} highlight the differences in graph reasoning abilities between vanilla transformers with a naive input graph tokenization and graph neural networks (GNNs).
These distinctions can be understood as the consequences of model capacity or inductive bias between transformers and GNNs.
This appendix contrasts the novel analysis of the capacities of vanilla transformers in \Cref{sec:hierarchy} with previously established limitations on GNNs.
We present two theoretical tools for deriving negative results on GNN capabilities: \congest{} distributed computing model and the Weisfeiler-Leman isomorphism test (WL-test).
We discuss the results implied by these frameworks and contrast them with transformer capabilities and limitations.



\subsection{Limitations of GNNs via \congest{} analogy}

The bidirectional relationship between transformers and the massively parallel computation (MPC) distributed computing model of \cite{sht24} was partially inspired by a similar analogy between GNNs and the \congest{} model by \cite{loukas2020graph}. 

The MPC and \congest{} distributed computing protocols differ in their models of how messages are passed between machines.
While MPC protocols permit any machine to send a message to any other machine subject to capacity constraints, \congest{} protocols operate on a graph topology that restricts machines to send messages exclusively to and from neighbors.
As a consequence, two nodes in a \congest{} graph that are separated by a path of length $p$ cannot share information with one another without $\Omega(p)$ rounds of communication; no similar notion of ``long-range communication'' exists for MPC.

All message-passing GNNs where each node initially knows only its own features can be simulated by a \congest{} protocol whose rounds scales linearly in the depth of the GNN~\cite{loukas2020graph}.
This reduction implies a lower bound on solving several parallelizable and search tasks discussed in \Cref{ssec:appendix-graphreasoning}.




\begin{theorem}\label{cor:loukas}
Any graph neural network $f$ with width (message size) $m$ and depth $L$ that computes any of the following tasks on $n$-node graphs has the following size lower bounds:
\begin{itemize}[nosep]
    \item Subgraph connectivity, minimum spanning forest, minumum cut, shortest path: $L \sqrt{m} = \tilde\Omega(\sqrt{n})$.
    \item Cycle detection, diameter computation: $Lm = \tilde\Omega(n)$.
\end{itemize}
If $L = O(\log N)$, then for all tasks, $m = \tilde\Omega(n)$.
\end{theorem}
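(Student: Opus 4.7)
The plan is to reduce the statement to known \congest{} lower bounds via the simulation of message-passing GNNs by \congest{} protocols, which is precisely the framework developed in \cite{loukas2020graph}. The present corollary is essentially a restatement of Loukas's main capacity theorems in the notation of this paper, so the proof is assembly rather than new invention.

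First, I would establish the reduction: any GNN of depth $L$ and message width $m$ (with the standard $O(\log n)$-bit precision per coordinate) can be simulated by an $L$-round \congest{} protocol on the same input graph, whose per-edge bandwidth is $\tilde O(m)$ bits. This is immediate because each GNN layer computes a new embedding at every vertex from the vertex's previous embedding and the previous embeddings of its neighbors, which matches the \congest{} model verbatim once each $m$-dimensional embedding is serialized into the per-edge bandwidth budget.

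Second, I would invoke the \congest{} round lower bounds for each task, which are proved by information-theoretic cut arguments on worst-case graph families. For each task one exhibits a ``bottleneck'' family in which two halves of the graph encode a hard two-party instance (a variant of set disjointness) that forces $\tilde\Omega(n)$ bits to cross a cut of controlled width. For subgraph connectivity, minimum spanning forest, minimum cut, and shortest path, the canonical construction has a cut of size $\Theta(\sqrt n)$; for cycle detection and diameter, the cut has size $\Theta(1)$. Combining the amount of information that must traverse the cut with the per-edge bandwidth $\tilde O(m)$ and the $L$ available rounds yields the claimed trade-offs $L\sqrt m = \tilde\Omega(\sqrt n)$ and $Lm = \tilde\Omega(n)$, respectively. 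The $L = O(\log n)$ corollary follows by substitution, since absorbing a $\log n$ factor into $\tilde\Omega$ forces $m = \tilde\Omega(n)$ in both regimes.

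The main obstacle is obtaining the $\sqrt m$ exponent (rather than the coarser $m$) in the first family of bounds. A naive bandwidth-counting argument that simply charges $m$ bits per edge per round only gives $Lm = \tilde\Omega(\sqrt n)$ for those tasks. Recovering the sharper $L\sqrt m = \tilde\Omega(\sqrt n)$ requires inheriting Loukas's refined capacity analysis, which exploits the structure of the hard instances to show that at most $\tilde O(\sqrt m)$ ``useful'' bits per edge per round can actually distinguish the two encoded inputs, effectively because the cut-induced rank constraint on the aggregation is more restrictive than the raw bandwidth. For the cycle detection and diameter bounds the bottleneck has constant cut width, so the coarser bandwidth argument already saturates and no such refinement is needed.
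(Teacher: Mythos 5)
Your overall plan — simulate a depth-$L$, width-$m$ GNN by an $L$-round \congest{} protocol with per-edge bandwidth $\tilde O(m)$, then invoke \congest{} lower bounds — is exactly the route the paper takes; indeed the paper simply cites \cite{loukas2020graph} and states the theorem without re-proving it, so your proposal is a faithful unpacking of that citation.

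Where you go astray is in the explanation of the $\sqrt m$ factor. It is not that only $\tilde O(\sqrt m)$ ``useful'' bits can cross an edge per round because of some ``cut-induced rank constraint'' on the aggregation — that language is imported from the transformer/low-rank-attention world and has no analogue in message-passing GNNs; every edge genuinely carries $\tilde O(m)$ bits per round. The $\sqrt m$ instead arises from how the Das Sarma et al.\ hard-instance family $G(\Gamma,p)$ (a binary-tree ``highway'' plus $\Gamma$ spoke-paths of length $p$, with $\Gamma p = \Theta(n)$) must be retuned when the bandwidth is $\tilde O(m)$. Two lower bounds apply simultaneously: a distance bound $R \geq p$ and a communication bound $R = \tilde\Omega(\Gamma/m)$ (information about a $\Gamma$-bit disjointness instance must squeeze through a bounded-width bottleneck at rate $\tilde O(m)$ per round). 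Choosing $\Gamma$ and $p$ to balance the two bounds subject to $\Gamma p = \Theta(n)$ gives $\Gamma = \tilde\Theta(\sqrt{nm})$, $p = \tilde\Theta(\sqrt{n/m})$, hence $R = \tilde\Omega(\sqrt{n/m})$, which is $L\sqrt m = \tilde\Omega(\sqrt n)$. Your assertion that ``the canonical construction has a cut of size $\Theta(\sqrt n)$'' is also off: the relevant cut width is $\Gamma$, which scales with $m$ under this tuning. For cycle detection and diameter the hard instance has no distance-bound trade-off and a constant-width bottleneck, so only the communication bound $R = \tilde\Omega(n/m)$ is in play, giving $Lm = \tilde\Omega(n)$ as you state. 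With these corrections the rest of your argument, including the $L = O(\log n)$ corollary, is fine.
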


In contrast, the quantitative bounds in \Cref{sec:hierarchy} establish sharp trade-offs between transformers and GNNs for parallelizable tasks and suggest a possible equivalence for search tasks.

All parallelizable tasks---including (subgraph) connectivity, minimum spanning forest, minimum cut, and cycle detection---can be solved by transformers of depth $L = O(\log n)$ and width $m = O(n^{\epsilon})$ for any constant $\epsilon\in (0, 1)$ due to \Cref{thm:parallelizable-pos}.
In contrast, \Cref{cor:loukas} requires that a similar-depth GNNs have width $m = \tilde\Omega(n)$, and a GNN of comparable width requires depth $L= \tilde\Omega(n^{1-\epsilon})$.

On the other hand, search tasks, such as shortest path and diamter, are only guaranteed to be solvable by transformers of depth $O(\log n)$ and width $O(n^{1 + \epsilon})$ (for graphs with $|E|=n^2$) by \Cref{thm:search-pos}.
This statement compares to the GNN negative results of \Cref{cor:loukas}.


\subsection{Limitations of GNNs via the Weisfeiler-Leman test}

A relationship between GNNs and the Weisfeiler-Leman heuristic graph isomorphism test~\cite{weisfeiler1968reduction} (WL test) further establishes representational limitations of message-passing GNNs. 
This connection and the rich literature surrounding it is presented in greater detail by \cite{morris2023weisfeiler}.

The 1-WL test is a permutation-invariant test for predicting whether two candidate graphs are isomorphic that works by first labeling each node with the empty set $\emptyset$ and then repeatedly replacing each label with a multiset of its neighbors' labels.
A hierarchy of WL test variants exists where the $k$-WL test maintains a label for each $k$-tuple of vertices.
The failure models of these heuristic solutions are well-understood; critically, the 1-WL test cannot distinguish between connected and disconnected graphs.

The abilities of message-passing GNNs without unique node identifiers to determine whether two graphs are isomorphic are limited by the graphs distinguishable by the 1-WL test~\cite{xhlj18,morris2023weisfeiler}.
As an immediate consequence, such GNNs cannot solve graph connectivity \textit{at all}, unlike transformers, which can do so with a logarithmic-depth parameter-efficient representation.
The relationship is bidirectional; message-passing GNNs (and transformers as well) admit an efficient approximation of the 1-WL test.

The effectiveness of these bounds is tempered by the assumption that no node includes identifying features, which is easily overcome by standard GNN models.
The analogy is further limited by the fact that graph embeddings of transformers must have some (possibly arbitrary) node identifier as input in order to tokenize a graph using the node/edge encoding without losing the ability to associate each node with its incident edges. 
However, the juxtaposition of the 1-WL and \congest{}-based limitations on the abilities of GNNs to solve connectivity-based tasks suggests a fundamental gap in capabilities between models that is apparent in multiple theoretical lenses.

\section{Experimental Details}\label{asec:empirical}

\subsection{Datasets}\label{asec:dataset}


We evaluate our model on the diverse graph reasoning tasks presented in GraphQA~\cite{fhp23}. We used the public code of the dataset available at \url{https://github.com/google-research/google-research/tree/master/graphqa}. The code to generate the datasets is licensed under the Apache License, Version 2.0. The tasks in the dataset range in difficulty and encompass the following categories:

\begin{itemize}[leftmargin=*,nosep]
    \item \textbf{Graph-level:}
    node counting~(counting the number of nodes in a graph), edge counting~(counting the number of edges in a graph),
    cycle check~(determining whether a graph contains a cycle), and
    triangle counting~(counting the number of triangles in a graph).
    \item \textbf{Node-level:}
    node degree~(calculating the degree of a given node in a graph).
    \item \textbf{Edge-level:}
    connectivity~(finding if there is a path from one node to another),
    edge existence~(whether a given edge exists in a graph, and
    shortest path~(finding the length of the shortest path from one node to another).
\end{itemize}

The graphs used in the experiments in this paper and the corresponding graph reasoning tasks are taken from \cite{fhp23}. 
There are $1,000$ graphs in the original train set, $500$ graphs in the dev set, and $500$ graphs in the test set. The graphs are generated randomly using Erd\H{o}s-R\'{e}nyi~(ER) random graph model~\citep{erdds1959random}. Graph size ranges from 5 to 20 nodes.

\textbf{Train set statistics.} Average number of nodes: 11.90; average number of edges: 37.01; average node degree: 5.43.

\textbf{Test set statistics.} Average number of nodes: 12.37; average number of edges: 39.79; average node degree: 5.70.

\begin{wrapfigure}[18]{R}{0.5\textwidth}
\begin{tikzpicture}
\begin{axis}[
    symbolic x coords={no cycle, 3, 4, 5, 6},
    xtick=data,ymin=0,width=\linewidth]
    \addplot[ybar,fill=cycle3] coordinates {
        (no cycle,154)
        (3,814)
        (4,22)
        (5,7)
        (6,3)
    };
\end{axis}
\end{tikzpicture}
\vspace*{-2mm}
\caption{Histogram of minimum cycle lengths for cycle check instances.}\label{fig:appendix:cycle_length}
\end{wrapfigure}
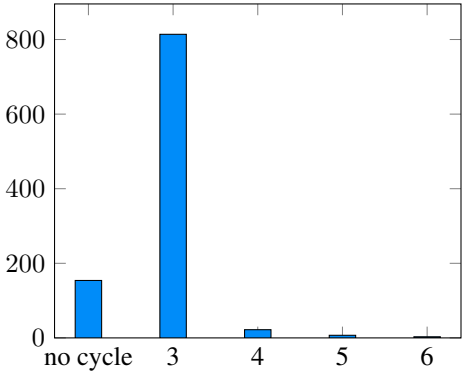

While random instances of graph reasoning tasks provide a valuable assessment of the task complexity on realistic graphs, they do not necessarily reflect the ``worst case'' graph inputs that convey negative results like \Cref{thm:neg-1L-informal} and \Cref{thm:graph-log-lb-informal}.
For example, the reduction that establishes that cycle check is ``as hard as'' graph connectivity and the consequential logarithmic-depth hardness results hinge on the consideration of graph instances with $n$ nodes and polynomial cycle length.
However, as witnessed by \Cref{fig:appendix:cycle_length}, the shortest cycles observed in 1000 instances of GraphQA cycle check is almost always of length three, and only 3.2\% of instances are larger.
As a consequence, identifying the existence of a cycle on the GraphQA dataset is inherently local, which is reflected by a strong performance by heuristic-based GNN solutions (\Cref{tab:local})---despite the fact that efficient GNNs for \textit{worst-case} cycle check do not exist  (\Cref{cor:loukas}).

For our experiments on the effect of the scale of the number of training data points in the final results we obtain, we use the open-source code of GraphQA available to generate a larger training dataset of 100K examples. We follow the original instructions and parameters to create this larger training dataset.


\subsection{Implementation Details}

\textbf{Model Hyperparameters.}
We fixed the number of iterations as 1,000,000 and train standard decoder-only transformers with $L = 12$ layers, $m = 768$ embedding dimension, $H = 12$ heads, learning rate $5 \cdot 10^{-4}$, and dropout $0.1$.
These models have an approximate parameter count of 60,000,000.

We used random search~\cite{bergstra2012random} over the following set of hyperparameters to select a universal architecture for all tasks: The range provided for the learning rate and dropout rate are $[10^{-4}, 10^{-1}]$ and $[0, 0.5]$. The number of layers $L$ and embedding dimension $m$ is selected from $L \in \set{4, 6, 8, 10, 12, 14, 16}$ and $m \in \set{192, 384, 576, 768, 960, 1152, 1344, 1536}$. We 
employed the GLU~\cite{shazeer2020glu} activation as a non-linearity.

\textbf{Model Selection.} 
We implemented our model in JAX~\cite{frostig2018compiling} and used AdamW~\cite{kingma2014adam,loshchilov2017decoupled} as the optimizer. Optimal hyperparameters for each task and model were determined by training on the GraphQA$_{Train}$ dataset and evaluating performance on the GraphQA$_{Dev}$ dataset. The results presented in the paper are based on the held-out GraphQA$_{Test}$ dataset.

\textbf{Hardware Acceleration.} All experiments were conducted using Google's TPUv3 and TPUv5e accelerators~\cite{tpu}.

\subsection{Baseline Results}

To rigorously evaluate the performance of transformers on graph reasoning tasks, we compare them against three established categories of baselines:

\begin{enumerate}[leftmargin=*]
    \item \textbf{Prompting-based methods.} These methods provide the LLM with a textual descriptions of the graph and question within the prompt. We consider the following variations and copy the results from the original papers:
    \begin{itemize}[leftmargin=*]
        \item{\acr{zero-shot}.} In this approach, the model is given a task description and immediately asked to produce the desired output. No additional examples or demonstrations are provided.
        
        \item{\acr{few-shot}.} This approach provides the model with a few examples of the task and their desired outputs~\citep{fewshot}. Unlike traditional training, these examples are included directly in the prompt, allowing the model to learn and adapt during the inference.
        
        \item{\acr{CoT}.} Chain-of-thought~(CoT) prompting~\citep{wei2022chain} provides examples each showing step-by-step reasoning, teaching the LLM to generate its own thought processes for tackling new tasks. 
        
        \item{\acr{zero-cot}.} Zero-shot CoT~\citep{kojima2022large} builds upon Chain-of-Thought (CoT) prompting by eliminating the need for training examples.  The LLM generates its own step-by-step reasoning process using a simple trigger phrase like ``Let's think step by step''.
        
        \item{\acr{cot-bag}.} BAG prompting~\citep{wang2023can} extends \acr{cot} to improve the performance of LLMs on graph-related tasks by appending ``Let's construct a graph with the nodes and edges first'' to the prompt.
        
    \end{itemize}

    \item \textbf{Graph-based methods.} These models are specifically designed to process graphs as input and are trained task-specific. They leverage the connections between nodes to learn patterns and make predictions, making them ideal for tasks where a graph is involved. We use GCN~\cite{kipf2016semi}, MPNN~\cite{gilmer2017neural}, and GIN~\cite{xhlj18} from this category. GraphToken~\cite{pfztkah24} is a GNN-based model that processes the graph and feed the output of the GNN as soft-tokens to an LLM.
    
    \item \textbf{Transformer models~(Ours).} The last class of model are task-specific vanilla transformer models~\cite{vsp17}. The \textit{60M transformer-1K} model is the one described above trained on $1,000$ training examples from the GraphQA training set.
     To investigate the impact of training data scale, we generated a larger dataset containing $100,000$ examples, ensuring the same distribution as the original training set by using the official GraphQA code and trained \textit{60M transformer-100K} on that. The \textit{11B transformer (FT)-1K} is a vanilla transformer model that is started with a pre-trained checkpoint of T5~\cite{raffel2020exploring} and is fine-tuned on the 1K training dataset.
     We also include two fine-tuned PaLM~2~\cite{anil2023palm} transformers of size XXS and XS.
     Similar to prompting baselines, this model receives a textual description of the graph as input to leverage its textual reasoning capabilities.
    
\end{enumerate}

The results for \acr{zero-shot}, \acr{zero-cot}, \acr{few-shot}, \acr{cot}, and \acr{cot-bag} are taken from \citet{fhp23}. Results for \acr{soft-prompt} and GraphToken are sourced from \citet{pfztkah24}. 

We independently evaluated GCN, MPNN, and GIN models on these tasks. We used the original architectures proposed in their respective papers and performed hyperparameter tuning on the GraphQA$_{Dev}$ dataset.

\subsection{Further Experimental results}




    





\begin{table}[t]
\vspace{-2em}
\caption{
Comparison of various methods in different categories on graph reasoning tasks of GraphQA. Here, we categorize the tasks using the taxonomy proposed in~\Cref{sec:hierarchy}.
}\label{table:main_results}
\resizebox{\textwidth}{!}{%
\setlength{\tabcolsep}{3pt}
\begin{tabular}{llcccccccc} 
\toprule
& & \multicolumn{4}{c}{\textbf{Retrieval tasks}} & \multicolumn{2}{c}{\textbf{Parallelizable Tasks}} & \multicolumn{1}{c}{\textbf{Search Tasks}} & \multicolumn{1}{c}{\textbf{Subgraph Counting}}\\
\cmidrule(lr){3-6} \cmidrule(lr){7-8} \cmidrule(lr){9-9} \cmidrule(lr){10-10}
 & \textbf{Method} & \textbf{Node count} & \textbf{Edge count} & \textbf{Edge existence} & \textbf{Node degree} &  \textbf{Connectivity} & \textbf{Cycle check} & \textbf{Shortest path} &  \textbf{Triangle counting}  \\ \midrule
\multirow{5}{*}{\begin{sideways}\scriptsize Prompting \end{sideways}} & \acr{zero-shot}~\cite{fhp23} & 21.7 & 12.4 & 44.5 & 14.0 & 84.9 & 76.0 & 11.5 & 1.5 \\
& \acr{zero-cot}~\cite{fhp23} & 14.6 & 9.4 & 33.5 & 10.4 & 73.5 & 32.3 & 33.6 & 12.7 \\
& \acr{few-shot}~\cite{fhp23} & 25.3 & 12.0 & 36.8 & 17.4 & 79.4 & 37.4 & 22.7 & 3.0 \\
& \acr{cot}~\cite{fhp23} & 27.6 & 12.8 & 42.8 & 29.2 & 45.2 & 58.0 & 38.6 & 8.1 \\
& \acr{cot-bag}~\cite{fhp23} & 26.9 & 12.5 & 37.3 & 28.0 & 45.2 & 52.1 & 40.4 & 8.1 \\
\midrule
\multirow{3}{*}{\begin{sideways}\scriptsize Graph-based\end{sideways}}  & GCN~\cite{kipf2016semi} & 6.4 & 1.2 & 47.0 & 9.8 & 83.8 & 83.2 & 50.2 & 4.0 \\
& MPNN~\cite{gilmer2017neural} & 19.4 & 16.2 & 69.2 & \textbf{99.4} & 94.0 & \textbf{99.0} & 66.8 & 30.6\\
& GIN~\cite{xhlj18} & 71.2 & 4.4 & 71.2 & 36.2 & 93.8 & \underline{98.8} & 54.0 &	30.4\\
& GraphToken~\cite{pfztkah24} & \underline{99.6} & 42.6 & 73.8 & \underline{96.2} & 93.2 & 95.6 & 63.8 & \underline{34.8} \\
\midrule
\multirow{5}{*}{\begin{sideways}\scriptsize Ours \end{sideways}} & 
60M transformer-1K
& \textbf{100.0} & \textbf{100.0} &	67.6 &	31.5 &	92.9 &	97.1 &	57.4 &	33.4 \\
& 
60M transformer-100K
& \textbf{100.0} &	\textbf{100.0} &	96.1 &	91.7 &	\underline{98.0} &	98.0 &	\textbf{97.2} &	\textbf{40.5} \\
& 
XXS transformer (FT)-1K
& \textbf{100.0} & 70.6 & 73.0 & 31.0 & 93.6 & 98.0 & 60.4 & 29.0 \\
& 
XS transformer (FT)-1K
& \textbf{100.0} & \underline{73.2} & \underline{98.6} & 50.6 & 96.6 & 96.8 & 60.0 & 28.6 \\
&
12B transformer (FT)-1K
& \textbf{100.0} & 45.0 & \textbf{100.0} & 68.8 & \textbf{98.4} & 98.0 & \underline{92.8} & 26.0\\
\bottomrule
\end{tabular}
}
\end{table}

\begin{figure*}[t]
\centering
\resizebox{\textwidth}{!}{\begin{tikzpicture}
\begin{groupplot}[group style={
                      group name=myplot,
                      group size= 4 by 2, horizontal sep=0.75cm,
                      vertical sep=1.3cm
                      },
                      height=6cm,
                      width=0.375 \linewidth,ymax=1,title style={at={(0.5,1.0)},anchor=south},ymajorgrids=true,xmin=100,xmax=100000,xmode=log,anchor=south east,every axis x label/.style={at={(axis description cs:0.5,-0.15)},anchor=north}]
\nextgroupplot[
 	title = \textbf{Node count},
 	ylabel=Accuracy,
 	ymin=0,
 	]
\addplot[thick,color=cycle4,mark=*] coordinates {
(100, 1.0)
(200, 1.0)
(500, 1.0)
(1000, 1.0)
(2000, 1.0)
(5000, 1.0)
(10000, 1.0)
(20000, 1.0)
(50000, 1.0)
(100000, 1.0)
};
\addplot[thick,color=cycle1,mark=square*] coordinates {
(100, 1.0)
(200, 1.0)
(500, 1.0)
(1000, 1.0)
(2000, 1.0)
(5000, 1.0)
(10000, 1.0)
(20000, 1.0)
(50000, 1.0)
(100000, 1.0)
};
\nextgroupplot[
 	title = \textbf{Edge count},
 	ymin=0,
 	]
\addplot[thick,color=cycle4,mark=*] coordinates {
(100, 1.0)
(200, 1.0)
(500, 1.0)
(1000, 1.0)
(2000, 1.0)
(5000, 1.0)
(10000, 1.0)
(50000, 1.0)
(100000, 1.0)
};
\addplot[thick,color=cycle1,mark=square*] coordinates {
(100, 1.0)
(200, 1.0)
(500, 1.0)
(1000, 1.0)
(2000, 1.0)
(5000, 1.0)
(10000, 1.0)
(50000, 1.0)
(100000, 1.0)
};
\nextgroupplot[
 	title = \textbf{Edge existence},
 	]
\addplot[thick,color=cycle4,mark=*] coordinates {
(100, 1.0)
(200, 1.0)
(500, 1.0)
(1000, 1.0)
(2000, 1.0)
(5000, 1.0)
(10000, 1.0)
(20000, 1.0)
(50000, 1.0)
(100000, 1.0)
};
\addplot[thick,color=cycle1,mark=square*] coordinates {
(100, 0.5977822542190552)
(200, 0.6683467626571655)
(500, 0.6391128897666931)
(1000, 0.6864919066429138)
(2000, 0.6602822542190552)
(5000, 0.6864919066429138)
(10000, 0.71875)
(20000, 0.9153225421905518)
(50000, 1.0)
(100000, 1.0)
};

\nextgroupplot[
 	title = \textbf{Node degree},
 	legend style={at={(1,1.55)}},
 	legend columns=2,
 	]
\addplot[thick,color=cycle4,mark=*] coordinates {
(100, 1.0)
(200, 1.0)
(500, 1.0)
(1000, 1.0)
(2000, 1.0)
(5000, 1.0)
(10000, 1.0)
(20000, 1.0)
(50000, 0.992943525314331)
(100000, 0.9294354319572449)
};
\addplot[thick,color=cycle1,mark=square*] coordinates {
(100, 0.23286288976669312)
(200, 0.2641128897666931)
(500, 0.3084677457809448)
(1000, 0.3155241906642914)
(2000, 0.3266128897666931)
(5000, 0.3649193346500397)
(10000, 0.3790322542190552)
(20000, 0.8296370506286621)
(50000, 0.9959676861763)
(100000, 0.9173386693000793)
};
\legend{Train, Test}
\nextgroupplot[
 	title = \textbf{Connectivity},
 	ylabel=Accuracy,
 	xlabel={Training Samples},
 	]
\addplot[thick,color=cycle4,mark=*] coordinates {
(100, 1.0)
(200, 1.0)
(500, 1.0)
(1000, 1.0)
(2000, 1.0)
(5000, 1.0)
(10000, 1.0)
(20000, 1.0)
(50000, 1.0)
(100000, 0.9868951439857483)
};
\addplot[thick,color=cycle1,mark=square*] coordinates {
(100, 0.05342741683125496)
(200, 0.05846773833036423)
(500, 0.09375)
(1000, 0.10080644488334656)
(2000, 0.13508063554763794)
(5000, 0.17439515888690948)
(10000, 0.20866934955120087)
(20000, 0.6340725421905518)
(50000, 0.9979838132858276)
(100000, 0.9818547964096069)
};

\nextgroupplot[
 	title = \textbf{Cycle check},
 	xlabel={Training Samples},
 	ytick={0.9,1},
 	ymin=0.9,
 	]
\addplot[thick,color=cycle4,mark=*] coordinates {
(100, 1.0)
(200, 1.0)
(500, 1.0)
(1000, 1.0)
(2000, 1.0)
(5000, 1.0)
(10000, 1.0)
(20000, 1.0)
(50000, 1.0)
(100000, 1.0)
};
\addplot[thick,color=cycle1,mark=square*] coordinates {
(100, 0.9485886693000793)
(200, 0.9354838132858276)
(500, 0.9495967626571655)
(1000, 0.9657257795333862)
(2000, 0.9818547964096069)
(5000, 0.9677419066429138)
(10000, 0.9808467626571655)
(20000, 0.9798386693000793)
(50000, 0.9798386693000793)
(100000, 0.980)
};

\nextgroupplot[
 	title = \textbf{Shortest path},
 	xlabel={Training Samples},
 	ymin=0.5,
 	]
\addplot[thick,color=cycle4,mark=*] coordinates {
(100, 1.0)
(200, 1.0)
(500, 1.0)
(1000, 1.0)
(2000, 1.0)
(5000, 1.0)
(10000, 1.0)
(20000, 1.0)
(100000, 0.9969757795333862)
};
\addplot[thick,color=cycle1,mark=square*] coordinates {
(100, 0.5)
(200, 0.5514112710952759)
(500, 0.5453628897666931)
(1000, 0.5745967626571655)
(2000, 0.5393145084381104)
(5000, 0.5987902879714966)
(10000, 0.6320564150810242)
(20000, 0.6421370506286621)
(100000, 0.971774160861969)
};
\nextgroupplot[
 	title = \textbf{Triangle counting},
 	xlabel={Training Samples},
 	]
\addplot[thick,color=cycle4,mark=*] coordinates {
(100, 1.0)
(200, 1.0)
(500, 1.0)
(1000, 1.0)
(2000, 1.0)
(5000, 1.0)
(10000, 1.0)
(20000, 0.9989919066429138)
(50000, 0.9969757795333862)
};
\addplot[thick,color=cycle1,mark=square*] coordinates {
(100, 0.2046370953321457)
(200, 0.21875)
(500, 0.23891128599643707)
(1000, 0.2792338728904724)
(2000, 0.3004032075405121)
(5000, 0.3326612710952759)
(10000, 0.3397177457809448)
(20000, 0.35080644488334656)
(50000, 0.38306450843811035)
(100000, 0.394)
};
\end{groupplot}
\end{tikzpicture}}
\caption{Comparison of train and test scaling on all tasks.}\label{fig:appendix:scaling_multifig}
\end{figure*}
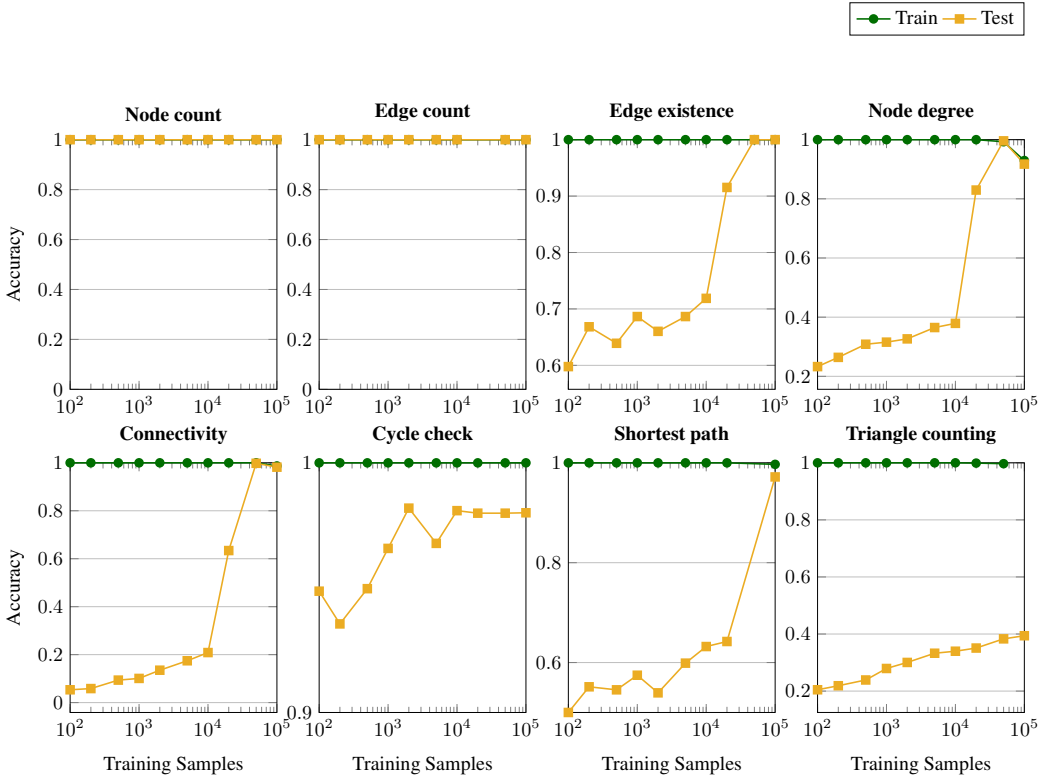



\Cref{table:main_results} presents a comprehensive comparison of the graph reasoning capabilities across various baseline models and our proposed transformer architectures. The results highlight several key findings, which we summarize below:

\textbf{Transformers Exhibit Strong Performance on Graph-based Reasoning Problems.}
While transformers are not explicitly designed for graph reasoning tasks like graph-based models, they demonstrate surprisingly strong performance in this domain. The results of this study indicate that transformers, despite their versatility as a general architecture, can often match or even surpass specialized graph models on a variety of graph reasoning benchmarks.

\textbf{Transformers Excel at Retrieval Tasks.} As proved in~\Cref{thm:pos-1L}, retrieval tasks can be solved by transformers. The obtained results confirm that such tasks are relatively easy for transformers as they obtained the full accuracy on most of such tasks. One exception here is the node degree task that GNNs outperform transformers but still transformers perform relatively well. We discuss why GNNs outperform transformers well for this task.

\textbf{Larger Transformers Excel at Solving Search Tasks.} As discussed in~\Cref{thm:search-pos}, transformers are effective for search tasks, albeit requiring a larger number of parameters compared to retrieval tasks. This is empirically evident in the comparison between Transformer-1K and Transformer-1K~(pretrained). It's worth noting that the pretrained transformer used here has 11 billion parameters, a significant increase from the 1 million parameters in Transformer-1K.

\textbf{Transformers Excel at Capturing Global Patterns.}
An interesting observation here is the performance gap between Transformers and GNNs across tasks with varying emphasis on local versus global graph structure. Notably:
\begin{enumerate}[leftmargin=*]
    \item \textbf{Local Structure:} The node degree task, which relies heavily on local node information, is best handled by MPNN, a GNN-based model.
    \item \textbf{Global Structure:} In contrast, tasks like connectivity, triangle counting and shortest path, which require understanding global graph patterns, are dominated by Transformer models. 
    Notably, vanilla Transformers achieve a remarkable 45\% relative improvement over even much larger LLMs augmented with GNN-generated soft prompts (GraphToken) on the shortest path task. This showcases the exceptional ability of Transformers to capture long-range dependencies, a critical factor in understanding global graph structure.
\end{enumerate}

\subsubsection{Sample complexity ablations}\label{asssec:sc}

To develop a more comprehensive understanding of graph reasoning tasks learnability by small transformers, we train a variety of transformers for 1,000,000 steps on each task on a range of sample sizes.
In \Cref{fig:appendix:scaling_multifig}, we demonstrate how the model performance improves as a function of sample complexity.
By doing so, we witness the relative hardness of each task in terms of the marginal benefit of new samples.
\begin{itemize}
    \item Edge count and node count are ``easy'' retrieval tasks that can be solved perfectly with as few as 100 samples. 
    \item Edge existence attains near-perfect train and test classification accuracy as the number of training samples approaches 100,000.
    \item Connectivity, shortest path, 
    and node degree demonstrate a sharp improvement in evaluation error as a function of the sample size. These models perfectly fit the training set in most sample size regimes, but yield a closer correspondence between training and testing error when trained on 100,000 samples.
    \item Cycle check and triangle count have persistent gaps between training and testing error and overfit even in the large sample setting.
\end{itemize}

\end{document}